\let\orgdescriptionlabel\descriptionlabel
\renewcommand*{\descriptionlabel}[1]{%
  \let\orglabel\label
  \let\label\@gobble
  \phantomsection
  \protected@edef\@currentlabel{#1}%
  \let\label\orglabel
  \orgdescriptionlabel{#1}%
}
\newtheorem{theorem}{Theorem}[section]
\newtheorem{assumption}[theorem]{Assumption}
\newtheorem{lemma}[theorem]{Lemma}
\newtheorem{remark}[theorem]{Remark}
\newtheorem{definition}[theorem]{Definition}
\newtheorem{corollary}[theorem]{Corollary}
\newcommand{\R}{\mathbb{R}}
\newcommand*{\size}[1]{\left|#1\right|}
\newcommand{\E}[2][]{\mathbb{E}_{#1}\left[#2\right]}
\newcommand{\norm}[1]{\left\Vert #1 \right\Vert}
\newcommand{\TD}{\operatorname{TD}}
\newcommand{\Q}{\operatorname{Q}}
\newcommand{\Sarsa}{\operatorname{SARSA}}
\newcommand{\LSTD}{\operatorname{LSTD}}
\newcommand{\LSPE}{\operatorname{LSPE}}
\title{Reinforcement Learning under Model Mismatch}
\author[1]{Aurko Roy}
\affil[1]{Google \thanks{Work done while at Georgia Tech}
  \textit{Email:}~\texttt{aurkor@google.com}}
 \author[2]{Huan Xu}
 \affil[2]{ISyE, Georgia Institute of Technology,
 Atlanta, GA,
 USA.
 \textit{Email:}~\texttt{huan.xu@isye.gatech.edu}
 } 
\author[2]{Sebastian Pokutta}
 \affil[2]{ISyE, Georgia Institute of Technology,
 Atlanta, GA,
 USA.
 \textit{Email:}~\texttt{sebastian.pokutta@isye.gatech.edu}
}
\begin{document}

\maketitle
\begin{abstract}
  We study reinforcement learning under \emph{model misspecification},
  where we do not have access to the true environment but only to a
  reasonably close approximation to it. We address this problem by
  extending the framework of robust MDPs of \cite{bagnell2001solving,
    nilim2003robustness, iyengar2005robust} to the \emph{model-free}
  Reinforcement Learning setting, where we do not have access to the
  model parameters, but can only sample states from it.  We define
  \emph{robust versions} of \(\Q\)-learning, \(\Sarsa\), and
  \(\TD\)-learning and prove convergence to an approximately optimal
  robust policy and approximate value function respectively.  We scale
  up the robust algorithms to large MDPs via function approximation
  and prove convergence under two different settings. We prove
  convergence of robust approximate policy iteration and robust
  approximate value iteration for linear architectures (under mild
  assumptions). 
We also define a
  robust loss function, the \emph{mean squared robust projected
    Bellman error} and give stochastic gradient descent algorithms
  that are guaranteed to converge to a local minimum.
\end{abstract}

\section{Introduction}
Reinforcement learning is concerned with learning a good policy 
for sequential decision making problems modeled as a Markov 
Decision Process (MDP), via interacting with the environment
\cite{sutton1998reinforcement, puterman2014markov}.
In this work we address the problem of reinforcement learning
from a \emph{misspecified model}. As a motivating example, consider
the scenario where the problem of interest is not directly accessible, but
instead the agent can interact with a simulator whose dynamics is 
reasonably close to the true problem. Another plausible application is 
when the parameters of the model may evolve over time
but can still be reasonably approximated by an MDP.

To address this problem we use the framework of \emph{robust MDPs}
which was proposed by \cite{bagnell2001solving,
nilim2003robustness, iyengar2005robust}
to solve the planning problem 
under model misspecification. 
The robust MDP framework considers a class of models
and finds the robust optimal policy which is a policy 
that performs best under 
the worst model. 
It was shown by \cite{bagnell2001solving,
nilim2003robustness, iyengar2005robust} that the robust
optimal policy satisfies the \emph{robust Bellman
equation} which naturally leads to exact dynamic 
programming algorithms to find an optimal policy.
However, this approach is model dependent and 
does not immediately generalize to the model-free 
case where the parameters of the model are unknown.

Essentially, reinforcement learning is a \emph{model-free} 
framework to solve 
the Bellman equation using samples. Therefore, to learn
policies from misspecified models, we develop sample based
methods to solve the \emph{robust} Bellman equation. In particular, 
we develop robust versions of 
classical reinforcement learning algorithms such as 
\(\Q\)-learning, \(\Sarsa\), and \(\TD\)-learning and prove 
convergence to an approximately optimal policy 
under mild assumptions on the discount factor. 
We also show that the nominal versions of these iterative algorithms
converge to policies that may be arbitrarily worse compared to the
optimal policy. 

We also scale up these robust algorithms 
to large scale MDPs via function approximation, 
where we prove convergence under two different settings. 
Under a technical assumption similar to 
\cite{bertsekas2009projected,tamar2014scaling}
we show convergence of robust approximate policy iteration 
and value iteration algorithms for linear architectures.
We also study function approximation with nonlinear
architectures, by defining an appropriate 
\emph{mean squared robust projected Bellman error} (MSRPBE) loss
function, which is a generalization of the 
mean squared projected Bellman error (MSPBE) loss function of 
\cite{sutton2009convergent, sutton2009fast, bhatnagar2009convergent}.
We propose robust versions of stochastic gradient descent algorithms
as in \cite{sutton2009convergent, sutton2009fast, bhatnagar2009convergent}
and prove convergence to a local minimum under some assumptions 
for function approximation with arbitrary smooth functions.

\paragraph{Contribution.} In summary we have the following contributions:
\begin{enumerate}
\item We extend the robust MDP framework of \cite{bagnell2001solving,
    nilim2003robustness, iyengar2005robust} to the \emph{model-free}
  reinforcement learning setting. We then define robust versions of
  \(\Q\)-learning, \(\Sarsa\), and \(\TD\)-learning and prove
  convergence to an approximately optimal robust policy.

\item We also provide robust reinforcement learning algorithms for the
  function approximation case and prove convergence of robust
  approximate policy iteration and value iteration algorithms for
  linear architectures. We also define the MSRPBE loss function which
  contains the robust optimal policy as a local minimum and we derive
  stochastic gradient descent algorithms to minimize this loss
  function as well as establish convergence to a local minimum in the
  case of function approximation by arbitrary smooth functions.

\item Finally, we demonstrate empirically the improvement in
  performance for the robust algorithms compared to their nominal
  counterparts. For this we used various Reinforcement Learning test
  environments from OpenAI \cite{brockman2016openai} as benchmark to
  assess the improvement in performance as well as to ensure reproducibility and
  consistency of our results.
\end{enumerate}

\paragraph{Related Work.}
Recently, several approaches have been proposed to address
model performance due to parameter uncertainty for 
Markov Decision Processes (MDPs). A Bayesian approach
was proposed by \cite{shapiro2002minimax} which requires perfect 
knowledge of the prior distribution on transition matrices.
Other probabilistic and 
risk based settings were studied by 
\cite{delage2010percentile, wiesemann2013robust, tamar2014optimizing}
which propose various mechanisms to incorporate percentile risk
into the model.
A framework for robust MDPs was first proposed by
\cite{bagnell2001solving, nilim2003robustness, 
iyengar2005robust} who consider the transition matrices to lie
in some \emph{uncertainty set} and proposed a dynamic programming algorithm
to solve the robust MDP. Recent work by \cite{tamar2014scaling} extended 
the robust MDP framework to the function approximation setting
where under a technical assumption the authors prove convergence
to an optimal policy for linear architectures. 
Note that these algorithms for robust MDPs do not
readily generalize to the \emph{model-free}
reinforcement learning setting 
where the parameters of the environment are not 
explicitly known.

For reinforcement learning in the non-robust \emph{model-free}
setting, several iterative algorithms such as \(\Q\)-learning,
\(\TD\)-learning, and \(\Sarsa\) are known to converge to an optimal
policy under mild assumptions, see \cite{bertsekas1995neuro} for a
survey.  Robustness in reinforcement learning for MDPs was studied by
\cite{morimoto2005robust} who introduced a robust learning framework
for learning with disturbances. Similarly, \cite{pinto2017robust}
also studied learning in the presence of an adversary who might apply
disturbances to the system.  However, for the algorithms proposed in
\cite{morimoto2005robust,pinto2017robust} no theoretical guarantees
are known and there is only limited empirical evidence. Another recent
work on robust reinforcement learning is \cite{lim2013reinforcement},
where the authors propose an online algorithm with certain transitions
being stochastic and the others being adversarial and the devised
algorithm ensures low regret.

For the case of reinforcement learning with large MDPs using function
approximations, theoretical guarantees for most \(\TD\)-learning based
algorithms are only known for linear architectures
\cite{bertsekas2011approximate}. Recent work by
\cite{bhatnagar2009convergent} extended the results of
\cite{sutton2009convergent, sutton2009fast} and proved that a
stochastic gradient descent algorithm minimizing the \emph{mean
  squared projected Bellman equation} (MSPBE) loss function converges
to a local minimum, even for nonlinear architectures. However, these
algorithms do not apply to robust MDPs; in this work we extend
these algorithms to the robust setting.

\section{Preliminaries}\label{sec:prelim} 
We consider an infinite horizon Markov Decision Process (MDP) \cite{puterman2014markov}
with finite state space \(\mathcal{X}\) of size \(n\) and finite action space 
\(\mathcal{A}\) of size \(m\). At every time step \(t\) the agent is in 
a state \(i \in \mathcal{X}\)  
and can choose an action \(a \in \mathcal{A}\) incurring a cost \(c_t(i, a)\).
We will make the standard assumption that future cost is discounted, see e.g., 
\cite{sutton1998reinforcement}, 
with a discount factor \(\vartheta < 1\) applied 
to future costs, i.e.,
\(
 c_t(i, a) \coloneqq \vartheta^t c(i, a),
\)
where  \(c(i, a)\) is a fixed constant independent
of the time step \(t\) for \(i \in \mathcal{X}\) and 
\(a \in \mathcal{A}\).
The states transition according to probability transition
matrices \(\tau \coloneqq \left\{P^a\right\}_{a \in \mathcal{A}}\) which depends only
on their last taken action \(a\). A \emph{policy of the agent} 
is a sequence \(\pi = \left(\mathbf{a_0}, \mathbf{a_1}, \dots \right)\), 
where every \(\mathbf{a_t}(i)\) corresponds to an action in \(\mathcal{A}\) if the system is in 
state \(i\) at time \(t\). For every policy \(\pi\), we have 
a corresponding value function \(v_\pi \in \R^n\), where \(v_\pi(i)\) for a state 
\(i \in \mathcal{X}\) measures the expected cost of that state if the agent were to follow 
policy \(\pi\). This can be expressed by the following recurrence relation 
\begin{align}
 v_\pi(i) \coloneqq c(i, \mathbf{a}_0(i)) + \vartheta \E[j \sim \mathcal{X}]{v_\pi(j)}. 
\end{align}
The goal is to devise algorithms 
to learn an optimal policy \(\pi^*\) that minimizes the expected total cost:

\begin{definition}[Optimal policy]\label{def:optimal-policy}
Given an MDP with state space \(\mathcal{X}\), action space \(\mathcal{A}\) and transition
matrices \(P^a\), let \(\Pi\) be the strategy space of all possibile policies.
Then an optimal policy \(\pi^*\) is one that minimizes the expected total cost,
i.e.,
\(
    \pi^* \coloneqq \arg\min_{\pi \in \Pi} 
    \E{\sum_{t=0}^\infty \vartheta^t c(i_t, \mathbf{a_t}(i_t))}.
\)
\end{definition}
In the robust case we will assume as in \cite{nilim2003robustness, iyengar2005robust}
that the transition matrices \(P^a\) are not fixed and may come from some uncertainty 
region \(\mathcal{P}^a\) and may be chosen adversarially by nature in future 
runs of the model. In this setting, \cite{nilim2003robustness, iyengar2005robust}
prove the following \emph{robust} analogue of the \emph{Bellman recursion}.
A \emph{policy of nature} is a sequence \(\tau \coloneqq \left(\mathbf{P_0}, \mathbf{P_1},
\dots \right)\) where every \(P_t(a) \in \mathcal{P}^a\) 
corresponds to a transition probability matrix chosen from \(\mathcal{P}^a\).
Let \(\mathcal{T}\) denote the set of all such policies of nature.
In other words, a policy \(\tau \in \mathcal{T}\) of nature is a sequence of 
transition matrices that may be played by it in response to the actions of the agent.
For any set \(P \subseteq \R^n\) and vector \(v \in \R^n\), let \(\sigma_P(v) \coloneqq
\sup \left\{ p^\top v \mid p \in P\right\}\) be the \emph{support function}
of the set \(P\). For a state \(i \in \mathcal{X}\), let 
\(\mathcal{P}^a_i\) be the projection onto the \(i^{th}\) row of \(\mathcal{P}^a\).

\begin{theorem}\cite{nilim2003robustness}\label{thm:opt-robust}
We have the following perfect duality relation
\begin{align}
 \min_{\pi \in \Pi} \max_{\tau \in \mathcal{T}} \E[\tau]{\sum_{t=0}^\infty 
 \vartheta^t c\left(i_t, \mathbf{a_t}(i_t)\right)}
 = \max_{\tau \in \mathcal{T}} \min_{\pi \in \Pi} \E[\tau]{\sum_{t=0}^\infty 
 \vartheta^t c\left(i_t, \mathbf{a_t}(i_t)\right)}. 
\end{align}
The optimal value function \(v_{\pi^*}\) corresponding to the optimal policy 
\(\pi^*\) satisfies 
\begin{align}
 v_{\pi^*}(i) = \min_{a \in \mathcal{A}} 
 \left( c(i, a) + \vartheta \sigma_{\mathcal{P}^a_i}(v_{\pi^*})\right),
 \end{align}
 and \(\pi^*\) can then be obtained in a greedy fashion, i.e., \(
 \mathbf{a}^*(i) \in \arg\min_{a \in \mathcal{A}} \left\{c(i, a) + \vartheta 
 \sigma_{\mathcal{P}^a_i}(v)\right\}.
\)
\end{theorem}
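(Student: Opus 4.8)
The plan is to reduce the infinite-horizon robust control problem to the study of a single operator on $\R^n$ and to invoke the Banach fixed point theorem. Define the \emph{robust Bellman operator} $L\colon \R^n \to \R^n$ by
\begin{align}
(Lv)(i) \coloneqq \min_{a \in \mathcal{A}} \left( c(i,a) + \vartheta\, \sigma_{\mathcal{P}^a_i}(v) \right),
\end{align}
so that the claimed identity for $v_{\pi^*}$ is exactly the statement that $v_{\pi^*}$ is a fixed point of $L$. The first half of the argument establishes that $L$ is a $\vartheta$-contraction in the sup-norm and hence has a unique fixed point $v^*$; the second half identifies $v^*$ with the common value of the minimax and maximin problems, which simultaneously yields perfect duality and the fixed-point equation.

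For the contraction, the key estimate is that each support function $\sigma_{\mathcal{P}^a_i}$ is nonexpansive in the sup-norm. Since every $p \in \mathcal{P}^a_i$ is a probability vector, for any $v, w \in \R^n$ and any maximizing $p$ for $v$ we have $p^\top v - p^\top w \le \norm{v-w}_\infty$, and taking suprema symmetrically gives $\size{\sigma_{\mathcal{P}^a_i}(v) - \sigma_{\mathcal{P}^a_i}(w)} \le \norm{v-w}_\infty$. Combining this with the elementary bound $\size{\min_a f(a) - \min_a g(a)} \le \max_a \size{f(a) - g(a)}$ yields $\norm{Lv - Lw}_\infty \le \vartheta \norm{v-w}_\infty$. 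Because $\vartheta < 1$, Banach's theorem gives a unique fixed point $v^*$, and value iteration $L^k v \to v^*$ converges geometrically from any initialization.

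The main obstacle is the perfect duality, i.e.\ the interchange of $\min_{\pi}$ and $\max_{\tau}$. Here I would lean on the \emph{rectangularity} of the uncertainty set: $\mathcal{T}$ is a product of the per-state, per-action sets $\mathcal{P}^a_i$, so nature's choices across states, actions, and time steps are unconstrained relative to one another. I would first treat the finite-horizon truncation $v^*_N$, argued by backward induction. At a single stage, with value-to-go $\hat v$ fixed, the stage game has the agent minimizing over the finite set $\mathcal{A}$ (equivalently over distributions on $\mathcal{A}$) and nature maximizing $p^\top \hat v$ over the convex compact set $\mathcal{P}^a_i$; the payoff is bilinear, so Sion's minimax theorem furnishes a saddle point and legitimizes interchanging $\min$ and $\max$ at that stage. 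Rectangularity lets these per-stage saddle points be assembled into globally optimal stationary policies for both players, so the finite-horizon game value satisfies exactly the recursion $v^*_N = L v^*_{N-1}$ and both of its values coincide. Letting $N \to \infty$ and bounding the discarded tail by $\vartheta^N/(1-\vartheta)$ times a uniform bound on the costs, the finite-horizon values converge to $v^*$; the common limit establishes perfect duality, and passing to the limit in the recursion shows $v^* = L v^*$, hence $v^* = v_{\pi^*}$.

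Finally, optimality of the greedy policy is read off the fixed-point equation: defining $\mathbf{a}^*(i) \in \arg\min_{a} \{ c(i,a) + \vartheta\, \sigma_{\mathcal{P}^a_i}(v^*) \}$ and letting $L_{\pi^*}$ be the operator restricted to this action choice, one checks $L_{\pi^*} v^* = L v^* = v^*$; since $L_{\pi^*}$ is itself a $\vartheta$-contraction whose unique fixed point is the robust value of $\pi^*$, that value equals $v^*$, so $\pi^*$ attains the minimax. The delicate point throughout is justifying that nature's adversarial commitment does not depend on the agent's future actions, which is precisely what rectangularity buys us; without it the single-stage saddle points would not glue together into a policy-level minimax equality.
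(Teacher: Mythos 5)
The paper contains no proof of this theorem to compare against: it is imported verbatim from \cite{nilim2003robustness}, which is why the statement carries the citation. Your proposal reconstructs essentially the standard argument of that cited work (and of Iyengar's paper): sup-norm contraction of the robust Bellman operator via nonexpansiveness of support functions over sets of probability vectors, a rectangularity-plus-stagewise-minimax (Sion) argument assembled by backward induction over a finite horizon to obtain perfect duality, a discounted tail bound to pass to the infinite horizon, and greedy-policy extraction from the fixed point. The sketch is sound and correctly isolates rectangularity as the ingredient that makes the minimax interchange legitimate, so nothing further is needed.
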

The main shortcoming of this approach is that it does not generalize to the \emph{model
free} case where the transition probabilities are not explicitly known but rather
the agent can only sample states according to these probabilities.
In the absence of this knowledge, we cannot compute the support functions of 
the uncertainty sets \(\mathcal{P}^a_i\).
On the other hand
it is often easy to have a \emph{confidence region} \(U^a_i\), e.g.,
a ball or an ellipsoid, corresponding to every 
state-action pair \(i \in \mathcal{X}, a \in \mathcal{A}\)
that quantifies our uncertainty in the simulation, with 
the uncertainty set \(\mathcal{P}^a_i\) being the confidence region 
\(U^a_i\) centered around 
the unknown simulator probabilities. Formally, we define the uncertainty 
sets corresponding to every state action pair in the following fashion.
\begin{definition}[Uncertainty sets]\label{def:uncertainty-set}
 Corresponding to every state-action pair \((i, a)\) we have a \emph{confidence region}
 \(U^a_i\) so that the uncertainty region \(\mathcal{P}^a_i\) 
 of the probability transition matrix 
 corresponding to \((i, a)\) is defined as
 \begin{align}
  \mathcal{P}^a_i \coloneqq \left\{ x + p^a_i \mid x \in U^a_i \right\},
 \end{align}
 where \(p^a_i\) is the \emph{unknown} state transition probability vector from the state 
 \(i \in \mathcal{X}\) to every other state in \(\mathcal{X}\)
 given action \(a\) during the simulation.
\end{definition}
As a simple example, we have the 
ellipsoid \(
U^a_i \coloneqq \left\{x \mid x^\top A^a_i x \le 1, \sum_{i \in \mathcal{X}} x_i = 0\right\}
\)
for some \(n \times n\) psd matrix \(A^a_i\) with the uncertainty set 
\(\mathcal{P}^a_i\) being
\(
 \mathcal{P}^a_i \coloneqq \left\{x + p^a_i \mid x \in U^a_i\right\},
\)
where \(p^a_i\) is the \emph{unknown} simulator state transition probability vector
with which the agent transitioned to a new state during training.
Note that while it may easy to come up with good descriptions of the confidence
region \(U^a_i\), the approach
of \cite{nilim2003robustness, iyengar2005robust} 
breaks down since we have no knowledge of \(p^a_i\) 
and merely observe the new state \(j\) sampled from this distribution. 
See Figure~\ref{fig:tikz:my} for an illustration with the 
confidence regions being an \(\ell_2\) ball of fixed radius \(r\).

In the following sections we develop 
\emph{robust versions} of \(\Q\)-learning, \(\Sarsa\), and  
\(\TD\)-learning which are guaranteed to converge to an approximately 
optimal policy that is robust with respect to this confidence region. 
The robust versions of these iterative algorithms involve an additional 
linear optimization step over the set \(U^a_i\), which in the case of 
\(U^a_i = \left\{\norm{x}_2 \le r\right\}\) simply corresponds to adding fixed noise during
every update. In later sections we will extend it to the function approximation
case where we study linear architectures as well as nonlinear architectures;
in the latter case we derive new stochastic gradient descent algorithms 
for computing approximately robust policies.

\begin{figure}
\begin{center}
  \includegraphics{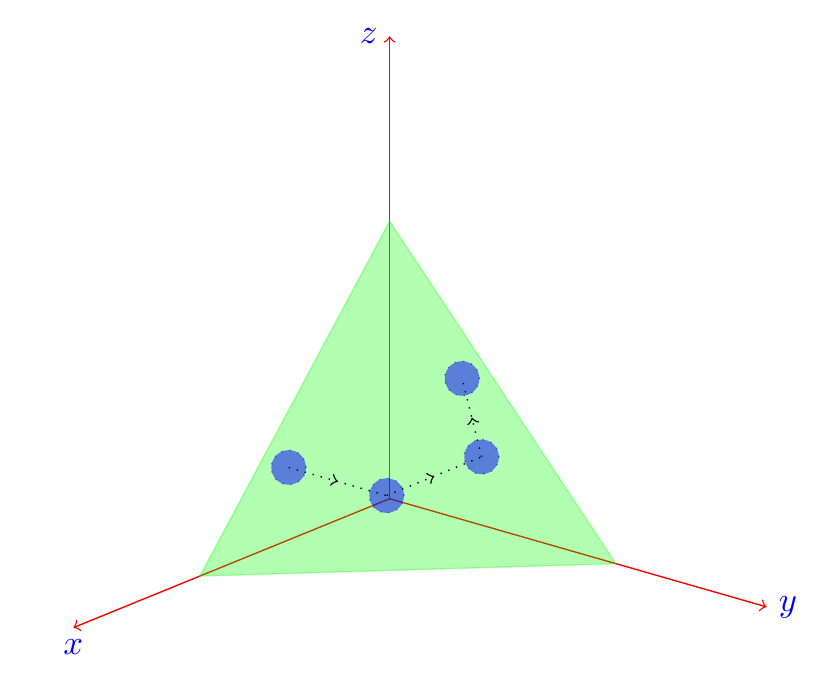}
  \caption{Example transition matrices shown within the probability simplex \(\Delta_n\)
  with uncertainty sets being \(\ell_2\) balls of fixed radius.}
  \label{fig:tikz:my}
  \end{center}
\end{figure}

\section{Robust exact dynamic programming algorithms}\label{sec:exact}
In this section we develop robust versions of exact dynamic programming
algorithms such as \(\Q\)-learning, \(\Sarsa\), and \(\TD\)-learning. These 
methods are suitable for small MDPs where the size \(n\) 
of the state space is not too large.
Note that confidence region \(U^a_i\) must also be constrained to lie within the 
probability simplex \(\Delta_n\), see Figure~\ref{fig:tikz:my}. 
However since we do not have knowledge of the simulator probabilities
\(p^a_i\), we do not know how far away \(p^a_i\) is from the boundary of 
\(\Delta_n\) and so the algorithms will make use of a 
proxy confidence region \(\widehat{U^a_i}\) where we drop the requirement of 
\(\widehat{U^a_i} \subseteq \Delta_n\), to compute the robust optimal policies.
With a suitable choice of step lengths and discount factors we can prove 
convergence to an approximately optimal \(U^a_i\)-robust policy where the 
approximation depends on the difference between the unconstrained proxy
region \(\widehat{U^a_i}\) and the true confidence region \(U^a_i\).
Below we give specific examples of possible choices for simple 
confidence regions.
\begin{enumerate}
 \item \textbf{Ellipsoid:} Let \(\{A^a_i\}_{i, a}\) be a sequence of 
 \(n \times n\) psd matrices. Then we can define 
 the confidence region as
\begin{align}
U^a_i \coloneqq \left\{ x\middle| x^\top A^a_i x \le 1, \sum_{i \in \mathcal{X}} x_i = 0,
 - p^a_{ij} \le x_j \le 1 - p^a_{ij}, \forall j \in \mathcal{X}\right\}.
\end{align}
Note that \(U^a_i\) has some additional linear constraints 
so that the uncertainty set \(\mathcal{P}^a_i \coloneqq \left\{p^a_i + x \mid x \in U^a_i\right\}\)
lies inside \(\Delta_n\). Since we do not know \(p^a_i\), we will make use of the 
proxy confidence region \(\widehat{U^a_i}
\coloneqq \{x \mid x^\top A^a_i x \le 1, \sum_{i \in \mathcal{X}} x_i = 0\}\). 
In particular when \(A^a_i = r^{-1}I_n\) for every 
\(i \in \mathcal{X}, a \in \mathcal{A}\) then this corresponds 
to a spherical confidence interval of \([-r, r]\) in every direction.
In other words, each uncertainty set \(\mathcal{P}^a_i\) is an \(\ell_2\) ball 
of radius \(r\).
\item \textbf{Parallelepiped:} Let \(\{B^a_i\}_{i, a}\)
be a sequence of \(n \times n\) invertible matrices. 
Then we can define the confidence region as
\begin{align}
U^a_i \coloneqq \left\{ x\middle| \norm{B^a_i x}_1 \le 1, \sum_{i \in \mathcal{X}} x_i = 0,
 - p^a_{ij} \le x_j \le 1 - p^a_{ij}, \forall j \in \mathcal{X}\right\}.
\end{align}
As before, we will use the unconstrained parallelepiped
\(\widehat{U^a_i}\) without the \(- p^a_{ij} \le x_j \le 1 - p^a_{ij}\) constraints,
as a proxy for \(U^a_i\) 
since we do not have knowledge \(p^a_i\). In particular if 
\(B^a_i = D\) for a diagonal matrix \(D\), then 
the proxy confidence region \(\widehat{U^a_i}\) corresponds to 
a rectangle. In particular if every diagonal entry is \(r\),  
then every uncertainty set \(\mathcal{P}^a_i\) is an \(\ell_1\)
ball of radius \(r\).
\end{enumerate}
\subsection{Robust \(\Q\)-learning}\label{sec:q-learning}
Let us recall the notion of a \(\Q\)-factor of a state-action pair 
\((i, a)\) and a policy \(\pi\) which in the non-robust setting is defined as 
\begin{align}
 \Q(i, a) \coloneqq c(i, a) + \E[j \sim \mathcal{X}]{v(j)},
\end{align}
where \(v\) is the value function of the policy \(\pi\). In other words,
the \(\Q\)-factor represents the expected cost if we start at state 
\(i\), use the action \(a\) and follow the policy \(\pi\) subsequently.
One may similarly define the \emph{robust} \(\Q\)-factors using a 
similar interpretation and the minimax characterization of 
Theorem~\ref{thm:opt-robust}. 
Let \(\Q^*\) denote the \(\Q\)-factors of the optimal robust policy
and let \(v^* \in \R^n\) be its value function. Note that we may write
the value function in terms of the \(\Q\)-factors as  
\(v^* = \min_{a \in \mathcal{A}} \Q^*(i, a)\).
From Theorem~\ref{thm:opt-robust} we have the following expression for 
\(\Q^*\):
\begin{align}\label{optimal-robust-q-iter}
 \Q^*(i, a) &= c(i, a) + \vartheta\sigma_{\mathcal{P}^a_i}(v^*) \\
 &= c(i, a) + \vartheta \sigma_{U^a_i}(v^*) + \vartheta \sum_{j \in \mathcal{X}} p^a_{ij} 
 \min_{a' \in \mathcal{A}} \Q^*(j, a')\label{eq:unrolling},
\end{align}
where equation~\eqref{eq:unrolling} follows from Definition~\ref{def:uncertainty-set}.
For an estimate \(\Q_t\) of
\(\Q^*\), let \(v_t \in \R^n\) be its value vector, i.e.,
\(v_t(i) \coloneqq \min_{a \in \mathcal{A}} \Q_t(i, a)\). 
The \emph{robust \(\Q\)-iteration} is defined as:
\begin{align}\label{robust-q-iter}
\Q_t(i, a) \coloneqq (1 - \gamma_t) \Q_{t-1}(i, a) + \gamma_t \left(c(i, a) + 
 \vartheta \sigma_{\widehat{U^a_i}}\left(v_{t-1}\right) + \vartheta \min_{a' \in \mathcal{A}} 
 \Q_{t - 1}(j, a') \right),
\end{align}
where a state \(j \in \mathcal{X}\) is sampled with the unknown transition 
probability \(p^a_{ij}\) using the simulator. Note that the robust \(\Q\)-iteration of 
equation~\eqref{robust-q-iter} involves an additional linear optimization 
step to compute the support function \(\sigma_{\widehat{U^a_i}}(v_t)\)
of \(v_t\) over the proxy confidence region \(\widehat{U^a_i}\).
We will prove that iterating equation~\eqref{robust-q-iter} converges to
an approximately optimal policy.
The following definition introduces the notion of an
\(\varepsilon\)-optimal policy, 
see e.g., \cite{bertsekas1995neuro}. The error factor \(\varepsilon\) is also 
referred to as the \emph{amplification factor}. We will treat the \(\Q\)-factors
as a \(\size{\mathcal{X}} \times \size{\mathcal{A}}\) matrix in the definition so that its 
\(\ell_\infty\) norm is defined as usual.
\begin{definition}[\(\varepsilon\)-optimal policy]\label{def:suboptimal}
 A policy \(\pi\) with \(\Q\)-factors \(\Q'\) is \(\varepsilon\)-optimal 
 with respect to the optimal policy \(\pi^*\) with corresponding  
 \(\Q\)-factors \(\Q^*\) if 
 \begin{align}
 \norm{\Q' - \Q^*}_\infty \le \varepsilon \norm{\Q^*}_\infty.
 \end{align}
\end{definition}
The following simple lemma allows us to decompose the 
optimization of a linear function over the proxy uncertainty set
\(\widehat{\mathcal{P}^a_i}\) in terms of linear optimization over 
\(\mathcal{P}^a_i, U^a_i\), and  \(\widehat{U^a_i}\).
\begin{lemma}\label{lem:over-estimate}
 Let \(v \in \R^n\) be any vector and let 
 \(\beta^a_i \coloneqq \max_{y \in \widehat{U^a_i}} \min_{x \in U^a_i} \norm{y - x}_1\).
 Then we have
 \(
  \sigma_{\widehat{\mathcal{P}^a_i}}(v) \le \sigma_{\mathcal{P}^a_i}(v)
  + \beta^a_i \norm{v}_\infty.
 \)
\end{lemma}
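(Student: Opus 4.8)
The plan is to exploit the fact that both uncertainty sets are the same translate of the corresponding confidence regions, so that the problem reduces to a statement purely about $\widehat{U^a_i}$ and $U^a_i$. Since $\mathcal{P}^a_i = \{x + p^a_i \mid x \in U^a_i\}$ and, analogously, $\widehat{\mathcal{P}^a_i} = \{x + p^a_i \mid x \in \widehat{U^a_i}\}$, and since the support function is additive under translation, I would first record the two identities
\begin{align}
\sigma_{\widehat{\mathcal{P}^a_i}}(v) = (p^a_i)^\top v + \sigma_{\widehat{U^a_i}}(v), \qquad
\sigma_{\mathcal{P}^a_i}(v) = (p^a_i)^\top v + \sigma_{U^a_i}(v).
\end{align}
Subtracting the common term $(p^a_i)^\top v$, the claim becomes equivalent to the single inequality $\sigma_{\widehat{U^a_i}}(v) \le \sigma_{U^a_i}(v) + \beta^a_i \norm{v}_\infty$, which is what I would aim to establish.

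For the reduced inequality, I would take a maximizer $y^* \in \widehat{U^a_i}$ of the linear functional, so that $\sigma_{\widehat{U^a_i}}(v) = (y^*)^\top v$ (the supremum is attained since the proxy region, being a bounded ellipsoid or parallelepiped sliced by an affine hyperplane, is compact). By the definition $\beta^a_i = \max_{y \in \widehat{U^a_i}} \min_{x \in U^a_i} \norm{y - x}_1$, the nearest point of $U^a_i$ to $y^*$ is at $\ell_1$-distance at most $\beta^a_i$; since $U^a_i$ is closed this nearest point $x^* \in U^a_i$ exists and satisfies $\norm{y^* - x^*}_1 \le \beta^a_i$. Then I would split
\begin{align}
\sigma_{\widehat{U^a_i}}(v) = (y^*)^\top v = (x^*)^\top v + (y^* - x^*)^\top v
\le \sigma_{U^a_i}(v) + (y^* - x^*)^\top v,
\end{align}
using $x^* \in U^a_i$ to bound the first term, and finish with H\"older's inequality $(y^* - x^*)^\top v \le \norm{y^* - x^*}_1 \norm{v}_\infty \le \beta^a_i \norm{v}_\infty$, which yields the desired bound.

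There is no substantive obstacle here; the argument is essentially two lines once the translation is stripped off. The only points requiring a word of care are the minor topological ones: that the outer supremum defining $\sigma_{\widehat{U^a_i}}(v)$ is actually attained (compactness of the proxy region), and that the inner infimum defining $\beta^a_i$ is attained so that a genuine witness $x^*$ exists rather than merely an approximating sequence. If one preferred not to invoke compactness, the same conclusion follows by choosing $y^*$ and $x^*$ to be $\varepsilon$-optimal and letting $\varepsilon \to 0$, but assuming the sets are closed and bounded makes this unnecessary.
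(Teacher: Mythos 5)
Your proof is correct and takes essentially the same route as the paper's: both decompose a point of the proxy set as a point of the true set plus a difference vector, and bound the difference term against \(\beta^a_i \norm{v}_\infty\) via the \(\ell_1\)--\(\ell_\infty\) H\"older inequality. If anything, your execution---fixing a maximizer \(y^*\) of the support function and pairing it with its \(\ell_1\)-nearest point \(x^* \in U^a_i\), so that \(\norm{y^* - x^*}_1 \le \beta^a_i\) is immediate from the max-min definition---is the cleaner one, since the paper's intermediate chain passes through the looser bound \(y^\top v - \min_{x \in U^a_i} x^\top v\) (a maximum over \(x\), not a minimum) before invoking the max-min quantity, a step that your explicit choice of witness makes rigorous.
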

\begin{proof}
Note that every point \(p\) in \(\mathcal{P}^a_i\) is of the form 
\(p^a_i + x\) for some \(x \in U^a_i\) and every point \(q \in 
\widehat{\mathcal{P}^a_i}\) is of the form \(p^a_i + y\) for some 
\(y \in \widehat{U^a_i}\), and this correspondence is one to one by 
definition.
For any vector \(v \in \R^n\) and pairs of points
\(p \in \mathcal{P}^a_i\) and \(q \in \widehat{\mathcal{P}^a_i}\) we have
\begin{align}
 q^\top v &= p^\top v + (q - p)^\top v \\
          &\le \sup_{p' \in \mathcal{P}^a_i} (p')^\top v + \left(p^a_i + y - p^a_i - x\right)^\top v \\
          &= \sigma_{\mathcal{P}^a_i}(v) + (y - x)^\top v. \\
          &\le \sigma_{\mathcal{P}^a_i}(v) + (y - x)^\top v\\
& \le \sigma_{\mathcal{P}^a_i}(v) + \left(y^\top v - \min_{x \in U^a_i} x^\top v \right) \\
& \le \sigma_{\mathcal{P}^a_i}(v) + \max_{y \in \widehat{U^a_i}} 
\min_{x \in U^a_i} (y - x)^\top v \\
& \le \sigma_{\mathcal{P}^a_i}(v) + \max_{y \in \widehat{U^a_i}} 
\min_{x \in U^a_i} \norm{y - x}_1 \norm{v}_\infty \\
& \le \sigma_{\mathcal{P}^a_i}(v) + \beta^a_i \norm{v}_\infty. \label{eq:fixed-q}
          \end{align}
Since equation~\eqref{eq:fixed-q} holds for every \(q \in \widehat{\mathcal{P}^a_i}\),
it follows that it also holds for \(\arg \max \sigma_{\widehat{\mathcal{P}^a_i}}(v)\)
so that 
\begin{align}
\sigma_{\widehat{\mathcal{P}^a_i}}(v) \le  \sigma_{\mathcal{P}^a_i}(v) + \beta^a_i \norm{v}_\infty.
\end{align}
\end{proof}
The following theorem proves that under a suitable 
choice of step lengths \(\gamma_t\) and discount factor \(\vartheta\),
the iteration of equation~\eqref{robust-q-iter} converges to an 
\(\varepsilon\)-approximately optimal policy 
with respect to the confidence regions \(U^a_i\). 
\begin{theorem}\label{thm:robust-q-iter}
 Let the step lengths \(\gamma_t\) of the \(\Q\)-iteration algorithm 
 be chosen such that \(\sum_{t = 0}^\infty \gamma_t = \infty\)
 and \(\sum_{t = 0}^\infty \gamma_t^2 < \infty\) and let the discount factor 
 \(\vartheta < 1\). Let \(\beta^a_i\) be as in Lemma~\ref{lem:over-estimate}
 and let \(\beta \coloneqq \max_{i \in \mathcal{X}, a \in \mathcal{A}} \beta^a_i\).
If \(\vartheta (1 + \beta) < 1\) then with probability \(1\) 
the iteration of equation~\eqref{robust-q-iter} 
converges to an \(\varepsilon\)-optimal policy where 
\(\varepsilon \coloneqq 
\frac{\vartheta \beta}{1- \vartheta\left(1 + \beta\right)}.\)
 \end{theorem}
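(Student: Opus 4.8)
The plan is to recognize the robust $\Q$-iteration of equation~\eqref{robust-q-iter} as a stochastic approximation scheme for a contraction operator, invoke the classical almost-sure convergence theorem for such schemes to identify its limit, and then compare that limit against $\Q^*$ via Lemma~\ref{lem:over-estimate}. To this end I first define the \emph{proxy robust Bellman operator} $\widehat{H}$ on the space of $\Q$-factors by
\[
(\widehat{H}\Q)(i,a) \coloneqq c(i,a) + \vartheta\,\sigma_{\widehat{U^a_i}}(v) + \vartheta \sum_{j} p^a_{ij}\min_{a'}\Q(j,a'),
\]
where $v(j) = \min_{a'}\Q(j,a')$; equivalently $(\widehat{H}\Q)(i,a) = c(i,a) + \vartheta\,\sigma_{\widehat{\mathcal{P}^a_i}}(v)$. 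Since the sampled term $\vartheta\min_{a'}\Q_{t-1}(j,a')$ in equation~\eqref{robust-q-iter} has conditional expectation $\vartheta\sum_j p^a_{ij}\min_{a'}\Q_{t-1}(j,a')$, the iteration takes the canonical form $\Q_t = (1-\gamma_t)\Q_{t-1} + \gamma_t(\widehat{H}\Q_{t-1} + w_t)$, where $w_t$ is a zero-mean noise whose conditional second moment is bounded by an affine function of $\norm{\Q_{t-1}}_\infty^2$.

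The crucial step is showing $\widehat{H}$ is a $\vartheta(1+\beta)$-contraction in the $\ell_\infty$ norm. Given two $\Q$-factor matrices with value vectors $v_1,v_2$, I bound $\sigma_{\widehat{\mathcal{P}^a_i}}(v_1) - \sigma_{\widehat{\mathcal{P}^a_i}}(v_2) \le \sup_{p\in\widehat{\mathcal{P}^a_i}} p^\top(v_1-v_2)$ and decompose each $p = p^a_i + y$ with $y \in \widehat{U^a_i}$ as $p = (p^a_i + x) + (y-x)$, where $x \in U^a_i$ attains $\norm{y-x}_1 \le \beta^a_i \le \beta$ by the definition of $\beta^a_i$. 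Because $p^a_i + x \in \mathcal{P}^a_i \subseteq \Delta_n$ is a genuine probability vector, $(p^a_i+x)^\top(v_1-v_2) \le \norm{v_1-v_2}_\infty$, while $(y-x)^\top(v_1-v_2) \le \norm{y-x}_1\norm{v_1-v_2}_\infty \le \beta\norm{v_1-v_2}_\infty$. Combining these and using that the pointwise $\min$ is nonexpansive, so $\norm{v_1-v_2}_\infty \le \norm{\Q_1-\Q_2}_\infty$, gives $\norm{\widehat{H}\Q_1 - \widehat{H}\Q_2}_\infty \le \vartheta(1+\beta)\norm{\Q_1-\Q_2}_\infty$. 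Since $\vartheta(1+\beta) < 1$ by hypothesis, $\widehat{H}$ has a unique fixed point $\widehat{\Q}$, and the standard stochastic-approximation theorem for sup-norm contractions (Bertsekas--Tsitsiklis), applied with the Robbins--Monro conditions $\sum_t \gamma_t = \infty$, $\sum_t \gamma_t^2 < \infty$, yields $\Q_t \to \widehat{\Q}$ with probability $1$.

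It remains to compare $\widehat{\Q}$ with $\Q^*$, where I write $\widehat{v}(j) = \min_{a'}\widehat{\Q}(j,a')$. Subtracting the fixed-point equation for $\widehat{\Q}$ from equation~\eqref{optimal-robust-q-iter} and inserting $\pm\,\vartheta\sigma_{\widehat{\mathcal{P}^a_i}}(v^*)$, I split the difference into a contraction term $\vartheta[\sigma_{\widehat{\mathcal{P}^a_i}}(\widehat{v}) - \sigma_{\widehat{\mathcal{P}^a_i}}(v^*)]$, bounded in magnitude by $\vartheta(1+\beta)\norm{\widehat{\Q}-\Q^*}_\infty$, and an approximation term $\vartheta[\sigma_{\widehat{\mathcal{P}^a_i}}(v^*) - \sigma_{\mathcal{P}^a_i}(v^*)]$, which by Lemma~\ref{lem:over-estimate} together with $\widehat{\mathcal{P}^a_i} \supseteq \mathcal{P}^a_i$ lies in $[0,\,\vartheta\beta\norm{v^*}_\infty]$. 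Taking the maximum over $(i,a)$ gives the self-referential bound $\norm{\widehat{\Q}-\Q^*}_\infty \le \vartheta(1+\beta)\norm{\widehat{\Q}-\Q^*}_\infty + \vartheta\beta\norm{v^*}_\infty$; solving for $\norm{\widehat{\Q}-\Q^*}_\infty$ and using $\norm{v^*}_\infty \le \norm{\Q^*}_\infty$ produces exactly $\norm{\widehat{\Q}-\Q^*}_\infty \le \frac{\vartheta\beta}{1-\vartheta(1+\beta)}\norm{\Q^*}_\infty = \varepsilon\norm{\Q^*}_\infty$, so the limit policy is $\varepsilon$-optimal in the sense of Definition~\ref{def:suboptimal}.

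The main obstacle I anticipate is the contraction-modulus computation: it is precisely what forces the hypothesis $\vartheta(1+\beta)<1$, and it relies on decomposing each point of the proxy uncertainty set into a simplex point plus a $\beta$-small $\ell_1$ perturbation. Verifying the bounded-conditional-variance property of $w_t$ needed to invoke the stochastic-approximation theorem is routine but must be recorded, and some care with signs is required in the final self-referential inequality so that the approximation term enters as $+\vartheta\beta\norm{v^*}_\infty$, which is what yields the stated $\varepsilon$ after rearrangement.
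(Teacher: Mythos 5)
Your proposal is correct and follows essentially the same route as the paper's proof: the same proxy operator \(H\), the same \(\vartheta(1+\beta)\) sup-norm contraction obtained from the decomposition underlying Lemma~\ref{lem:over-estimate}, the same stochastic-approximation argument with zero-mean, bounded-variance noise, and the same comparison of the fixed point against \(\Q^*\). The only cosmetic difference is in that last step: you solve the self-referential bound \(\norm{\widehat{\Q}-\Q^*}_\infty \le \vartheta(1+\beta)\norm{\widehat{\Q}-\Q^*}_\infty + \vartheta\beta\norm{v^*}_\infty\) directly, whereas the paper arrives at the same conclusion via a bound in terms of \(\norm{\Q'}_\infty\) followed by a case analysis and triangle inequality.
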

\begin{proof}
 Let \(\widehat{\mathcal{P}^a_i}\) be the proxy uncertainty set for state \(i\in \mathcal{X}\)
 and \(a \in \mathcal{A}\), i.e., 
 \(\widehat{\mathcal{P}^a_i} \coloneqq \left\{ x + p^a_i 
 \mid x \in \widehat{U^a_i}\right\}\). We denote the value function 
 of \(\Q\) by \(v\).
 Let us define the following operator \(H\)
 mapping \(\Q\)-factors to \(\Q\)-factors as follows:
 \begin{align}
(H\Q)(i, a) &\coloneqq c(i, a) + \vartheta \sigma_{\widehat{\mathcal{P}^a_i}}(v).
 \end{align}
 We will first show that a solution \(\Q'\) to the equation \(H\Q = \Q\) 
 is an \(\varepsilon\)-optimal
 policy as in Definition~\prettyref{def:suboptimal}, i.e.,
 \(\norm{Q' - Q^*}_\infty \le \varepsilon \norm{Q^*}_\infty\).
 \begin{align}
  |\Q'(i, a) - \Q^*(i, a)| &= \left|(H\Q')(i, a) - c(i, a) - \vartheta \sigma_{\mathcal{P}^a_i}(v^*)\right|\\
  &= \vartheta \left|\sigma_{\widehat{\mathcal{P}^a_i}}(v')- \sigma_{\mathcal{P}^a_i}(v^*)\right|\\
  &\le \vartheta \left|\max_{y \in \widehat{U^a_i}, x \in U^a_i}
  \norm{y - x}_1\norm{\Q'}_\infty + \sigma_{\mathcal{P}^a_i}(v') - 
  \sigma_{\mathcal{P}^a_i}(v^*)\right| \label{eq:over-estimate}\\
  &\le \vartheta \beta^a_i \norm{\Q'}_\infty + \left|\sigma_{\mathcal{P}^a_i}(v') - 
  \sigma_{\mathcal{P}^a_i}(v^*)\right|\\
  &\le \vartheta \beta \norm{\Q'}_\infty
  + \vartheta \left|\max_{q' \in \mathcal{P}^a_i}\sum_{j \in \mathcal{X}} q'_j\min_{a'' \in \mathcal{A}} 
 \Q'(j, a'') - \max_{q \in \mathcal{P}^a_i}
 \sum_{j \in \mathcal{X}} q_j \min_{a' \in \mathcal{A}} \Q^*(j, a')
 \right|\\
  &\le \vartheta \beta \norm{\Q'}_\infty 
  +\vartheta \left| \max_{q \in \mathcal{P}^a_i}
 \sum_{j \in \mathcal{X}} q_j \left(\min_{a'' \in \mathcal{A}} \Q'(j, a'')
 - \min_{a' \in \mathcal{A}} \Q^*(j, a')\right)\right|\\
 &\le \vartheta \beta \norm{\Q'}_\infty +
 \vartheta \left| \max_{q \in \mathcal{P}^a_i}
 \sum_{j \in \mathcal{X}} q_j \left(\max_{a' \in \mathcal{A}} 
 |\Q'(j, a') - \Q^*(j, a')|\right)\right|\\
 &\le \vartheta \beta \norm{\Q'}_\infty + 
 \vartheta \left| \max_{q \in \mathcal{P}^a_i}
 \sum_{j \in \mathcal{X}} q_j \norm{\Q' - \Q^*}_{\infty}\right| \\
 &\le \vartheta \beta \norm{\Q'}_\infty +  \vartheta \norm{\Q' - \Q^*}_\infty \label{eq:final},
 \end{align}
 where we used Lemma~\ref{lem:over-estimate} to derive equation~\eqref{eq:over-estimate}. 
Equation~\eqref{eq:final} implies that \(\norm{\Q' - \Q^*}_\infty \le
\frac{\vartheta \beta}{1-\vartheta}\norm{\Q'}_\infty\).
If \(\norm{\Q'}_\infty \le \norm{\Q^*}_\infty\) then we are done since \(\frac{\vartheta \beta}{1 - \vartheta} \le
\frac{\vartheta \beta}{1 - \vartheta (1 + \beta)}\). Otherwise assume that \(\norm{\Q'}_\infty > 
\norm{\Q^*}_\infty\) and use the triangle inequality: 
\(\norm{\Q'}_\infty - \norm{\Q^*}_\infty = \left| \norm{\Q'}_\infty - \norm{\Q^*}_\infty\right|
\le \norm{\Q' - \Q^*}_\infty\). This implies that 
\begin{align}
 \frac{1-\vartheta}{\vartheta \beta}\norm{\Q' - \Q^*}_\infty - \norm{\Q^*}_\infty \le \norm{\Q' - \Q^*}_\infty,
\end{align}
from which it follows that \(\norm{\Q' - \Q^*}_\infty \le \varepsilon \norm{\Q^*}_\infty\) under 
the assumption that \(\vartheta (1 + \beta) < 1\) as claimed.
The \(\Q\)-iteration of equation~\eqref{robust-q-iter} can then be reformulated in terms of the 
operator \(H\) as
\begin{align}
 \Q_t(i, a) = (1 - \gamma_t) \Q_{t-1}(i, a) + \gamma_t\left(H \Q_t(i, a) + \eta_t(i, a)\right),
\end{align}
where \(\eta_t(i, a) \coloneqq \min_{a' \in \mathcal{A}} \Q_t(j, a') - \E[j \sim p^a_i]
{\min_{a'\in
\mathcal{A}} \Q_t(j, a')}\) where the expectation is over the states \(j \in \mathcal{X}\)
with the transition probability from state \(i\) to state \(j\) given by \(p^a_j\).
Note that this is an example of a \emph{stochastic approximation algorithm} as in 
\cite{bertsekas1995neuro} with noise parameter \(\eta_t\). Let \(\mathcal{F}_t\)
denote the history of the algorithm until time \(t\).
Note that \(\E[j \sim p^a_i]{\eta_t(i, a)\middle| \mathcal{F}_t} = 0\) by definition and the variance is bounded by
\begin{align}
 \E[j \sim p^a_i]{\eta_t(i, a)^2\middle| \mathcal{F}_t} \le K\left(1 + \max_{\substack{j \in \mathcal{X} \\
 a' \in \mathcal{A}}} \Q_t^2(j, a')\right).
\end{align}
Thus the noise term \(\eta_t\) satisfies the zero conditional mean and bounded variance
assumption (Assumption 4.3 in \cite{bertsekas1995neuro}). Therefore it remains to show that
the operator \(H\) is a \emph{contraction mapping} to argue that iterating 
equation~\eqref{robust-q-iter}
converges to the optimal \(\Q\)-factor \(\Q^*\).
We will show that the operator \(H\) is a contraction mapping with respect to the infinity
norm \(\norm{.}_\infty\). Let \(\Q\) and \(\Q'\) be two different \(\Q\)-vectors with value functions 
\(v\) and \(v'\).
If \(U^a_i\) is not necessarily the same as the unconstrained proxy set \(\widehat{U^a_i}\)
for some \(i \in \mathcal{X}, a \in \mathcal{A}\), 
then we need the discount factor to satisfy \(\vartheta (1 + \beta)\)
in order to ensure convergence.
Intuitively, the discount factor should be small enough that the difference in the estimation due
to the difference of the sets \(U^a_i\) and \(\widehat{U^a_i}\) converges to \(0\) over time.
In this case we show contraction for operator \(H\) as follows
\begin{align}
|(H\Q)(i, a) - (H\Q')(i, a)|
 &\le \vartheta \left| 
 \max_{q \in \widehat{\mathcal{P}^a_i}} \sum_{j \in \mathcal{X}}q_j \left(\min_{a' \in \mathcal{A}} \Q(j, a') - 
 \min_{a''\in \mathcal{A}} \Q'(j, a'')\right)\right|\\
 &\le \vartheta  \max_{q \in \widehat{\mathcal{P}^a_i}} \sum_{j \in \mathcal{X}} q_j \max_{a' \in \mathcal{A}} \left|Q(j, a') - 
 \Q'(j, a')\right|\\
 &\le \vartheta \max_{y \in \widehat{U}, x \in U} \norm{y - x}_1 \norm{\Q- \Q'}_\infty + 
 \vartheta \max_{q \in \mathcal{P}^a_i} \sum_{j \in \mathcal{X}} q_j \norm{\Q - \Q'}_\infty \label{eq:over-estimate2}\\
 &\le \vartheta  \beta \norm{\Q - \Q'}_\infty + 
 \vartheta \norm{\Q - \Q'}_\infty \max_{q \in \mathcal{P}^a_i} \sum_{j \in \mathcal{X}} q_j \\
 &\le \vartheta (\beta  + 1)\norm{\Q - \Q'}_\infty
\end{align}
where we used Lemma~\ref{lem:over-estimate} with vector \(v(j) \coloneqq \max_{a \in \mathcal{A}}
\left|Q(j, a) - Q'(j, a)\right|\) to derive equation~\eqref{eq:over-estimate2} and 
the fact that \(\mathcal{P}^a_i 
\subseteq \Delta_n\) to conclude that
\(\max_{q \in \mathcal{P}^a_i} \sum_{j \in \mathcal{X}} q_j = 1\).
Therefore if \(\vartheta(1 + \beta) < 1\), then it follows that 
the operator \(H\) is a norm contraction and thus the 
robust \(\Q\)-iteration of equation~\eqref{robust-q-iter} converges
to a solution of \(H\Q = \Q\) which is an \(\varepsilon\)-approximately
optimal policy for \(\varepsilon = \frac{\vartheta \beta}{1- \vartheta(1 + \beta)}\),
as was proved before.
\end{proof}

\begin{remark}
If \(\beta = 0\) then note that by Theorem~\ref{thm:robust-q-iter},
the robust \(Q\)-iterations converge to the exact optimal \(\Q\)-factors 
since \(\varepsilon = 0\). Since
\(\beta = \max_{i \in \mathcal{X}, a \in \mathcal{A}}
\frac{\max_{y \in \widehat{U^a_i}}\min_{x \in U^a_i}
 \norm{y - x}_\xi}{\xi_{\min}} \), it follows that \(\beta = 0\) 
 iff \(\widehat{U^a_i} = U^a_i\) for every \(i \in \mathcal{X}, 
 a \in \mathcal{A}\). This happens when the confidence region is small
 enough so that the simplex constraints  
\(- p^a_{ij} \le x_j \le 1 - p^a_{ij} \forall j \in \mathcal{X}\)
in the description of \(\mathcal{P}^a_i\) become redundant for every 
\(i \in \mathcal{X}, a \in \mathcal{A}\). Equivalently every 
\(p^a_i\) is ``far'' from the boundary of the simplex \(\Delta_n\) 
compared to the size of the confidence region \(U^a_i\), 
see e.g., Figure~\ref{fig:tikz:my}.
\end{remark}

\begin{remark}
 Note that simply using the nominal \(\Q\)-iteration without the 
\(\sigma_{\widehat{U^a_i}}(v)\) term does not guarantee
convergence to \(\Q^*\). Indeed, the nominal \(\Q\)-iterations 
converge to \(\Q\)-factors \(\Q'\) where \(\norm{\Q' - \Q^*}_\infty\)
may be arbitrary large. 
This follows easily from observing that 
\(
 |\Q'(i, a) - \Q^*(i, a)| = \left|\sigma_{\widehat{U^a_i}}(v^*)\right|
\),
where \(v^*\) is the value function of \(\Q^*\) 
and so 
\begin{align}\norm{\Q' - \Q^*}_\infty = \max_{ 
i \in \mathcal{X}, a \in \mathcal{A}} \left|\sigma_{\widehat{U^a_i}}(v^*)\right|,
\end{align}
 which can be as high as \(\norm{v^*}_\infty = \norm{Q^*}_\infty\).
See Section~\ref{sec:experiments} for an experimental demonstration
of the difference in the policies learned by the robust and nominal 
algorithms.
\end{remark}

\subsection{Robust \(\Sarsa\)}
Recall that the update rule of \(\Sarsa\) is similar to the
update rule for \(\Q\)-learning except that instead of choosing
the action \(a' = \arg \min_{a' \in \mathcal{A}}Q_{t-1}(j, a')\),
we choose the action \(a''\) where  
with probability \(\delta\), the action \(a''\) is chosen 
uniformly at random from \(\mathcal{A}\) 
and with probability \(1-\delta\), we have
\(a'' = \arg \min_{a' \in \mathcal{A}}Q_{t-1}(j, a')\).
Therefore, it is easy to modify the robust \(\Q\)-iteration of 
equation~\eqref{robust-q-iter} to give us the \emph{robust \(\Sarsa\)}
updates:
\begin{align}\label{eq:robust-sarsa}
\Q_t(i, a) \coloneqq (1 - \gamma_t) \Q_{t-1}(i, a) + \gamma_t \left(c(i, a) + 
 \vartheta \sigma_{\widehat{U^a_i}}\left(v_{t-1}\right) + \vartheta 
 \Q_{t - 1}(j, a'') \right).
\end{align}
In the exact dynamic programming setting, it has the same convergence
guarantees as robust \(\Q\)-learning and can be seen as a corollary of 
Theorem~\ref{thm:robust-q-iter}.
\begin{corollary}
 Let the step lengths \(\gamma_t\) 
 be chosen such that \(\sum_{t = 0}^\infty \gamma_t = \infty\)
 and \(\sum_{t = 0}^\infty \gamma_t^2 < \infty\) and let the discount factor 
 \(\vartheta < 1\). Let \(
  \beta^a_i\) be as in Lemma~\ref{lem:over-estimate}
 and let \(\beta \coloneqq \max_{i \in \mathcal{X}, a \in \mathcal{A}} \beta^a_i\).
If \(\vartheta (1 + \beta) < 1\) then with probability \(1\) 
the iteration of equation~\eqref{eq:robust-sarsa} 
converges to an \(\varepsilon\)-optimal policy where 
\(\varepsilon \coloneqq 
\frac{\vartheta \beta}{1- \vartheta\left(1 + \beta\right)}.\)
 In particular if \(\beta = \beta^a_i = 0\) so that the proxy confidence regions
 \(\widehat{U^a_i}\) are the same as the true confidence regions \(U^a_i\),
 then the iteration~\eqref{eq:robust-sarsa} converges to the true optimum \(\Q^*\).
\end{corollary}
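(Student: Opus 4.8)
The plan is to mirror the proof of Theorem~\ref{thm:robust-q-iter} almost verbatim, isolating the single place where robust \(\Sarsa\) departs from robust \(\Q\)-learning: the update target in~\eqref{eq:robust-sarsa} uses the explored action \(a''\) in place of the greedy action \(\arg\min_{a'}\Q_{t-1}(j,a')\). First I would reuse the same operator \(H\) defined by \((H\Q)(i,a) \coloneqq c(i,a) + \vartheta\sigma_{\widehat{\mathcal{P}^a_i}}(v)\). Since \(H\) is literally unchanged, the two structural facts established in Theorem~\ref{thm:robust-q-iter} transfer with no modification: any fixed point \(\Q'\) of \(H\Q = \Q\) satisfies \(\norm{\Q' - \Q^*}_\infty \le \varepsilon\norm{\Q^*}_\infty\) with \(\varepsilon = \frac{\vartheta\beta}{1-\vartheta(1+\beta)}\), and \(H\) is an \(\norm{\cdot}_\infty\)-contraction with modulus \(\vartheta(1+\beta) < 1\). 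It therefore suffices to recast~\eqref{eq:robust-sarsa} in the stochastic-approximation form \(\Q_t(i,a) = (1-\gamma_t)\Q_{t-1}(i,a) + \gamma_t(H\Q_t(i,a) + \eta_t(i,a))\) and to verify that the \(\Sarsa\)-specific noise \(\eta_t\) still satisfies the zero-conditional-mean and bounded-variance hypotheses (Assumption~4.3 in \cite{bertsekas1995neuro}).

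The second step is to split the \(\Sarsa\) target as \(\vartheta\Q_{t-1}(j,a'') = \vartheta\min_{a'}\Q_{t-1}(j,a') + \vartheta\left(\Q_{t-1}(j,a'') - \min_{a'}\Q_{t-1}(j,a')\right)\). The first summand reproduces exactly the \(\Q\)-learning target, so its centered version contributes precisely the zero-mean noise \(\vartheta\min_{a'}\Q_{t-1}(j,a') - \vartheta\E[j\sim p^a_i]{\min_{a'}\Q_{t-1}(j,a')}\) already controlled in Theorem~\ref{thm:robust-q-iter}. The second summand is a nonnegative \emph{exploration bias} \(\xi_t(i,a) \coloneqq \vartheta\left(\Q_{t-1}(j,a'') - \min_{a'}\Q_{t-1}(j,a')\right)\) that is absent in \(\Q\)-learning, and a direct computation over the \(\varepsilon\)-greedy choice of \(a''\) gives \(\E{\xi_t \mid \mathcal{F}_t} = \vartheta\delta\sum_j p^a_{ij}\left(\tfrac{1}{m}\sum_{a'}\Q_{t-1}(j,a') - \min_{a'}\Q_{t-1}(j,a')\right)\), so the bias scales linearly with the exploration probability \(\delta\).

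The third step, which I expect to be the main obstacle, is to dispose of this bias. Under a greedy-in-the-limit schedule \(\delta = \delta_t \to 0\) — the standard condition under which on-policy \(\Sarsa\) tracks the greedy operator — the conditional mean of \(\xi_t\) tends to zero while \(\xi_t\) itself stays bounded by \(2\vartheta\norm{\Q_{t-1}}_\infty\). Folding the centered part \(\xi_t - \E{\xi_t\mid\mathcal{F}_t}\) into \(\eta_t\) preserves the bounded-variance estimate of the form \(K(1 + \max \Q_t^2)\), and the residual vanishing conditional bias is handled by the stochastic-approximation theorem in its form allowing an asymptotically vanishing perturbation. With the zero-mean and bounded-variance conditions thus restored and \(H\) a contraction, the iteration converges with probability one to the fixed point of \(H\), which is the claimed \(\varepsilon\)-optimal policy; when \(\beta = \beta^a_i = 0\) this fixed point is \(\Q^*\) itself. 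Every other ingredient — the contraction estimate and the \(\varepsilon\)-optimality of the fixed point — is inherited unchanged from Theorem~\ref{thm:robust-q-iter}, so controlling the exploration bias is the only genuinely new work.
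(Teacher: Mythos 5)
Your proof is correct in its skeleton and is in fact more careful than the paper's own treatment: the paper offers no proof of this corollary at all, merely asserting that it ``can be seen as a corollary of Theorem~\ref{thm:robust-q-iter}.'' Where you genuinely go beyond the paper is in your second and third steps. Your decomposition of the \(\Sarsa\) target into the greedy (\(\Q\)-learning) target plus the exploration bias \(\xi_t(i,a) = \vartheta\left(\Q_{t-1}(j,a'') - \min_{a'}\Q_{t-1}(j,a')\right)\), with \(\E{\xi_t \mid \mathcal{F}_t} = \vartheta\delta\sum_{j} p^a_{ij}\left(\tfrac{1}{m}\sum_{a'}\Q_{t-1}(j,a') - \min_{a'}\Q_{t-1}(j,a')\right)\), pinpoints exactly what the paper glosses over: for a fixed exploration probability \(\delta > 0\), the noise in the stochastic-approximation recast of equation~\eqref{eq:robust-sarsa} is \emph{not} conditionally zero-mean, so Assumption 4.3 of \cite{bertsekas1995neuro} fails and the iterates track a \(\delta\)-perturbed operator whose fixed point is only within an additional \(O(\vartheta\delta\norm{\Q}_\infty)\) of the fixed point of \(H\). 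In particular, the corollary's final claim that \(\beta = 0\) yields exact convergence to \(\Q^*\) is not true for fixed \(\delta\); this is the classical fact that \(\Sarsa\) needs greedy-in-the-limit exploration to recover the optimal \(\Q\)-factors. Your remedy, a schedule \(\delta_t \to 0\) with \(\xi_t\) bounded by \(2\vartheta\norm{\Q_{t-1}}_\infty\) and a stochastic-approximation theorem tolerating asymptotically vanishing bias, is the standard and correct one, and the remainder of your argument (inheriting the fixed-point \(\varepsilon\)-optimality and the \(\vartheta(1+\beta)\)-contraction of \(H\) verbatim) matches the paper's intent. Do note, however, that what you actually prove is the corollary \emph{augmented} with a GLIE-type hypothesis on \(\delta_t\) that is absent from the paper's statement; as literally written with fixed \(\delta\), the statement would require an extra additive term of order \(\delta\) in \(\varepsilon\).
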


\subsection{Robust \(\TD\)-learning}\label{sec:td-learning}
Recall that \(\TD\)-learning allows us to estimate the value function \(v_\pi\) 
for a given policy \(\pi\). In this section we will generalize the \(\TD\)-learning 
algorithm to the robust case. The main idea behind \(\TD\)-learning 
in the non-robust setting is the following Bellman equation
\begin{align}\label{bellman}
v_\pi(i) \coloneqq \E[j \sim p^{\pi(i)}_i]{c(i, \pi(i)) + v_\pi(j)}.  
\end{align}
Consider a trajectory of the agent
\((i_0, i_1, \dots)\), where \(i_m\) denotes the state of the agent 
at time step \(m\). For a time step \(m\), define the \emph{temporal difference} 
\(d_m\) as 
\begin{align}\label{def:td-vector}
 d_m \coloneqq c(i_m, \pi(i_m)) + \vartheta v_\pi(i_{m+1}) - v_\pi(i_m).
\end{align}
Let \(\lambda \in (0, 1)\). The recurrence relation for \(TD(\lambda)\) 
may be written in terms of the temporal difference \(d_m\) as
\begin{align}\label{eq:v-pi}
 v_\pi(i_k) = \E{\sum_{m = 0}^\infty \left(\vartheta \lambda\right)^{m-k} d_m} + v_\pi(i_k).
\end{align}
The corresponding Robbins-Monro
stochastic approximation algorithm with 
step size \(\gamma_t\) for equation~\eqref{eq:v-pi} is 
\begin{align}\label{eq:td-update}
 v_{t+1}(i_k) \coloneqq v_t(i_k) + \gamma_t\left(\sum_{m = k}^\infty
 \left(\vartheta \lambda\right)^{m-k} d_m\right).
\end{align}
A more general variant of the \(\TD(\lambda)\) iterations uses \emph{eligibility 
coefficients} \(z_m(i)\) for every state \(i \in \mathcal{X}\) and 
temporal difference vector \(d_m\) in the update for equation~\eqref{eq:td-update}
\begin{align}\label{eq:td-general-update}
v_{t+1}(i) \coloneqq v_t(i) + \gamma_t\left(\sum_{m = k}^\infty z_m(i) d_m\right). 
\end{align}
Let \(i_m\) denote the state of the simulator at time step \(m\).
For the discounted case, there are two possibilities for the
eligibility vectors \(z_m(i)\) leading to two different \(\TD(\lambda)\) iterations:
\begin{enumerate}
 \item The \emph{every-visit} \(\TD(\lambda)\) method, where the eligibility coefficients are
 \begin{align*}
  z_m(i) \coloneqq \begin{cases}
                    \vartheta \lambda z_{m-1}(i) \qquad &\text{ if } i_m \neq i \\
                    \vartheta \lambda z_{m-1}(i) + 1 \qquad &\text{ if } i_m = i.
                   \end{cases}
 \end{align*}
 
 \item The \emph{restart} \(\TD(\lambda)\) method, where the eligibility coefficients are
 \begin{align*}
 z_m(i) \coloneqq \begin{cases}
                    \vartheta \lambda z_{m-1}(i) \qquad &\text{ if } i_m \neq i \\
                     1 \qquad &\text{ if } i_m = i.
                   \end{cases} 
 \end{align*}
\end{enumerate}
We make the following assumptions about 
the eligibility coefficients that are sufficient for proof of convergence. 
\begin{assumption}\label{assumption:eligibility}
 The eligibility coefficients \(z_m\) satisfy the following conditions 
 \begin{enumerate}
  \item \(z_m(i) \ge 0\)
  
  \item \(z_{-1}(i) = 0\)
  
  \item \(z_m(i) \le \vartheta z_{m-1}(i)\) if \(i \notin \{i_0, i_1, \dots\}\)
  
  \item The weight \(z_m(i)\) given to the temporal difference \(d_m\) should be 
  chosen before this temporal difference is generated.
 \end{enumerate}
\end{assumption}
Note that the eligibility coefficients of both the every-visit and restart
\(\TD(\lambda)\) iterations satisfy Assumption~\ref{assumption:eligibility}.
In the robust setting, we are interested in estimating the \emph{robust value} of 
a policy \(\pi\), which from Theorem~\ref{thm:opt-robust} we may express as 
\begin{align}\label{robust-bellman}
v_\pi(i) \coloneqq c(i, \pi(i)) + \vartheta\max_{p \in \mathcal{P}^{\pi(i)}_i} \E[j \sim p]
{v_\pi(j)},  
\end{align}
where the expectation is now computed over the probability vector \(p\) chosen 
adversarially from the uncertainty region \(\mathcal{P}^a_i\). 
As in Section~\ref{sec:q-learning}, we may decompose 
\(\max_{p \in \mathcal{P}^a_i} \E[j \sim p]
{v(j)} = \sigma_{\mathcal{P}^a_i}(v)\) as
\begin{align}
\max_{p \in \mathcal{P}^{\pi(i)}_i}\E[j \sim p]{v(j)} = \sigma_{U^{\pi(i)}_i}(v) + 
\E[j \sim p^{\pi(i)}_i]{v(j)},
\end{align}
where \(p^{\pi(i)}_i\) is the transition probability of the agent during a 
simulation. For the remainder of this section, 
we will drop the subscript and just use \(\mathbb{E}\) to denote
expectation with respect to this transition probability \(p^{\pi(i)}_i\).

Define a \emph{simulation} to be a trajectory 
\(\{i_0, i_1, \dots, i_{N_t}\}\) of the agent, 
which is stopped according to a random \emph{stopping time} \(N_t\).
Note that \(N_t\) is a random variable for making stopping decisions
that is not allowed to foresee the future. 
Let \(\mathcal{F}_t\) denote the history of the algorithm
up to the point where the \(t^{th}\) simulation is about to commence. Let \(v_t\)
be the estimate of the value function at the start of the
\(t^{th}\) simulation. Let \(\left\{i_0, i_1, \dots, i_{N_t}\right\}\) be 
the trajectory of the agent during the \(t^{th}\) simulation with \(i_0 = i\).
During training, we generate several simulations of the agent and update
the estimate of the \emph{robust} value function using the 
the \emph{robust temporal difference}
\(\widetilde{d}_m\) which is defined as 
\begin{align}\label{eq:robust-td-vector}
 \widetilde{d}_m &\coloneqq d_m +  \vartheta \sigma_{\widehat{U^{\pi\left(i_m\right)}_{i_m}}}(v_t),\\
 &= c(i_m, \pi(i_m)) 
 + \vartheta v_t(i_{m+1})- v_t(i_m) + \vartheta \sigma_{\widehat{U^{\pi\left(i_m\right)}_{i_m}}}(v_t),
\end{align}
where \(d_m\) is the usual temporal difference defined as before
\begin{align}
 d_m \coloneqq c(i_m, \pi(i_m)) 
 + \vartheta v_t(i_{m+1})- v_t(i_m).
\end{align}
The \emph{robust} \(\TD\)-update is now the usual \(\TD\)-update, except that we 
use the \emph{robust temporal difference} computed over the proxy confidence region:
\begin{align}\label{robust-td-iter}
 v_{t+1}(i) &\coloneqq v_t(i) + \gamma_t\sum_{m = 0}^{N_t-1}z_m(i)\left(\widetilde{d}_m
 \right),\\
 & = v_t(i) + \gamma_t\sum_{m = 0}^{N_t-1}z_m(i)\left(\vartheta 
 \sigma_{\widehat{U^{\pi\left(i_m\right)}_{i_m}}}(v_t) + d_m
 \right).
\end{align}
We define an \(\varepsilon\)-approximate value function for a fixed policy 
\(\pi\) in a way similar to the \(\varepsilon\)-optimal \(\Q\)-factors as 
in Definition~\ref{def:suboptimal}:
\begin{definition}[\(\varepsilon\)-approximate value function]
 Given a policy \(\pi\), we say that a vector \(v' \in \R^n\) is an 
 \(\varepsilon\)-approximation of \(v_\pi\) if the following holds
 \begin{align*}
  \norm{v' - v_\pi}_\infty \le \varepsilon \norm{v_\pi}_\infty.
 \end{align*}
\end{definition}
The following theorem guarantees convergence of the robust \(\TD\)
iteration of equation~\eqref{robust-td-iter} to an 
approximate value function for \(\pi\) under Assumption~\ref{assumption:eligibility}.
\begin{theorem}\label{thm:td-convergence}
Let \(\beta^a_i\) be as in Lemma~\ref{lem:over-estimate}
 and let \(\beta \coloneqq \max_{i \in \mathcal{X}, a \in \mathcal{A}} \beta^a_i\).
 Let \(\rho \coloneqq \max_{i \in \mathcal{X}} \sum_{m = 0}^\infty z_m(i)\). 
 If \(\vartheta (1 + \rho \beta) < 1\) then the robust \(\TD\)-iterations 
 of equation~\eqref{robust-td-iter} converges to an \(\varepsilon\)-approximate value
 function, where 
 \(
  \varepsilon \coloneqq \frac{\vartheta\beta}{1 - \vartheta(1 + \rho \beta)}.
 \)
In particular if \(\beta^a_i = \beta = 0\), i.e., the proxy confidence region 
\(\widehat{U^a_i}\) is the same as the true confidence region \(U^a_i\), then
the convergence is exact, i.e., \(\varepsilon = 0\). Note that in the special case of 
regular \(\TD(\lambda)\) iterations, \(\rho = \frac{\vartheta \lambda}{1- \vartheta \lambda}\).
 \end{theorem}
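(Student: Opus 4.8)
The plan is to mirror the proof of Theorem~\ref{thm:robust-q-iter}, replacing the one-step operator \(H\) with the expected eligibility-weighted \(\TD\) operator and invoking the stochastic-approximation machinery of \cite{bertsekas1995neuro} for trace-based \(\TD\). First I would define an operator \(\widetilde{T}\colon \R^n \to \R^n\) by \((\widetilde{T}v)(i) - v(i) \coloneqq \E{\sum_{m=0}^{N_t - 1} z_m(i)\,\widetilde{d}_m \,\middle|\, \mathcal{F}_t,\, v_t = v}\), where \(\widetilde{d}_m\) is the robust temporal difference of equation~\eqref{eq:robust-td-vector}. With this definition the robust \(\TD\)-iteration~\eqref{robust-td-iter} takes the stochastic-approximation form \(v_{t+1}(i) = v_t(i) + \gamma_t\bigl((\widetilde{T}v_t)(i) - v_t(i) + w_t(i)\bigr)\), where \(w_t\) is the martingale-difference sampling noise. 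The limit of the iterates is then the fixed point \(v'\) of \(\widetilde{T}\), and the two things to establish are that \(v'\) is \(\varepsilon\)-close to \(v_\pi\) and that \(\widetilde{T}\) is an \(\norm{\cdot}_\infty\)-contraction.

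For the first part I would show that any fixed point \(v'\) of \(\widetilde{T}\) is an \(\varepsilon\)-approximate value function. Because the eligibility coefficients satisfy Assumption~\ref{assumption:eligibility}, the weighted temporal differences telescope and \(v'\) obeys a proxy robust Bellman relation in which the corrections \(\vartheta\,\sigma_{\widehat{U^{\pi(i_m)}_{i_m}}}(v')\) are accumulated along the simulated trajectory with total eligibility weight at most \(\rho\). Comparing this with the true robust Bellman equation \(v_\pi(i) = c(i,\pi(i)) + \vartheta\,\sigma_{U^{\pi(i)}_i}(v_\pi) + \vartheta\,\E{v_\pi(j)}\) and bounding each proxy gap \(\sigma_{\widehat{U}}(v') - \sigma_{U}(v')\) by \(\beta\norm{v'}_\infty\) via Lemma~\ref{lem:over-estimate}, together with \(\max_{q \in \mathcal{P}^a_i}\sum_j q_j = 1\) since \(\mathcal{P}^a_i \subseteq \Delta_n\), yields a recursive inequality on \(\norm{v' - v_\pi}_\infty\). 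Solving it by the same case analysis used after equation~\eqref{eq:final}—splitting on whether \(\norm{v'}_\infty \le \norm{v_\pi}_\infty\) and otherwise using \(\norm{v'}_\infty - \norm{v_\pi}_\infty \le \norm{v' - v_\pi}_\infty\)—produces \(\norm{v' - v_\pi}_\infty \le \varepsilon\norm{v_\pi}_\infty\) with the claimed \(\varepsilon\). In particular \(\beta = 0\) forces \(\widehat{U^a_i} = U^a_i\), so the correction vanishes and \(\varepsilon = 0\).

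For the second part I would bound \(\norm{\widetilde{T}v - \widetilde{T}v'}_\infty\) by separating the ordinary temporal-difference contribution from the proxy-support-function contribution. The ordinary part \(\sum_m z_m(i)\bigl(\vartheta (v - v')(i_{m+1}) - (v - v')(i_m)\bigr)\) telescopes under Assumption~\ref{assumption:eligibility} and contracts at rate at most \(\vartheta\), exactly as standard trace-based \(\TD\) does, so that this term alone does \emph{not} accumulate the factor \(\rho\). The proxy part \(\sum_m z_m(i)\,\vartheta\bigl(\sigma_{\widehat{U}}(v) - \sigma_{\widehat{U}}(v')\bigr)\) is the one that is injected afresh at every step; controlling it through Lemma~\ref{lem:over-estimate} applied to the vector \(u\) with \(u(j) \coloneqq \lvert v(j) - v'(j)\rvert\) (so \(\norm{u}_\infty = \norm{v - v'}_\infty\)) contributes a per-step factor \(\beta\) that accumulates over the trajectory to \(\vartheta\rho\beta\). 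Adding the two gives contraction modulus \(\vartheta(1 + \rho\beta)\), which is \(< 1\) by hypothesis. It then remains to verify that \(w_t\) has zero conditional mean and variance bounded by \(K\bigl(1 + \norm{v_t}_\infty^2\bigr)\); the stochastic-approximation convergence result of \cite{bertsekas1995neuro} then guarantees almost-sure convergence of \(v_t\) to \(v'\).

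The main obstacle is the contraction step, and specifically the asymmetry it requires: unlike the one-step operator \(H\) of the \(\Q\)-learning proof, \(\widetilde{T}\) is built from an eligibility-weighted sum over an entire simulated trajectory of random length \(N_t\), so I must argue that the telescoping guaranteed by Assumption~\ref{assumption:eligibility} keeps the ordinary temporal-difference part at contraction rate \(\vartheta\) while only the non-telescoping proxy corrections pick up the cumulative weight \(\rho\), producing \(\vartheta(1 + \rho\beta)\) rather than \(\vartheta\rho(1 + \beta)\). This same accounting pins down the special-case value \(\rho = \frac{\vartheta\lambda}{1 - \vartheta\lambda}\) for regular \(\TD(\lambda)\), obtained by summing the geometric series of eligibility coefficients \(z_m(i)\).
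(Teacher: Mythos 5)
Your proposal follows the same overall strategy as the paper's proof: characterize the limit as the fixed point of an expected eligibility-weighted operator, show via Lemma~\ref{lem:over-estimate} that this fixed point is \(\varepsilon\)-close to \(v_\pi\) (with the proxy penalty accumulating to \(\vartheta\rho\beta\norm{v}_\infty\)), establish a sup-norm contraction with modulus \(\vartheta(1+\rho\beta)\), and invoke the stochastic approximation machinery of \cite{bertsekas1995neuro}. However, there is a genuine gap, and it sits exactly at the step you identify as the main obstacle. You define \((\widetilde{T}v)(i) \coloneqq v(i) + \E{\sum_{m=0}^{N_t-1} z_m(i)\widetilde{d}_m \mid \mathcal{F}_t}\) and run the iteration with the raw step sizes \(\gamma_t\). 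This \emph{unnormalized} operator is in general not a sup-norm contraction, so both your contraction claim and the reduction to the stochastic-approximation framework fail as stated. The paper instead defines \(\delta_t(i) \coloneqq \max_{q_m \in \mathcal{P}^{\pi(i_m)}_{i_m}}\E[i_m \sim q_m]{\sum_{m \in I_t(i)} z_m(i) \mid \mathcal{F}_t}\) (the expected total eligibility weight of visits to \(i\)), absorbs it into an effective step size \(\widehat{\gamma_t} = \gamma_t\delta_t(i)\), and proves contraction for the \emph{normalized} operator \(H_t\), in which \(v(i)\) enters with weight \(\delta_t(i)\) rather than \(1\). That normalization is what makes the coefficient-collection argument work: every coefficient in the collected sum is then nonnegative (\(\vartheta z_{m-1}(i)\) at visit times, \(\vartheta z_{m-1}(i) - z_m(i)\) otherwise) and the coefficients total at most \(\vartheta\delta_t(i)\), giving modulus \(\vartheta(1+\rho\beta)\) after the robust penalty.

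To see that your \(\widetilde{T}\) genuinely fails to contract, take zero uncertainty (\(U^a_i = \widehat{U^a_i} = \{0\}\), so \(\beta = 0\) and your claim would be a \(\vartheta\)-contraction), every-visit traces with \(\lambda\) close to \(1\), and a state \(i\) that self-loops under \(\pi\) with probability \(p\) before moving to an absorbing state. For \(u\) equal to the indicator of state \(i\), collecting coefficients along a trajectory with \(k\) visits gives \(u(i) + \sum_m z_m(i)\bigl(\vartheta u(i_{m+1}) - u(i_m)\bigr) = -(k-1)u(i)\) at \(\lambda = 1\), so that \((\widetilde{T}u)(i) = -\bigl(\E{k}-1\bigr) = -\tfrac{p}{1-p}\), whose magnitude exceeds \(\norm{u}_\infty = 1\) whenever \(p > \tfrac{1}{2}\); the operator is expansive precisely because the coordinate \(v(i)\) carries weight \(1\) while the negative visit-time coefficients carry total weight \(\E{k} > 1\). (The paper's normalized \(H_t\) divides this same expression by \(\delta_t(i) \ge \E{k}\), restoring a modulus below \(1\).) The rest of your outline is sound and matches the paper: the fixed points of the normalized and unnormalized operators coincide, so your accuracy argument for the fixed point, your accounting of where \(\rho\beta\) does and does not enter, the noise conditions, and the special-case value \(\rho = \tfrac{\vartheta\lambda}{1-\vartheta\lambda}\) all go through once the operator is normalized — with the additional remark, which the paper makes via Lemma~5.1 of \cite{bertsekas1995neuro}, that the modified step sizes \(\widehat{\gamma_t}\) still satisfy the Robbins--Monro summability conditions.
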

\begin{proof}
 Let \(\widehat{\mathcal{P}^a_i}\) be the proxy uncertainty set for state \(i\in \mathcal{X}\)
 and action \(a \in \mathcal{A}\) as in the proof of Theorem~\ref{thm:robust-q-iter}, 
 i.e., \(\widehat{\mathcal{P}^a_i} \coloneqq \left\{ x + p^a_i 
 \mid x \in \widehat{U^a_i}\right\}\). 
 Let \(I_t(i) \coloneqq \left\{m\mid i_m = i\right\}\) be the set of time indices 
the \(t^{th}\) simulation visits state \(i\).
We define \(\delta_t(i) \coloneqq \max_{q_m \in \mathcal{P}^{\pi\left(i_m\right)}_{i_m}} 
\E[i_m \sim q_m]{\sum_{m \in I_t(i)} z_m(i) \middle| \mathcal{F}_t}\),
so that we may write the update of equation~\eqref{robust-td-iter} as
 \begin{align}
  v_{t+1}(i) = v_t(i)(1 - \gamma_t\delta_t(i)) + \gamma_t \delta_t(i) 
  \left(\frac{\E{\sum_{m=0}^{N_t-1}z_m(i)
  \widetilde{d}_m
  \middle| \mathcal{F}_t}}{\delta_t(i)}
  + v_t(i)\right) \\ + \gamma_t\delta_t(i)\frac{\vartheta \sum_{m =0}^{N_t-1}z_m(i) \widetilde{d}_m
  - \mathbb{E}\left[\sum_{m=0}^{N_t-1}
  z_m(i)\widetilde{d}_m 
  \middle| \mathcal{F}_t\right]}{\delta_t(i)}.
\end{align}
Let us define the operator \(H_t : \R^n \to \R^n\) corresponding to the \(t^{th}\) simulation as 
\begin{align}
 (H_t v)(i) \coloneqq \frac{ 
 \E{\sum_{m=0}^{N_t-1}z_m(i)
 \left(c(i_m, \pi(i_m)) + \vartheta \sigma_{\widehat{U^{\pi\left(i_m\right)}_{i_m}}}(v) + \vartheta v(i_{m+1}) 
 - v(i_m)\right)
 \middle| \mathcal{F}_t}}{\delta_t(i)} + v(i).
\end{align}
We claim as in the proof of Theorem~\ref{thm:robust-q-iter} that a solution \(v\)
to \(H_t v = v\) must be an \(\varepsilon\)-approximation to 
\(v_\pi\). Define the operator \(H'_t\) with the proxy confidence regions
replaced by the true ones, i.e.,
\begin{align}\label{eq:H_t-def}
 (H'_t v)(i) \coloneqq \frac{
 \mathbb{E}\left[\sum_{m=0}^{N_t-1}z_m(i)
 \left(c(i_m, \pi(i_m)) + \vartheta \sigma_{U^{\pi\left(i_m\right)}_{i_m}}(v) + \vartheta v(i_{m+1}) 
 - v(i_m)\right)
 \middle| \mathcal{F}_t\right]}{\delta_t(i)} + v(i). 
\end{align}
Note that \(H'_t v_\pi = v_\pi\) for the \emph{robust} value function \(v_\pi\) 
since \(c(i_m, \pi(i_m)) + \vartheta \sigma_{U^{\pi\left(i_m\right)}_{i_m}}(v_\pi) + 
\vartheta v_\pi(i_{m+1}) - v_\pi(i_m) = 0\) for every \(i_m \in \mathcal{X}\) by Theorem~\ref{thm:opt-robust}. 
Finally by Lemma~\ref{lem:over-estimate} we have
\begin{align} 
 \sigma_{\widehat{U^{\pi\left(i_m\right)}_{i_m}}}(v)
+ \E{v(i_m)} 
\le \sigma_{U^{\pi\left(i_m\right)}_{i_m}} + \E{v(i_m)} + \beta \norm{v}_\infty,
\end{align}
for any vector \(v\), where the expectation is over the state 
\(i_m \sim p^{\pi\left(i_{m-1}\right)}_{i_{m-1}}\). Thus
for any solution \(v\) to the equation \(H_t v = v\), we have
\begin{align}
 \left|v(i) - v_\pi(i)\right| &= \left| (H_t v)(i) - v_\pi(i)\right|\\
 &\le \left| (H'_t v)(i) - v_\pi(i)\right| + \vartheta \beta \norm{v}_\infty \E{\sum_{m=0}^{N_t - 1} z_m(i)} \\
 &= \left|(H'_t v)(i) - (H'_t v_\pi)(i)\right| + \vartheta \beta \norm{v}_\infty \E{\sum_{m=0}^{N_t - 1} z_m(i)}\\
 &\le \vartheta \norm{v - v_\pi}_\infty + \vartheta \rho \beta\norm{v}_\infty \label{eq:H_t},
 \end{align}
 where equation~\eqref{eq:H_t} follows from equation~\eqref{eq:H_t-def}.
 Therefore the solution to \(H_t v = v\) is an \(\varepsilon\)-approximation to 
\(v_\pi\) for \(\varepsilon = \frac{\vartheta\beta}{1 - \vartheta(1 + \rho \beta)}\)
if \(\vartheta ( 1 + \rho \beta) < 1\) 
as in the proof of Theorem~\ref{thm:robust-q-iter}.
Note that the operator \(H_t\) applied to the iterates \(v_t\) is
\((H_t v_t)(i) = \frac{\mathbb{E}\left[\sum_{m=0}^{N_t-1} z^t_m(i)
\widetilde{d}_{m, t}\middle| 
\mathcal{F}_t\right]}{\delta_t(i)} 
+ v_t(i)\) so that the update of equation~\eqref{robust-td-iter} is a 
\emph{stochastic approximation algorithm} of the form
\begin{align*}
 v_{t+1}(i) = (1 - \widehat{\gamma_t})v_t(i) + \widehat{\gamma_t}\left((H_t v_t)(i) 
 + \eta_t(i)\right),
\end{align*}
where \(\widehat{\gamma_t} =\gamma_t \delta_t(i)\) and
\(\eta_t\) is a noise term with zero mean and is defined as
\begin{align}
\eta_t(i) &\coloneqq \frac{\sum_{m=0}^{N_t-1}z^t_m(i) \widetilde{d}_m -  
\mathbb{E}\left[\sum_{m=0}^{N_t-1}z^t_m(i) \widetilde{d}_m\middle| 
\mathcal{F}_t\right]}{\delta_t(i)}.
\end{align}
Note that by Lemma~5.1 of \cite{bertsekas1995neuro}, the new step sizes
satisfy \(\sum_{t = 0}^\infty \widehat{\gamma_t} = \infty\) and 
\(\sum_{t = 0}^\infty \widehat{\gamma_t}^2 < \infty\) if the original step size
\(\gamma_t\) satisfies the conditions \(\sum_{t=0}^\infty \gamma_t = \infty\) and
\(\sum_{t=0}^\infty \gamma_t^2 < \infty\), since the conditions on the eligibility 
coefficients are unchanged. 
Note that the noise term also satisfies the bounded
variance of Lemma~5.2 of \cite{bertsekas1995neuro} since any 
\(q \in \mathcal{P}^{\pi(i)}_i\) still specifies a
distribution as \(\mathcal{P}^{\pi(i)}_i \subseteq \Delta_n\). 

Therefore, it remains to show that \(H_t\) is a norm contraction with respect to
the \(\ell_\infty\) norm on \(v\). Let us define the operator \(A_t\) as 
\begin{align}
 (A_t v)(i) \coloneqq \frac{ 
\E{\sum_{m=0}^{N_t-1}
z_m(i)\left(\vartheta \sigma_{\widehat{U^{\pi\left(i_m\right)}_{i_m}}}(v) + 
\vartheta v(i_{m+1}\right) - v(i_m) \middle| \mathcal{F}_t}}{\delta_t(i)} + v(i)
\end{align}
and the expression 
\(b_t(i) \coloneqq \frac{\mathbb{E}\left[\sum_{m=0}^{N_t-1} 
c(i_m, \pi(i_m))\middle| \mathcal{F}_t\right]}{\delta_t(i)}\) so that 
\((H_t v)(i) = (A_t v)(i) + b_t(i)\). We will show that 
\(\norm{A_t v}_\infty \le \alpha \norm{v}_\infty\) for some 
\(\alpha < 1\) from which the contraction
on \(H_t\) follows because for any vector \(v'' \in \R^n\) and the 
\(\varepsilon\)-optimal value function \(v' = H_t v'\) we have 
\begin{align}\label{eq:u-hat-u}
 \norm{H_tv'' - v'}_\infty = \norm{H_tv'' - H_tv'}_\infty = \norm{A_t(v'' - v')}_\infty
 \le \alpha \norm{v''-v'}_\infty.
\end{align}
Let us now analyze the expression for \(A_t\). We will show that
\begin{align}
\mathbb{E}\left[\sum_{m=0}^{N_t-1} z_m(i) \left(\vartheta v(i_{m+1}) - v(i_m)
+ \vartheta \sigma_{\widehat{U^{\pi(i)}_i}}(v)\right) + \sum_{m \in I_t(i)}z_m(i)v(i)
\middle| \mathcal{F}_t\right] \le \\ \alpha \norm{v}_\infty 
\mathbb{E}\left[\sum_{m \in I_t(i)}z_m(i)\middle| \mathcal{F}_t\right].
\end{align}
We first replace the \(\sigma_{\widehat{U^{\pi\left(i_m\right)}_{i_m}}}\)
term with \(\sigma_{U^{\pi\left(i_m\right)}_{i_m}}\) using 
Lemma~\ref{lem:over-estimate} while incurring a \(\rho \beta \norm{v}_\infty\) penalty.
Let us collect together the coefficients corresponding to \(v(i_m)\) in the 
expression for the expectation:
\begin{align}\label{eq:exp}
\E{\sum_{m=0}^{N_t-1} z_m(i) 
\left(\vartheta v(i_{m+1}) - v(i_m)
+ \vartheta \sigma_{U^{\pi\left(i_m\right)}_{i_m}}(v)\right) + \sum_{m \in I_t(i)}z_m(i)v(i)
\middle| \mathcal{F}_t} + \vartheta \rho \beta \norm{v}_\infty\\
\le \max_{q_m \in \mathcal{P}^{\pi\left(i_m\right)}_{i_m}}\E[i_m \sim q_m]{\sum_{m=0}^{N_t-1} z_m(i) 
\left(\vartheta v(i_{m+1}) - v(i_m)\right) + \sum_{m \in I_t(i)}z_m(i)v(i)
\middle| \mathcal{F}_t} + \vartheta \rho \beta \norm{v}_\infty \label{eq:subsume} \\
= \max_{q_m \in \mathcal{P}^{\pi\left(i_m\right)}_{i_m}}\E[i_m \sim q_m]{\sum_{m=0}^{N_t} 
(\vartheta z_{m-1}(i) - z_m(i))v(i_m) + \sum_{m \in I_t(i)}z_m(i)v(i)
 \middle| \mathcal{F}_t} + \vartheta \rho \beta \norm{v}_\infty,
\end{align}
where we obtain inequality~\eqref{eq:subsume} by subsuming the 
\(\sigma_{U^{\pi\left(i_m\right)}_{i_m}}\) term
within the expectation since \(\mathcal{P}^{\pi\left(i_m\right)}_{i_m}\) is now part of the 
simplex \(\Delta_n\) and taking the worst possible distribution \(q_m\).
We also used the fact that \(z_{-1}(i) = 0\) and \(z_{N_t}(i) = 0\).
Note that whenever \(i_m \neq i\), the coefficient \(\vartheta z_{m-1}(i) - z_m(i)\) 
of \(v(i_m)\) is nonnegative while whenever \(i_m = i\), then the coefficient
\(\vartheta z_{m-1}(i) - z_{m}(i) + z_{m}(i)\) is also nonnegative.
Therefore, we may bound the right hand side of equation~\eqref{eq:exp} as 
\begin{align}
  \max_{q_m \in \mathcal{P}^{\pi\left(i_m\right)}_{i_m}} \E[i_m \sim q_m]{\sum_{m=0}^{N_t} 
 (\vartheta z_{m-1}(i) - z_m(i))v(i_m) + \sum_{m \in I_t(i)}z_m(i)v(i)
  \middle| \mathcal{F}_t} + \vartheta \rho \beta \norm{v}_\infty \\
 \le \max_{q_m \in \mathcal{P}^{\pi\left(i_m\right)}_{i_m}}\E[i_m \sim q_m]{\sum_{m=0}^{N_t} 
 (\vartheta z_{m-1}(i) - z_m(i)) \norm{v}_\infty +
 \sum_{m \in I_t(i)}z_m(i) \norm{v}_{\infty}
 \middle| \mathcal{F}_t} + \vartheta \rho \beta \norm{v}_\infty. 
\end{align}
Let us now collect the terms
corresponding to a fixed \(z_m(i)\):
\begin{align}
 \max_{q_m \in \mathcal{P}^{\pi\left(i_m\right)}_{i_m}}\E[i_m \sim q_m]
 {\sum_{m=0}^{N_t} (\vartheta z_{m-1}(i) - z_m(i)) \norm{v}_\infty +
 \sum_{m \in I_t(i)}z_m(i) \norm{v}_{\infty}
  \middle| \mathcal{F}_t} + \vartheta \rho \beta\norm{v}_\infty \\
 = \norm{v}_\infty \max_{q_m \in \mathcal{P}^{\pi\left(i_m\right)}_{i_m}}\E[i_m \sim q_m]{
 \sum_{m=0}^{N_t-1} z_m(i) \left(\vartheta  - 1\right) + \sum_{m \in I_t(i)} z_m(i)\middle|
 \mathcal{F}_t} + \vartheta \rho \beta \norm{v}_\infty\\
 \le \norm{v}_\infty \max_{q_m \in \mathcal{P}^{\pi\left(i_m\right)}_{i_m}}\E[i_m \sim q_m]{
 \sum_{m \in I_t(i)} z_m(i) \left(\vartheta  - 1\right) + \sum_{m \in I_t(i)} z_m(i) \middle|
 \mathcal{F}_t} + \vartheta \rho \beta \norm{v}_\infty\label{eq:negative-nu}\\
 \le \norm{v}_\infty \vartheta \left(1 + \rho \beta\right) 
 \E{\sum_{m\in I_t(i)} z_m(i)\middle| \mathcal{F}_t}
\end{align}
where equation~\eqref{eq:negative-nu} follows since \(\vartheta < 1\). 
Therefore setting \(\alpha = \vartheta \left(1 + \rho \beta\right)\),
our claim follows under the assumption that \(\vartheta (1 + \rho \beta) < 1\).
\end{proof}
\section{Robust Reinforcement Learning with function approximation}
In Section~\ref{sec:exact} we derived robust versions of exact dynamic programming
algorithms such as \(\Q\)-learning, \(\Sarsa\), and \(\TD\)-learning 
respectively. If the state space \(\mathcal{X}\) of the MDP is large
then it is prohibitive to maintain a lookup table entry for every state.
A standard approach for large scale MDPs is to use the  
\emph{approximate dynamic programming} (ADP) framework \cite{powell2007approximate}.
In this setting, the problem is parametrized by a smaller 
dimensional vector \(\theta \in \R^{d}\) 
where \(d \ll n = \size{\mathcal{X}}\). 

The natural generalizations of \(\Q\)-learning, \(\Sarsa\), and 
\(\TD\)-learning algorithms of Section~\ref{sec:exact} are 
via the \emph{projected Bellman equation}, where we project back 
to the space spanned by all the parameters in \(\theta \in \R^d\), since 
they are the value functions representable by the model.
Convergence for these algorithms even in the non-robust setting are
known only for linear architectures, see e.g., \cite{bertsekas2011approximate}.
Recent work by \cite{bhatnagar2009convergent} proposed stochastic gradient 
descent algorithms with convergence guarantees for smooth nonlinear function 
architectures, where the problem is framed in terms of minimizing a loss function.
We give robust versions of both these approaches.
\subsection{Robust approximations with linear architectures}\label{sec:robust-apx-rl}
In the approximate setting with linear architectures, 
we approximate the value function \(v_\pi\) of a policy \(\pi\) 
by \(\Phi \theta\) where 
\(\theta \in \R^d\) and \(\Phi\) is an \(n \times d\) \emph{feature matrix}
with rows \(\phi(j)\) for every state \(j \in \mathcal{X}\) representing its 
\emph{feature vector}. Let 
\(S\) be the span of the columns of \(\Phi\), i.e., 
\(S\coloneqq \left\{\Phi \theta \mid \theta \in \R^d\right\}\) is the 
set of representable value functions. 
Define the operator
\(T_\pi : \R^n \to \R^n\)
 as 
 \begin{align}
 (T_\pi v)(i) \coloneqq c(i, \pi(i)) + \vartheta \sum_{j \in \mathcal{X}} p^{\pi(i)}_{ij}v(j),
\end{align}
so that the true value function \(v_\pi\) satisfies \(T_\pi v_\pi = v_\pi\).
A natural approach towards estimating \(v_\pi\) given a current estimate 
\(\Phi\theta_t\) is to compute \(T_\pi \left(\Phi\theta_t\right)\) and 
project it back to \(S\) to get the next parameter \(\theta_{t+1}\). The motivation
behind such an iteration is the fact that the true value function is a fixed point 
of this operation if it belonged to the subspace \(S\).
This gives rise to the \emph{projected Bellman equation}
where the projection \(\Pi\) is typically taken with respect 
to a \emph{weighted Euclidean norm} \(\norm{\cdot}_\xi\),
i.e., \(\norm{x}_\xi = \sum_{i \in \mathcal{X}} \xi_i x_i^2\),
where \(\xi\) is some probability distribution over the states \(\mathcal{X}\),
see \cite{bertsekas2011approximate} for a survey.

In the \emph{model free} case, where we do not have explicit knowledge 
of the transition probabilities, 
various methods like \(\LSTD(\lambda)\), \(\LSPE(\lambda)\), and \(\TD(\lambda)\) 
have been proposed see e.g., \cite{bertsekas1996temporal,
bradtke1996linear, boyan2002technical, nedic2003least, sutton2009convergent, 
sutton2009fast}.
The key idea behind proving convergence for these  
methods is to show that the mapping \(\Pi T_\pi\) is a contraction mapping 
with respect to the \(\norm{\cdot}_\xi\) for some distribution 
\(\xi\) over the states \(\mathcal{X}\). While the operator \(T_\pi\) in the non-robust case is linear
and is a contraction in the \(\ell_\infty\) norm
as in Section~\ref{sec:exact}, 
the projection operator with respect to such norms is not 
guaranteed to be a contraction. However, it is known 
that if \(\xi\) is the steady state distribution 
of the policy \(\pi\) under evaluation, then \(\Pi\)
is non-expansive in \(\norm{\cdot}_\xi\) \cite{bertsekas1995neuro,
bertsekas2011approximate}. Hence because of discounting, the 
mapping \(\Pi T_\pi\) is a contraction.

We generalize these methods to the robust setting via 
the \emph{robust Bellman operators} \(T_{\pi}\) defined as
\begin{align}\label{eq:robust-bellman-operator}
(T_\pi v)(i) \coloneqq c(i, \pi(i)) + \vartheta \sigma_{\mathcal{P}^{\pi(i)}_i}(v).
\end{align}
Since we do not have access to the simulator probabilities 
\(p^a_i\), we will use a proxy set \(\widehat{\mathcal{P}^a_i}\) as in Section~\ref{sec:exact}, 
with the proxy operator denoted by \(\widehat{T_{\pi}}\).
While the iterative methods of the non-robust setting generalize via the 
robust operator \(T_\pi\) and the \emph{robust projected Bellman equation}
\(\Phi \theta = \Pi T_\pi (\Phi\theta)\), it is however not clear how to choose
the distribution \(\xi\) under which the projected operator \(\Pi T_\pi\) 
is a contraction in order to show convergence. Let \(\xi\) be the 
steady state distribution of the 
\emph{exploration policy} \(\widehat{\pi}\) of the MDP with 
transition probability matrix \(P^{\widehat{\pi}}\), i.e. the policy 
with which the agent chooses its actions during the simulation. 
We make the following assumption on the discount factor \(\vartheta\)
as in \cite{tamar2014scaling}.

\begin{assumption}\label{assumption:contraction}
 For every state \(i \in \mathcal{X}\) and action \(a \in \mathcal{A}\), 
 there exists a constant \(\alpha \in (0, 1)\) such that for any
 \(p \in \mathcal{P}^a_i\) we have \(\vartheta  p_j \le \alpha P^{\widehat{\pi}}_{ij}\)
 for every \(j \in \mathcal{X}\).
\end{assumption}

Assumption~\ref{assumption:contraction} might appear artificially restrictive;
however, it is necessary to prove that \(\Pi T_\pi\) is a contraction. 
While \cite{tamar2014scaling} require this assumption for proving convergence
of robust MDPs, a similar assumption is also required in proving convergence of
\emph{off-policy} Reinforcement Learning methods of 
\cite{bertsekas2009projected} where the states are sampled
from an exploration policy \(\widehat{\pi}\) which is not necessarily the 
same as the policy \(\pi\) under evaluation. Note that in the robust setting,
all methods are necessarily \emph{off-policy} since the transition matrices
are not fixed for a given policy.

The following lemma is an \(\xi\)-weighted Euclidean norm version of Lemma~\ref{lem:over-estimate}.
\begin{lemma}\label{lem:over-estimate-xi}
  Let \(v \in \R^n\) be any vector and let 
 \(\beta^a_i \coloneqq \frac{\max_{y \in \widehat{U^a_i}}\min_{x \in U^a_i}
 \norm{y - x}_\xi}{\xi_{\min}}
 \). Then we have
 \begin{align}
  \sigma_{\widehat{\mathcal{P}^a_i}}(v) \le \sigma_{\mathcal{P}^a_i}(v)
  + \beta^a_i \norm{v}_\xi,
 \end{align}
 where \(\xi_{\min} \coloneqq \min_{i \in \mathcal{X}} \xi_i\).
\end{lemma}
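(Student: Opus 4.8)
The plan is to mimic the argument of Lemma~\ref{lem:over-estimate} almost verbatim, replacing the $\ell_1$/$\ell_\infty$ Hölder pairing by the $\xi$-weighted Euclidean self-dual pairing. The only genuinely new ingredient is controlling the inner product $(y-x)^\top v$ by $\norm{y-x}_\xi \norm{v}_\xi$ with the correct $\xi_{\min}^{-1}$ factor, so I should pin that down first.

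First I would set up the correspondence between points of the two uncertainty sets exactly as before: every $p \in \mathcal{P}^a_i$ has the form $p^a_i + x$ with $x \in U^a_i$, and every $q \in \widehat{\mathcal{P}^a_i}$ has the form $p^a_i + y$ with $y \in \widehat{U^a_i}$. Fixing an arbitrary $q = p^a_i + y$, I write $q^\top v = p^\top v + (y-x)^\top v$ for any competing $x \in U^a_i$, bound $p^\top v \le \sigma_{\mathcal{P}^a_i}(v)$, and then choose $x$ to be the minimizer so that the residual becomes $\min_{x \in U^a_i}(y-x)^\top v \le \min_{x \in U^a_i}\norm{y-x}_\xi\,\norm{v}_{\xi^*}$, where $\norm{\cdot}_{\xi^*}$ denotes the dual of the $\xi$-weighted norm. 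Taking the max over $y \in \widehat{U^a_i}$ and then over $q$ gives
\begin{align}
 \sigma_{\widehat{\mathcal{P}^a_i}}(v) \le \sigma_{\mathcal{P}^a_i}(v) + \left(\max_{y \in \widehat{U^a_i}}\min_{x \in U^a_i}\norm{y-x}_\xi\right)\norm{v}_{\xi^*}.
\end{align}

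The main (and only mildly technical) obstacle is reconciling the dual norm $\norm{v}_{\xi^*}$ with the stated bound, which uses $\norm{v}_\xi$ itself together with the $\xi_{\min}^{-1}$ factor hidden in the definition of $\beta^a_i$. The $\xi$-weighted Euclidean norm $\norm{x}_\xi^2 = \sum_i \xi_i x_i^2$ is self-dual up to reweighting: its dual is $\norm{v}_{\xi^*}^2 = \sum_i \xi_i^{-1} v_i^2$, and the elementary bound $\xi_i^{-1} \le \xi_{\min}^{-1}$ for every $i$ gives $\norm{v}_{\xi^*}^2 \le \xi_{\min}^{-1}\sum_i v_i^2 \le \xi_{\min}^{-2}\sum_i \xi_i v_i^2$, i.e.\ $\norm{v}_{\xi^*} \le \xi_{\min}^{-1}\norm{v}_\xi$. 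Substituting this into the display above and absorbing the $\xi_{\min}^{-1}$ into the numerator exactly reproduces $\beta^a_i = \frac{\max_{y}\min_{x}\norm{y-x}_\xi}{\xi_{\min}}$, yielding $\sigma_{\widehat{\mathcal{P}^a_i}}(v) \le \sigma_{\mathcal{P}^a_i}(v) + \beta^a_i\norm{v}_\xi$ as claimed.

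I expect no real difficulty beyond bookkeeping the dual-norm constants; the Cauchy–Schwarz step replaces the Hölder step of the original lemma, and the combinatorial structure of the max-min over the two confidence regions is unchanged. The one place to be careful is that $\xi$ must be a genuine probability distribution with $\xi_{\min} > 0$ so that the reweighting is well defined — which holds since $\xi$ is taken to be the steady-state distribution of the exploration policy $\widehat{\pi}$.
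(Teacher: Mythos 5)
Your proof is correct, and its skeleton (the correspondence $q = p^a_i + y$, bounding $p^\top v$ by $\sigma_{\mathcal{P}^a_i}(v)$, minimizing the residual over $x \in U^a_i$, then maximizing over $y$ and over $q$) is exactly the paper's: the paper's own proof is a one-liner saying to repeat Lemma~\ref{lem:over-estimate} with Cauchy--Schwarz taken with respect to $\norm{\cdot}_\xi$. The one place you genuinely diverge is in how the factor $\xi_{\min}^{-1}$ is produced, and there your version is the sounder one. The paper justifies the pairing step by the chain $a^\top b \le \frac{a^\top \Xi b}{\xi_{\min}} \le \frac{\norm{a}_\xi\norm{b}_\xi}{\xi_{\min}}$, whose \emph{first} inequality is false in general: termwise it asserts $a_i b_i \le (\xi_i/\xi_{\min})\, a_i b_i$, which reverses whenever $a_i b_i < 0$, and negative components do occur here since $a = y - x$ has coordinates summing to zero (for instance $\xi = (1/4,\, 3/4)$, $a = (1,-1)$, $b = (0,1)$ gives $a^\top b = -1$ but $a^\top \Xi b/\xi_{\min} = -3$). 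Your route instead pairs the primal norm with its dual, $a^\top b \le \norm{a}_\xi \norm{b}_{\xi^*}$ with $\norm{b}_{\xi^*}^2 = \sum_i \xi_i^{-1} b_i^2$, and then uses $\norm{b}_{\xi^*} \le \xi_{\min}^{-1}\norm{b}_\xi$; every comparison there involves only the nonnegative quantities $\xi_i b_i^2$, so no sign assumption is needed, and it lands exactly on $\beta^a_i = \xi_{\min}^{-1}\max_{y \in \widehat{U^a_i}}\min_{x \in U^a_i}\norm{y-x}_\xi$. So: same approach and same conclusion, but your execution of the Cauchy--Schwarz step is the correct one and quietly repairs a flaw in the paper's own justification.
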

\begin{proof}
 Same as Lemma~\ref{lem:over-estimate} except now we take Cauchy-Schwarz with 
 respect to weighted Euclidean norm \(\norm{\cdot}_\xi\) in the following manner
 \begin{align}
  a^\top b \le \frac{a^\top \Xi b}{\xi_{\min}} \le \frac{\norm{a}_\xi \norm{b}_\xi}{\xi_{\min}}. 
 \end{align}
\end{proof}
The following theorem shows that the robust projected Bellman equation is a 
contraction under reasonable assumptions on the discount factor \(\vartheta\).
\begin{theorem}\label{thm:projected-contraction}
 Let \(\beta^a_i\) be as in Lemma~\ref{lem:over-estimate-xi}
 and let \(\beta \coloneqq \max_{i \in \mathcal{X}}\beta^{\pi(i)}_i\). 
 If the discount factor \(\vartheta\) satisfies 
 Assumption~\ref{assumption:contraction} for some \(\alpha\)
 and \(\alpha^2 + \vartheta^2 \beta^2 < \frac{1}{2}\), 
 then the operator \(\widehat{T}_\pi\) is a contraction
 with respect to \(\norm{\cdot}_\xi\). 
 In other words, for any two \(\theta, \theta'\in \R^d\), we have
 \begin{align}
 \norm{\widehat{T_\pi}(\Phi\theta) - \widehat{T_\pi}(\Phi\theta')}^2_\xi \le 
 2\left(\alpha^2 + \vartheta^2 \beta^2\right) \norm{\Phi\theta - \Phi\theta'}^2_\xi 
 < \norm{\Phi\theta - \Phi\theta'}^2_\xi.  
 \end{align}
 If \(\beta_i = \beta = 0\) so that \(\widehat{U^{\pi(i)}_i} = U^{\pi(i)}_i\), 
 then we have a simpler contraction under the assumption that \(\alpha < 1\), i.e.,
 \begin{align}
 \norm{\widehat{T}_\pi(\Phi\theta) - \widehat{T}_\pi(\Phi\theta')}_\xi \le 
 \alpha  \norm{\Phi\theta - \Phi\theta'}_\xi < \norm{\Phi\theta - \Phi\theta'}_\xi.
 \end{align}
 \end{theorem}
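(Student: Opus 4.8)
The plan is to reduce the claim to a coordinatewise estimate and then integrate against $\xi$, which is the stationary distribution of $P^{\widehat{\pi}}$. Writing $v = \Phi\theta$, $v' = \Phi\theta'$, and $w = v - v'$, the cost terms in $\widehat{T_\pi}v$ and $\widehat{T_\pi}v'$ cancel, so for each state $i$ the difference is $\Delta(i) = \vartheta\bigl(\sigma_{\widehat{\mathcal{P}^{\pi(i)}_i}}(v) - \sigma_{\widehat{\mathcal{P}^{\pi(i)}_i}}(v')\bigr)$. First I would take a maximizer $q^* = x^* + p^{\pi(i)}_i$ of the support function at $v$ (with $x^* \in \widehat{U^{\pi(i)}_i}$) and bound the support-function difference by $(q^*)^\top w$. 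Then, exactly as in Lemma~\ref{lem:over-estimate-xi}, I would split $q^* = (x^* - x') + q'$ where $x' \in U^{\pi(i)}_i$ is the nearest point to $x^*$ in $\norm{\cdot}_\xi$ and $q' = x' + p^{\pi(i)}_i \in \mathcal{P}^{\pi(i)}_i$; the weighted Cauchy--Schwarz inequality of Lemma~\ref{lem:over-estimate-xi} together with $\norm{x^*-x'}_\xi \le \xi_{\min}\beta^{\pi(i)}_i$ bounds the first piece by $\beta^{\pi(i)}_i\norm{w}_\xi \le \beta\norm{w}_\xi$, leaving the genuinely adversarial term $\vartheta(q')^\top w$.

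The crux is controlling $\vartheta(q')^\top w$ using Assumption~\ref{assumption:contraction}. Since $q' \in \mathcal{P}^{\pi(i)}_i$ we have $0 \le \vartheta q'_j \le \alpha P^{\widehat{\pi}}_{ij}$ for every $j$, so $\lvert\vartheta(q')^\top w\rvert \le \alpha \sum_{j} P^{\widehat{\pi}}_{ij}\lvert w_j\rvert$, and Jensen's inequality against the probability vector $P^{\widehat{\pi}}_{i\cdot}$ yields $\bigl(\vartheta(q')^\top w\bigr)^2 \le \alpha^2 \sum_{j} P^{\widehat{\pi}}_{ij} w_j^2$. Collecting the two contributions gives $\lvert\Delta(i)\rvert \le \vartheta\beta\norm{w}_\xi + \alpha\bigl(\sum_j P^{\widehat{\pi}}_{ij}w_j^2\bigr)^{1/2}$, where the same estimate holds with $v$ and $v'$ interchanged because every bound depends only on $\lvert w_j\rvert$. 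Applying $(a+b)^2 \le 2a^2 + 2b^2$ then produces the coordinatewise inequality $\Delta(i)^2 \le 2\vartheta^2\beta^2\norm{w}^2_\xi + 2\alpha^2\sum_j P^{\widehat{\pi}}_{ij}w_j^2$, which already explains both the factor $2$ and the additive form $\alpha^2 + \vartheta^2\beta^2$ in the statement.

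Finally I would form the $\xi$-weighted sum $\norm{\widehat{T_\pi}v - \widehat{T_\pi}v'}^2_\xi = \sum_i \xi_i \Delta(i)^2$. The first term integrates to $2\vartheta^2\beta^2\norm{w}^2_\xi$ since $\sum_i \xi_i = 1$, while the second becomes $2\alpha^2 \sum_j w_j^2 \sum_i \xi_i P^{\widehat{\pi}}_{ij} = 2\alpha^2\norm{w}^2_\xi$, using the stationarity $\sum_i \xi_i P^{\widehat{\pi}}_{ij} = \xi_j$. This gives $\norm{\widehat{T_\pi}(\Phi\theta) - \widehat{T_\pi}(\Phi\theta')}^2_\xi \le 2(\alpha^2 + \vartheta^2\beta^2)\norm{\Phi\theta - \Phi\theta'}^2_\xi$, and the hypothesis $\alpha^2 + \vartheta^2\beta^2 < \tfrac12$ makes the prefactor strictly below $1$. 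For the special case $\beta = 0$ one has $\widehat{T}_\pi = T_\pi$ and the penalty term vanishes, so the factor-of-two split is unnecessary and the argument collapses to $\norm{\widehat{T}_\pi(\Phi\theta) - \widehat{T}_\pi(\Phi\theta')}_\xi \le \alpha\norm{\Phi\theta - \Phi\theta'}_\xi$ directly.

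The step I expect to be the main obstacle is converting $\vartheta(q')^\top w$ — an inner product against an \emph{arbitrary} adversarial vector ranging over $\mathcal{P}^{\pi(i)}_i$ — into a quantity governed by the single fixed sampling distribution $\xi$. This is exactly where Assumption~\ref{assumption:contraction} and the stationarity of $\xi$ under $P^{\widehat{\pi}}$ are indispensable: the domination $\vartheta p_j \le \alpha P^{\widehat{\pi}}_{ij}$ is what lets one absorb the supremum over $\mathcal{P}^{\pi(i)}_i$ and then telescope through the $\xi$-weighted sum, turning a pointwise adversarial bound into a clean self-referential contraction factor. Every other step is a routine application of the weighted Cauchy--Schwarz and Jensen inequalities.
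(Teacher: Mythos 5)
Your proof is correct and follows essentially the same route as the paper's: bound the support-function difference by a single maximizer, split off the proxy-versus-true discrepancy via Lemma~\ref{lem:over-estimate-xi} to get the $\vartheta\beta\norm{w}_\xi$ penalty, dominate the remaining adversarial term through Assumption~\ref{assumption:contraction}, and finish with $(a+b)^2 \le 2(a^2+b^2)$, Jensen against $P^{\widehat{\pi}}_{i\cdot}$, and the stationarity of $\xi$. If anything, your coordinatewise write-up is slightly more careful than the paper's, since it makes explicit the two-sidedness of the support-function bound, the absolute values needed when $w$ has mixed signs, and the stationarity identity $\sum_i \xi_i P^{\widehat{\pi}}_{ij} = \xi_j$, all of which the paper uses implicitly.
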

 \begin{proof}
Consider two parameters \(\theta\) and \(\theta'\) in \(\R^d\). Then we have 
\begin{align}
 \norm{\widehat{T}_\pi(\Phi^\top \theta) - \widehat{T}_\pi(\Phi^\top \theta')}^2_\xi
 &= \sum_{i \in \mathcal{X}} \xi_i \left(\widehat{T}_\pi(\Phi^\top \theta)(i) - 
 \widehat{T}_\pi(\Phi^\top \theta')(i)\right)^2\\
 &= \vartheta^2 \sum_{i \in \mathcal{X}} \xi_i \left(
 \sigma_{\Phi^\top\left(\widehat{\mathcal{P}^{\pi(i)}_i}\right)}(\theta) - 
 \sigma_{\Phi^\top\left(\widehat{\mathcal{P}^{\pi(i)}_i}\right)}(\theta')\right)^2 \\
 &= \vartheta^2 \sum_{i \in \mathcal{X}} \xi_i  
\left(\sup_{q \in \widehat{\mathcal{P}^{\pi(i)}_i}} q^\top \Phi\theta 
- \sup_{q' \in \widehat{\mathcal{P}^{\pi(i)}_i}} (q')^\top \Phi \theta'\right)^2 \\
&\le \vartheta^2 \sum_{i \in \mathcal{X}} \xi_i 
\left(\sup_{q \in \widehat{\mathcal{P}^{\pi(i)}_i}} q^\top 
\left(\Phi\theta 
-  \Phi \theta'\right)\right)^2\\
&\le \vartheta^2 \sum_{i \in \mathcal{X}}\xi_i \left(\sup_{q \in \mathcal{P}^{\pi(i)}_i} 
\left(q^\top \left(\Phi\theta 
- \Phi \theta'\right)\right) + \beta \norm{\Phi\theta - \Phi\theta'}_\xi \right)^2 \label{eq:overestimate-xi}\\ 
&\le \sum_{i \in \mathcal{X}}\xi_i \left(\alpha\sum_{j \in \mathcal{X}} P^{\widehat{\pi}}_{ij} 
\left(\phi(j)^\top\theta - \phi(j)^\top \theta'\right) + \vartheta \beta \norm{\Phi\theta - \Phi\theta'}_\xi\right)^2
\label{eq:cross}\\
&\le 2\sum_{i \in \mathcal{X}}\xi_i \left(\alpha^2\sum_{j \in \mathcal{X}} P^{\widehat{\pi}}_{ij} 
\left(\phi(j)^\top\theta - \phi(j)^\top \theta'\right)^2 + \vartheta^2 \beta^2 \norm{\Phi\theta - \Phi\theta'}^2_\xi\right)\\
&\le 2(\alpha^2 + \vartheta^2 \beta^2) \norm{\Phi\theta 
- \Phi \theta'}_\xi^2 
\end{align}
where we used Lemma~\ref{lem:over-estimate-xi} and the definition of \(\beta\) in 
line~\eqref{eq:overestimate-xi}, 
the inequality \((a + b)^2 \le 2(a^2 + b^2)\), and the fact that 
\(\left(P^{\widehat{\pi}}_{ij}\right)^2 \le P^{\widehat{\pi}}_{ij}\). Note that 
if \(\beta^{\pi(i)}_i = \beta = 0\) so that the proxy confidence region is the same
as the true confidence region, then we have the 
simple upper bound of 
\(\norm{\widehat{T}_\pi(\Phi^\top \theta) - \widehat{T}_\pi(\Phi^\top \theta')}^2_\xi
\le 
 \alpha^2 \norm{\Phi\theta 
- \Phi \theta'}^2_\xi\) instead of  
\(\norm{\widehat{T}_\pi(\Phi^\top \theta) - \widehat{T}_\pi(\Phi^\top \theta')}^2_\xi
\le 
 2\alpha^2 \norm{\Phi\theta 
- \Phi \theta'}^2_\xi\) since we do not have the cross term in 
equation~\eqref{eq:cross} in this case.
\end{proof}

 The following corollary shows that the solution to the proxy 
projected Bellman equation converges to a solution that is
not too far away from the true value function \(v_\pi\).  
\begin{corollary}\label{cor:projection}
 Let Assumption~\ref{assumption:contraction} hold and let \(\beta\) be as in 
 Theorem~\ref{thm:projected-contraction}. 
 Let \(\widetilde{v}_\pi\) be the fixed point of the projected Bellman equation for 
 the proxy operator \(\widehat{T_\pi}\), i.e., \(\Pi \widehat{T_\pi} \widetilde{v}_\pi = 
 \widetilde{v}_\pi\). Let \(\widehat{v}_\pi\) be the fixed point of the proxy operator 
 \(\widehat{T_\pi}\), i.e., \(\widehat{T_\pi} \widehat{v}_\pi = \widehat{v}_\pi\).
 Let \(v_\pi\) be the true value function of the policy \(\pi\), i.e.,
 \(T_\pi v_\pi = v_\pi\).
 Then the following holds
 \begin{align}
  \norm{\widetilde{v}_\pi - v_\pi}_\xi \le \frac{\vartheta \beta \norm{v_\pi}_\xi + 
  \norm{\Pi v_\pi - v_\pi}_\xi}{1 - \sqrt{2\left(\alpha^2 + \vartheta^2 \beta^2\right)}}.
 \end{align}
In particular if \(\beta_i = \beta = 0\) i.e., the proxy confidence region is actually the true
confidence region, then the proxy projected Bellman equation has a solution satisfying
\(
  \norm{\widetilde{v}_\pi - v_\pi}_\xi \le \frac{ 
  \norm{\Pi v_\pi - v_\pi}_\xi}{1 - \alpha}.
  \)
\end{corollary}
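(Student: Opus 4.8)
The plan is to run the standard approximate-dynamic-programming argument, but with the two operators $\widehat{T_\pi}$ and $T_\pi$ kept separate so that their discrepancy can be charged to the extra $\vartheta\beta\norm{v_\pi}_\xi$ term. Throughout write $\kappa \coloneqq \sqrt{2(\alpha^2 + \vartheta^2\beta^2)}$, which is $<1$ by hypothesis, and recall the two defining identities $\widetilde{v}_\pi = \Pi\widehat{T_\pi}\widetilde{v}_\pi$ and $T_\pi v_\pi = v_\pi$; the latter gives $\Pi v_\pi = \Pi T_\pi v_\pi$. Starting from the triangle inequality
\begin{align*}
\norm{\widetilde{v}_\pi - v_\pi}_\xi \le \norm{\widetilde{v}_\pi - \Pi v_\pi}_\xi + \norm{\Pi v_\pi - v_\pi}_\xi,
\end{align*}
the second summand is already the approximation-error term appearing in the bound, so all the work lies in the first summand.

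For the first summand I would substitute the two identities and insert the intermediate point $\Pi\widehat{T_\pi}v_\pi$:
\begin{align*}
\norm{\widetilde{v}_\pi - \Pi v_\pi}_\xi
= \norm{\Pi\widehat{T_\pi}\widetilde{v}_\pi - \Pi T_\pi v_\pi}_\xi
\le \norm{\Pi\widehat{T_\pi}\widetilde{v}_\pi - \Pi\widehat{T_\pi}v_\pi}_\xi + \norm{\Pi\widehat{T_\pi}v_\pi - \Pi T_\pi v_\pi}_\xi.
\end{align*}
Because $\Pi$ is the orthogonal projection onto $S$ for the inner product defining $\norm{\cdot}_\xi$, it is nonexpansive, so the first term is at most $\norm{\widehat{T_\pi}\widetilde{v}_\pi - \widehat{T_\pi}v_\pi}_\xi \le \kappa\norm{\widetilde{v}_\pi - v_\pi}_\xi$ by Theorem~\ref{thm:projected-contraction}, and the second term is at most $\norm{\widehat{T_\pi}v_\pi - T_\pi v_\pi}_\xi$. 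Although Theorem~\ref{thm:projected-contraction} is phrased for arguments in $S$, its proof is a coordinatewise estimate on the support functions and applies to the pair $\widetilde{v}_\pi, v_\pi$ even though $v_\pi \notin S$, which is exactly what this step requires.

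The main obstacle, and the only genuinely new estimate, is bounding $\norm{\widehat{T_\pi}v_\pi - T_\pi v_\pi}_\xi$. By the definitions of the two robust Bellman operators, the $i$-th coordinate of $\widehat{T_\pi}v_\pi - T_\pi v_\pi$ equals $\vartheta\bigl(\sigma_{\widehat{\mathcal{P}^{\pi(i)}_i}}(v_\pi) - \sigma_{\mathcal{P}^{\pi(i)}_i}(v_\pi)\bigr)$, which is nonnegative since $\widehat{U^{\pi(i)}_i}\supseteq U^{\pi(i)}_i$ and is bounded above by $\vartheta\beta\norm{v_\pi}_\xi$ by Lemma~\ref{lem:over-estimate-xi}, uniformly in $i$. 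Since $\sum_{i\in\mathcal{X}}\xi_i = 1$, any vector all of whose coordinates are at most $\vartheta\beta\norm{v_\pi}_\xi$ in absolute value has $\norm{\cdot}_\xi$ at most $\vartheta\beta\norm{v_\pi}_\xi$, so $\norm{\widehat{T_\pi}v_\pi - T_\pi v_\pi}_\xi \le \vartheta\beta\norm{v_\pi}_\xi$. Assembling the pieces gives $\norm{\widetilde{v}_\pi - v_\pi}_\xi \le \kappa\norm{\widetilde{v}_\pi - v_\pi}_\xi + \vartheta\beta\norm{v_\pi}_\xi + \norm{\Pi v_\pi - v_\pi}_\xi$, and solving for $\norm{\widetilde{v}_\pi - v_\pi}_\xi$ (legitimate because $\kappa<1$) produces the stated bound with denominator $1-\sqrt{2(\alpha^2+\vartheta^2\beta^2)}$. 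For the special case $\beta=0$ the $\vartheta\beta\norm{v_\pi}_\xi$ term vanishes and Theorem~\ref{thm:projected-contraction} supplies the sharper modulus $\alpha$ (no cross term), yielding denominator $1-\alpha$. I note that the auxiliary fixed point $\widehat{v}_\pi$ plays no role in this direct argument.
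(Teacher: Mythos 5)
Your proof is correct and takes essentially the same route as the paper's: a triangle inequality through \(\Pi v_\pi\), an exchange of \(\Pi T_\pi v_\pi\) for \(\Pi\widehat{T_\pi}v_\pi\) at cost \(\vartheta\beta\norm{v_\pi}_\xi\) via Lemma~\ref{lem:over-estimate-xi}, the contraction bound of Theorem~\ref{thm:projected-contraction} applied to the pair \(\widetilde{v}_\pi, v_\pi\), and then solving for \(\norm{\widetilde{v}_\pi - v_\pi}_\xi\) using \(\sqrt{2(\alpha^2+\vartheta^2\beta^2)}<1\). If anything, your write-up is more careful than the paper's: you make the coordinatewise bound on \(\norm{\widehat{T_\pi}v_\pi - T_\pi v_\pi}_\xi\) explicit, you flag that the contraction theorem is being invoked at \(v_\pi \notin S\) (a point the paper uses silently), and you correctly observe that the auxiliary fixed point \(\widehat{v}_\pi\) in the statement is never needed.
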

\begin{proof}
 We have the following expression
 \begin{align}
  \norm{\widetilde{v}_\pi - v_\pi}_\xi &\le \norm{\widetilde{v}_\pi - \Pi v_\pi}_\xi + 
  \norm{\Pi v_\pi - v_\pi}_\xi\\
  &\le \norm{\Pi \widehat{T}_\pi \widetilde{v}_\pi - \Pi T_\pi v_\pi}_\xi + 
  \norm{\Pi v_\pi - v_\pi}_\xi\\
 & \le \norm{\Pi \widehat{T}_\pi \widetilde{v}_\pi - \Pi \widehat{T}_\pi v_\pi + 
 \vartheta \beta \norm{v_\pi}_\xi} + 
  \norm{\Pi v_\pi - v_\pi}_\xi \label{eq:widehat-T}\\
 & \le \norm{\Pi \widehat{T}_\pi \widetilde{v}_\pi - \Pi \widehat{T}_\pi v_\pi}_\xi + 
 \vartheta \beta \norm{v_\pi}_\xi + 
  \norm{\Pi v_\pi - v_\pi}_\xi\\
 &\le \sqrt{2(\alpha^2 + \vartheta^2 \beta^2)} \norm{\widetilde{v}_\pi - v_\pi}_\xi + 
 \vartheta \beta \norm{v_\pi}_\xi + 
  \norm{\Pi v_\pi - v_\pi}_\xi,
 \end{align}
where we used Lemma~\ref{lem:over-estimate-xi} to derive inequality~\eqref{eq:widehat-T}
and Theorem~\ref{thm:projected-contraction} to conclude that 
\(\norm{\Pi \widehat{T}_\pi \widetilde{v}_\pi - \Pi \widehat{T}_\pi v_\pi}_\xi
\le \sqrt{2(\alpha^2 + \vartheta^2 \beta^2)}\norm{\widetilde{v}_\pi - v_\pi}_\xi\).
If \(\beta^{\pi(i)}_i = \beta = 0\) so that the proxy confidence regions 
are the same as the true confidence regions, then  
we have \(\alpha\) instead of 
\(\sqrt{2 (\alpha^2 + \vartheta^2 \beta^2)}\) in the last
equation due to Theorem~\ref{thm:projected-contraction}.
\end{proof}
Theorem~\ref{thm:projected-contraction} guarantees that the \emph{robust projected Bellman
iterations} of \(\LSTD(\lambda)\), \(\LSPE(\lambda)\) and \(\TD(\lambda)\)-methods converge, while 
Corollary~\ref{cor:projection} guarantees that the solution it converges to is
not too far away from the true value function \(v_\pi\). We refer the reader to 
\cite{bertsekas2011approximate} for more details on \(\LSTD(\lambda)\), \(\LSPE(\lambda)\)
since their proof of convergence is analogous to that of \(\TD(\lambda)\).
\subsection{Robust stochastic gradient descent algorithms}
While the \(\TD(\lambda)\)-learning algorithms with function approximation with linear
architectures converges to \(v_\pi\) 
if the states are sampled according to the policy \(\pi\), it is known to be 
unstable if the states are sampled in an \emph{off-policy} manner, i.e.,
in the terminology of the previous section \(\widehat{\pi} \neq \pi\).
This issue was addressed by \cite{sutton2009convergent, sutton2009fast}
who proposed a stochastic gradient descent based \(\TD(0)\) algorithm
that converges for linear architectures in the \emph{off-policy} setting. This was further
extended by \cite{bhatnagar2009convergent} who extended it to approximations
using arbitrary smooth functions and proved convergence to a local optimum. 
In this section we show how to extend these off-policy methods to the robust 
setting with uncertain transitions. 
Note that this is an \emph{alternative approach} to the requirement of 
Assumption~\ref{assumption:contraction}, 
since under this assumption all
off-policy methods would also converge.

The main idea of \cite{sutton2009fast} is to devise stochastic gradient
algorithms to minimize the following loss
function called the \emph{mean square projected Bellman error} (\(\operatorname{MSPBE}\))
also studied in \cite{antos2008learning, farahmand2009regularized}.
\begin{align}
 \operatorname{MSPBE}(\theta) \coloneqq \norm{v_\theta - \Pi T_\pi v_\theta}^2_\xi.
\end{align}
Note that the loss function is \(0\) for a \(\theta\) that satisfies 
the \emph{projected Bellman equation}, \(\Phi\theta = T_\pi(\Phi\theta)\).
Consider a linear architecture as in Section~\ref{sec:robust-apx-rl}
where \(v_\theta \coloneqq \Phi \theta\).
Let \(i \in \mathcal{X}\) be a random state chosen with distribution \(\xi_i\). 
Denote \(\phi(i)\) by the shorthand \(\phi\) and \(\phi(i')\) by 
\(\phi'\). Then it is easy to show that
\begin{align}
 \operatorname{MSPBE}(\theta) \coloneqq \norm{v_\theta - \Pi T_\pi v_\theta}^2_\xi
 = \E{d\phi}^\top \E{\phi\phi^\top}^{-1} \E{d\phi},
\end{align}
where the expectation is over the random state \(i\) and 
\(d\) is the temporal difference error for the transition 
\((i, i')\) i.e., \(d\coloneqq c(i, a) +\vartheta \theta^\top \phi' - \theta^\top \phi\),
where the action \(a\) and the new state \(i'\) are chosen according to the
exploration policy \(\widehat{\pi}\). The negative gradient of the \(\operatorname{MSPBE}\)
function is 
\begin{align}\label{eq:mspbe-grad}
 -\frac{1}{2}\nabla \operatorname{MSPBE}(\theta) &= \E{(\phi - \vartheta \phi')\phi^\top}w\\
 &= \E{d\phi} - \vartheta \E{\phi'\phi^\top}w
\end{align}
where \(w = \E{\phi\phi^\top}^{-1}\E{d\phi}\). Both \(d\) and \(w\) depend on
\(\theta\). Since the expectation is hard to compute exactly \cite{sutton2009fast}
introduce a set of weights \(w_k\) whose purpose is to estimate \(w\) for a fixed 
\(\theta\). Let \(d_k\) denote the temporal difference error for a parameter 
\(\theta_k\). The weights \(w_k\) are then updated on a fast time scale as 
\begin{align}\label{eq:update-weight}
 w_{k+1} \coloneqq w_k + \beta_k \left(d_k - \phi_k^\top w_k\right) \phi_k,
\end{align}
while the parameter \(\theta_k\) is updated on a slower timescale in the following 
two possible manners
\begin{align}\label{eq:update-parameter}
 \theta_{k+1} \coloneqq \theta_k + \alpha_k\left(\phi_k - 
 \vartheta \phi'_k\right)(\phi_k^\top w_k) \quad &\text{GTD2}\\
 \theta_{k+1} \coloneqq \theta_k + \alpha_k d_k \phi_k - 
 \vartheta \alpha_k \phi'_k(\phi_k^\top w_k) \quad &\text{TDC}
\end{align}
\cite{bhatnagar2009convergent} extended this to the case of smooth nonlinear 
architectures, where the space \(S \coloneqq \left\{ v_\theta \mid \theta \in \R^d\right\}\)
spanned by all value functions \(v_\theta\) 
is now a differentiable sub-manifold of \(\R^n\) rather than a linear subspace.
Projecting onto such nonlinear manifolds is a computationally hard problem,
and to get around this \cite{bhatnagar2009convergent} project
instead onto the tangent plane at \(\theta\)
assuming the parameter \(\theta\) changes very little in one step.
This allows \cite{bhatnagar2009convergent} to generalize 
the updates of equations~\eqref{eq:update-weight}
and \eqref{eq:update-parameter} with an additional Hessian term \(\nabla^2 v_\theta\)
which vanishes if \(v_\theta\) is linear in \(\theta\).

In the following sections we extend the stochastic gradient algorithms of 
\cite{bhatnagar2009convergent, sutton2009convergent, sutton2009fast}
to the robust setting with uncertain transition matrices. Since the 
number \(n\) of states is prohibitively large, we will make the simplifying 
assumption that \(U^a_i = U\) and \(\widehat{U^a_i} = U^a_i\) for the 
results of the following sections.
\subsubsection{Robust stochastic gradient algorithms with linear architectures}\label{sec:robust-linear-grad}
In this section we extend the results of \cite{sutton2009fast}
to the robust setting, where we are interested in finding a solution to the 
\emph{robust projected Bellman equation} \(\Phi\theta = T_\pi\left(\Phi\theta\right)\),
where \(T_\pi\) is the robust Bellman operator of equation~\eqref{eq:robust-bellman-operator}. 
Let \(\widehat{T}_\pi\)
denote the proxy robust Bellman operators using the proxy uncertainty set
\(\widehat{U}\) instead of \(U\). A natural generalization
of \cite{sutton2009fast} is to introduce the following loss function 
which we call \emph{mean squared robust projected
Bellman error} (MSRPBE): 
\begin{align}
 \operatorname{MSRPBE}(\theta) \coloneqq \norm{v_\theta - \Pi \widehat{T}_\pi v_\theta}^2_\xi,
\end{align}
where the proxy robust Bellman operator \(\widehat{T}\) is used. Note that 
\(\widehat{T}_\pi\) is no longer truly linear in \(\theta\) even for linear 
architectures \(v_\theta = \Phi\theta\) as 
\begin{align}
 (\widehat{T}_\pi \Phi \theta)(i) &= c(i, \pi(i)) + \vartheta \sigma_{\mathcal{P}^{\pi(i)}_i}(\Phi \theta)\\
			&= c(i, \pi(i)) + \vartheta \theta^\top \Phi^\top p^{\pi(i)}_i + 
			\vartheta \sup_{q \in \Phi^\top\left(\widehat{U}\right)} q^\top \theta, 
\end{align}
where \(p^{\pi(i)}_i\) are the simulator transition probability vector. However, under
the assumption that \(\widehat{U}\) is a nicely behaved set such as a ball or an 
ellipsoid, so that changing \(\theta\) in a small neighborhood does not lead to jumps in 
\(\sigma_{\Phi^\top(\widehat{U})}(\theta)\), we may define the gradient 
\(\nabla_\theta \widehat{T}_{\pi}(\Phi\theta)(i)\) as 
\begin{align}
 \nabla_\theta ((\widehat{T}_\pi \Phi \theta)(i)) &\coloneqq \vartheta \Phi^\top p^{\pi(i)}_i +
 \vartheta \arg \max_{q \in \Phi^\top(\widehat{U})} q^\top \theta\\
   &= \vartheta \arg \max_{q \in \Phi^\top\left(\widehat{\mathcal{P}^{\pi(i)}_i}\right)} q^\top \theta.
\end{align}
Recall the \emph{robust temporal difference error} \(\widetilde{d}\) for state \(i\) 
with respect to the proxy set \(\widehat{U}\) as in equation~\eqref{eq:robust-td-vector}
\begin{align}
 \widetilde{d} \coloneqq c(i, \pi(i)) + \vartheta v_{\theta}(i') + \sigma_{\widehat{U}}
 (v_\theta) - v_\theta(i).
\end{align}
Under the assumption that \(\E{\phi\phi^\top}\)
is full rank, we may write the 
\(\operatorname{MSRPBE}\) loss function
in terms of the robust temporal difference errors \(\widetilde{d}\) of
equation~\eqref{eq:robust-td-vector} as in \cite{sutton2009fast}:
\begin{align}\label{eq:msrpbe-td}
 \operatorname{MSRPBE}(\theta) = \E{\widetilde{d}\phi}^\top \E{\phi\phi^\top}^{-1}
 \E{\widetilde{d}\phi}.
\end{align}
Note that if \(\E{\phi\phi^\top}\) is full rank, then \(\operatorname{MSRPBE}(\theta) = 0\) 
if and only if \(\E{\widetilde{d} \phi} = 0\) because of equation~\eqref{eq:msrpbe-td}.
Define 
\begin{align}
 \mu_{P}(\theta) \coloneqq \nabla \max_{y \in P} y^\top v_\theta = 
\nabla \max_{y \in P} y^\top \Phi \theta = 
\Phi^\top \arg\max_{y \in P}y^\top \theta =  \arg\max_{y \in \Phi^\top(P)} y^\top \theta
\end{align} 
for any convex compact set \(P \subset \R^n\), so 
that the gradient of the \(\operatorname{MSRPBE}\) loss function can be written as 
\begin{align}
 -\frac{1}{2}\nabla \operatorname{MSRPBE}(\theta) &= 
 \E{\left(\phi - \vartheta \mu_{\widehat{U}}(\theta) - \vartheta \phi'\right)\phi^\top} \E{\phi\phi^\top}^{-1}
 \E{\widetilde{d}\phi},\\
 &= \E{\left(\phi - \vartheta \mu_{\widehat{U}}(\theta)\right)\phi^\top}w,\\
 &= \E{\widetilde{d}\phi} - \vartheta \E{\phi' \phi^\top}w -
 \vartheta \E{\mu_{\widehat{U}}(\theta)\phi^\top}w
\end{align}
where \(w = \E{\phi\phi^\top}^{-1}\E{\widetilde{d}\phi}\) is the same as in
equation~\eqref{eq:mspbe-grad} and \cite{sutton2009fast}. Therefore, as in 
\cite{sutton2009fast} we have an estimator \(w_k\) for the weights \(w\)
for a fixed parameter \(\theta_k\) as 
\begin{align}\label{eq:robust-w-update}
 w_{k+1} \coloneqq w_k + \beta_k \left(\widetilde{d}_k - \phi^\top_k w_k\right)\phi_k,
\end{align}
with the corresponding parameter \(\theta_k\) being updated as
\begin{align}\label{eq:robust-theta-update}
 \theta_{k+1} \coloneqq \theta_k + \alpha_k\left(\phi_k - 
 \vartheta \mu_{\widehat{U}}(\theta) - \phi'_k\right)(\phi_k^\top w_k) \quad &\text{robust-GTD2}\\
 \theta_{k+1} \coloneqq \theta_k + \alpha_k \widetilde{d}_k \phi_k - 
 \vartheta  \alpha_k (\phi'_k +  \mu_{\widehat{U}}(\theta))(\phi_k^\top w_k) \quad &\text{robust-TDC}.
\end{align}
\textbf{Run time analysis:}
Let \(T_n(P)\) denote the time to optimize linear functions over the 
convex set \(P\) for some \(P \subset \R^n\). 
Note that the values \(v_\theta(i)\) can be computed simply in \(O(d)\) 
time. Thus the updates of \emph{robust-GTD2} and \emph{robust-TDC} can be computed in 
\(O\left(d + T_n\left(\widehat{U}\right)\right)\) time.
In particular if the set \(\widehat{U}\) is a simple set like an 
ellipsoid with associated matrix \(A\), then the optimum value 
\(\sigma_{\widehat{U}}(v_\theta)\)
is simply \(\sqrt{\theta^\top \Phi^\top A \Phi\theta}\), 
where \(\Phi\) is the feature matrix. 
In this case we only need to compute
\(\Phi^\top A\Phi\) once and store it for future use.
However, note that this still takes time polynomial in \(n\), 
which is undesirable for \(n \gg d\). In this case, we need to 
to make the assumption that there are good rank-\(d\) approximations to 
\(\widehat{U}\) i.e., \(A \approx B B^\top\) for some \(n \times d\)
matrix \(B\). 

Thus the total run time for each update in this case is \(O(d^2)\).
If the uncertainty set is spherically symmetric, i.e., a ball, then the expression
is simply \(\norm{\Phi\theta}_2\) and the robust temporal difference errors of 
equation~\eqref{eq:robust-td-vector} and the updates of equation~\eqref{eq:robust-w-update}
and \eqref{eq:robust-theta-update} can be viewed simply as regular updates of 
\cite{sutton2009convergent} with an added \emph{noise term}.
\subsubsection{Robust stochastic gradient algorithms with nonlinear architectures}


In this section we generalize the results of Section~\ref{sec:robust-linear-grad} where 
we show how to extend the algorithms of equation~\eqref{eq:robust-w-update}
and \eqref{eq:robust-theta-update} to the case when the value function \(v_\theta\)
is no longer a linear function of \(\theta\). This also generalizes the results
of \cite{bhatnagar2009convergent} to the robust setting with corresponding 
robust analogues of \emph{nonlinear GTD2} and \emph{nonlinear TDC} respectively.
Let \(\mathcal{M} \coloneqq \left\{ v_\theta \mid \theta \in \R^d\right\}\) be the 
manifold spanned by all possible value functions and let \(P\mathcal{M}_\theta\)
be the \emph{tangent plane} of \(\mathcal{M}\) at \(\theta\). Let \(T\mathcal{M}_\theta\) 
be the \emph{tangent space}, i.e., the translation of \(P\mathcal{M}_\theta\) to the 
origin. In other words, \(T\mathcal{M}_\theta \coloneqq \left\{\Phi_\theta u \mid 
u \in \R^d\right\}\), where \(\Phi_\theta\) is an \(n \times d\) matrix with 
entries \(\Phi_\theta(i, j) \coloneqq \frac{\partial}{\partial \theta_j} v_\theta(i)\).
Let \(\Pi_\theta\) denote the projection with to the weighted Euclidean norm 
\(\norm{\cdot}_\xi\) on to the space \(T\mathcal{M}_\theta\), so that
\begin{align}
 \Pi_\theta = \Phi_\theta \left(\Phi_\theta \Xi \Phi_\theta\right)^{-1}\Phi_\theta^\top \Xi
\end{align}
where \(\Xi\) is the \(n\times n\) diagonal matrix with entries \(\xi_i\) for 
\(i \in \mathcal{X}\) as in Section~\ref{sec:robust-apx-rl}. The \emph{mean squared
projected Bellman equation} (\(\operatorname{MSPBE}\)) loss 
function considered by \cite{bhatnagar2009convergent} can then be defined as
\begin{align}
 \operatorname{MSPBE}(\theta) = \norm{v_\theta - \Pi_\theta Tv_\theta}_\xi^2,
\end{align}
where we now project to the the tangent space \(T\mathcal{M}_\theta\).
The robust version of the \(\operatorname{MSPBE}\) loss function, the 
\emph{mean squared robust projected Bellman equation} (\(\operatorname{MSRPBE}\)) loss can 
then be defined in terms of the \emph{robust Bellman operator} over the 
proxy uncertainty set \(\widehat{U}\)
\begin{align}
 \operatorname{MSRPBE}(\theta) = \norm{v_\theta - \Pi_\theta \widehat{T}v_\theta}_\xi^2,
\end{align}
and under the assumption that \(\E{\nabla v_\theta(i) \nabla v_\theta(i)^\top}\)
is non-singular, this may be expressed in terms of the
\emph{robust temporal difference} error 
\(\widetilde{d}\) of equation~\eqref{eq:robust-td-vector}
as in \cite{bhatnagar2009convergent} and equation~\eqref{eq:msrpbe-td}: 
\begin{align}\label{eq:msrpbe-nonlinear}
\operatorname{MSRPBE}(\theta) = \E{\widetilde{d}\nabla v_\theta(i)}^\top 
\E{\nabla v_\theta(i) \nabla v_\theta(i)^\top}^{-1}
 \E{\widetilde{d}\nabla v_\theta(i)}, 
\end{align}
where the expectation is over the states \(i \in \mathcal{X}\) drawn from the 
distribution \(\xi\). Note that under the assumption that 
\(\E{\nabla v_\theta(i) \nabla v_\theta(i)^\top}\) is non-singular, 
it follows due to equation~\eqref{eq:msrpbe-nonlinear} 
that \(\operatorname{MSRPBE}(\theta) = 0\) 
if and only if \(\E{\widetilde{d}\nabla v_\theta(i)} = 0\).
Since \(v_\theta\) is no longer linear in \(\theta\), we need to redefine
the gradient \(\mu\) of \(\sigma\) for any convex, compact set \(P\) as 
\begin{align}
 \mu_{P}(\theta) \coloneqq \nabla \max_{y \in P} y^\top v_\theta
 = \Phi_\theta^\top \arg\max_{y \in P} y^\top v_\theta,
\end{align}
where \(\Phi_\theta(i) \coloneqq \nabla v_\theta(i)\).
The following lemma expresses the gradient \(\nabla \operatorname{MSRPBE}(\theta)\)
in terms of the \emph{robust temporal difference errors}, see Theorem~1
of \cite{bhatnagar2009convergent} for the non-robust version.

\begin{lemma}\label{lem:robust-non-linear-grad}
 Assume that \(v_{\theta}(i)\) is twice differentiable with respect to 
 \(\theta\) for any \(i \in \mathcal{X}\) and that \(W(\theta) \coloneqq 
 \E{\nabla v_\theta(i) \nabla v_\theta(i)^\top}\) is non-singular in a
 neighborhood of \(\theta\). Let \(\phi \coloneqq 
 \nabla v_\theta(i)\) and define for any \(u \in \R^d\)
 \begin{align}
  h(\theta, u) \coloneqq - \E{(\widetilde{d} - \phi^\top u)\nabla^2 v_\theta(i)u}.
 \end{align}
Then the gradient of \(\operatorname{MSRPBE}\) with respect to \(\theta\) can 
be expressed as 
\begin{align}
 - \frac{1}{2}\nabla\operatorname{MSRPBE}(\theta) = 
 \E{\left(\phi - \vartheta \mu_{\widehat{U}}(\theta)  - \vartheta \phi'\right)\phi^\top}w + h(\theta, w),
\end{align}
where \(w = \E{\phi\phi^\top}^{-1}\E{\widetilde{d}\phi}\) as before.
\end{lemma}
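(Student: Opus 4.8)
The plan is to mirror the proof of Theorem~1 of \cite{bhatnagar2009convergent} for the non-robust $\operatorname{MSPBE}$, tracking the additional contribution of the support-function term $\vartheta\sigma_{\widehat{U}}(v_\theta)$ that distinguishes the robust temporal difference $\widetilde{d}$ from its nominal counterpart. Setting $b(\theta) \coloneqq \E{\widetilde{d}\,\phi}$ and $W(\theta) \coloneqq \E{\phi\phi^\top}$ with $\phi = \nabla v_\theta(i)$, the expression~\eqref{eq:msrpbe-nonlinear} reads $\operatorname{MSRPBE}(\theta) = b^\top W^{-1} b$. I would differentiate this scalar using the product rule together with the matrix-inverse identity $\partial_{\theta_k} W^{-1} = -W^{-1}(\partial_{\theta_k}W)W^{-1}$; writing $w = W^{-1}b$ this collapses to
\begin{align*}
 \tfrac{1}{2}\partial_{\theta_k}\operatorname{MSRPBE}(\theta) = (\partial_{\theta_k}b)^\top w - \tfrac{1}{2}\,w^\top(\partial_{\theta_k}W)\,w,
\end{align*}
reducing everything to the two derivatives $\partial_{\theta_k}b$ and $\partial_{\theta_k}W$.

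The crucial new ingredient is the gradient of $\widetilde{d}$. From $\widetilde{d} = c(i,\pi(i)) + \vartheta v_\theta(i') + \vartheta\sigma_{\widehat{U}}(v_\theta) - v_\theta(i)$ as in~\eqref{eq:robust-td-vector} and the definition $\mu_{\widehat{U}}(\theta) = \nabla_\theta\sigma_{\widehat{U}}(v_\theta)$, I obtain $\nabla_\theta\widetilde{d} = \vartheta\phi' + \vartheta\mu_{\widehat{U}}(\theta) - \phi$, so that $-\nabla_\theta\widetilde{d} = \phi - \vartheta\mu_{\widehat{U}}(\theta) - \vartheta\phi'$ --- precisely the vector appearing in the statement. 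Differentiating $\phi = \nabla v_\theta(i)$ brings in the Hessian $\nabla^2 v_\theta(i)$, giving $\partial_{\theta_k}b = \E{(\partial_{\theta_k}\widetilde{d})\phi} + \E{\widetilde{d}\,\partial_{\theta_k}\phi}$ and, using symmetry of the Hessian, $\partial_{\theta_k}W = 2\,\E{(\partial_{\theta_k}\phi)\,\phi^\top}$.

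Substituting these and assembling over all $k$ (using that $\phi^\top w$ is a scalar) splits the negated half-gradient into two pieces. The terms carrying $\partial_{\theta_k}\widetilde{d}$ recombine into $\E{(\phi - \vartheta\mu_{\widehat{U}}(\theta) - \vartheta\phi')(\phi^\top w)} = \E{(\phi - \vartheta\mu_{\widehat{U}}(\theta) - \vartheta\phi')\phi^\top}w$, while the two Hessian-bearing terms combine, after noting $(\partial_{\theta_k}\phi)^\top w = [\nabla^2 v_\theta(i)\,w]_k$ by Hessian symmetry, into $-\E{(\widetilde{d} - \phi^\top w)\nabla^2 v_\theta(i)\,w}$, which is exactly $h(\theta,w)$. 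This establishes the claimed identity.

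The main obstacle is justifying the interchange of gradient and maximization used in $\nabla_\theta\sigma_{\widehat{U}}(v_\theta) = \mu_{\widehat{U}}(\theta)$: the support function $\sigma_{\widehat{U}}$ is a supremum of linear forms and is differentiable only where the maximizer $\arg\max_{y\in\widehat{U}}y^\top v_\theta$ is unique. This is exactly where the standing hypothesis that $\widehat{U}$ is a smooth strictly convex body such as a ball or ellipsoid enters: by Danskin's theorem the maximizer is then single-valued and $\sigma_{\widehat{U}}(v_\theta)$ is continuously differentiable in a neighbourhood of $\theta$, with gradient $\Phi_\theta^\top\arg\max_{y\in\widehat{U}}y^\top v_\theta$, which is the definition of $\mu_{\widehat{U}}(\theta)$. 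Granting this differentiability, the remainder is the same chain-rule bookkeeping as in the nominal derivation of \cite{bhatnagar2009convergent}, carried through with $\widetilde{d}$ in place of the ordinary temporal difference.
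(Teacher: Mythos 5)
Your proposal is correct and follows essentially the same route as the paper, whose proof is a one-line deferral to Theorem~1 of \cite{bhatnagar2009convergent} ``using \(\mu_{\widehat{U}}(\theta)\) as the gradient of \(\sigma_{\widehat{U}}(\theta)\)''---precisely the chain-rule bookkeeping you carry out explicitly, with the \(b^\top W^{-1}b\) differentiation, the Hessian terms assembling into \(h(\theta,w)\), and \(-\nabla_\theta\widetilde{d} = \phi - \vartheta\mu_{\widehat{U}}(\theta) - \vartheta\phi'\) supplying the robust correction. Your Danskin-type justification of \(\nabla_\theta\sigma_{\widehat{U}}(v_\theta) = \mu_{\widehat{U}}(\theta)\) also makes rigorous the smoothness assumption on \(\widehat{U}\) that the paper invokes only informally.
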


\begin{proof}
 The proof is similar to Theorem~1 of \cite{bhatnagar2009convergent} by using
 \(\mu_{\widehat{U}}(\theta)\) as the gradient of \(\sigma_{\widehat{U}}(\theta)\).
\end{proof}
Lemma~\ref{lem:robust-non-linear-grad} leads us to the following robust 
analogues of \emph{nonlinear GTD} and \emph{nonlinear TDC}. The update of the
weight estimators \(w_k\) is the same as in equation~\eqref{eq:robust-w-update}
\begin{align}\label{eq:robust-nonlinear-w-update}
 w_{k+1} \coloneqq w_k + \beta_k \left(\widetilde{d}_k - \phi^\top_k w_k\right)\phi_k,
\end{align}
with the parameters \(\theta_k\) being updated on a slower timescale as 
\begin{align}\label{eq:robust-nonlinear-theta-update}
 \theta_{k+1} &\coloneqq \Gamma\left(\theta_k + \alpha_k\left\{ \left(\phi_k  
 - \vartheta \phi'_k - 
 \vartheta \mu_{\widehat{U}}(\theta)\right)(\phi_k^\top w_k) - h_k\right\}\right) \quad &\text{robust-nonlinear-GTD2}\\
 \theta_{k+1} &\coloneqq \Gamma\left(\theta_k + \alpha_k \left\{ \widetilde{d}_k \phi_k  
 - \vartheta \phi'_k - \vartheta  \mu_{\widehat{U}}(\theta)(\phi_k^\top w_k) - h_k\right\}\right)\quad 
 &\text{robust-nonlinear-TDC},
\end{align}
where \(h_k \coloneqq \left(\widetilde{d}_k - \phi_k^\top w_k\right)
\nabla^2 v_{\theta_k}\left(i_k\right)w_k\) and \(\Gamma\) is a projection
into an appropriately chosen compact set \(C\) with a smooth boundary 
as in \cite{bhatnagar2009convergent}. As in \cite{bhatnagar2009convergent} the
main aim of the projection is to prevent the parameters to diverge in the 
early stages of the algorithm due to the nonlinearities in the algorithm.
In practice, if \(C\) is large enough that it contains the set of all 
possible solutions \(\left\{\theta \middle| \E{\widetilde{d}\nabla v_\theta(i)} = 0\right\}\)
then it is quite likely that no projections will happen. However, we require
the projection for the convergence analysis of the 
\emph{robust-nonlinear-GTD2} and \emph{robust-nonlinear-TDC} algorithms,
see Section~\ref{sec:gradient-convergence}.
Let \(T_n(P)\) denote the time to optimize a linear function over the set 
\(P \subset \R^n\). Then the run time is 
\(O\left(d + T_n\left(\widehat{U}\right)\right)\). 
If \(\widehat{U}\) is an ellipsoid with associated matrix 
\(A\), then an approximate optimum may be computed by sampling,
if we have a rank-\(d\) approximation to \(A\), i.e.,
\(A \approx BB^\top\) for some \(n \times d\) matrix.
If \(\widehat{U}\) is spherically symmetric, then 
the \(\sigma\left(\widehat{U}\right)\) is simply \(\norm{v_\theta}_2\)
so that the updates of equations~\eqref{eq:robust-nonlinear-w-update}
and \eqref{eq:robust-theta-update} may be viewed as the regular
updates of \cite{bhatnagar2009convergent} with an added noise term.

\subsubsection{Convergence analysis}\label{sec:gradient-convergence}
In this section we provide a convergence analysis for the \emph{robust-nonlinear-GTD2}
and \emph{robust-nonlinear-TDC} algorithms of equations~\eqref{eq:robust-nonlinear-w-update}
and \eqref{eq:robust-nonlinear-theta-update}. Note that this also proves convergence of the 
\emph{robust-GTD2} and \emph{robust-TDC} algorithms of equations~\eqref{eq:robust-w-update}
and \eqref{eq:robust-theta-update} as a special case. Given the set \(C\) let 
\(\mathcal{C}(C)\) denote the space of all \(C \to \R^d\) continuous functions.
Define as in \cite{bhatnagar2009convergent} the function \(\widehat{\Gamma}: \mathcal{C}(C)
\to \mathcal{C}\left(\R^d\right)\) 
\begin{align}
 \widehat{\Gamma}f(\theta) \coloneqq \lim_{\varepsilon \to 0} 
 \frac{\Gamma(\theta + \varepsilon f(\theta)) - \theta}{\varepsilon}. 
\end{align}
Since \(\Gamma(\theta) = \arg \min_{\theta' \in C} \norm{\theta - \theta'}\) and 
the boundary of \(C\) is smooth, it follows that \(\widehat{\Gamma}\) is well defined.
Let \(\mathring{C}\) denote the interior of \(C\) and \(\partial{C}\) denote its
boundary so that \(\mathring{C} = C \setminus \partial{C}\). If \(\theta \in 
\mathring{C}\), then \(\widehat{\Gamma}v(\theta) = v(\theta)\), otherwise
\(\widehat{\Gamma}(\theta)\) is the projection of \(v(\theta)\) to the tangent
space of \(\partial{C}\) at \(\theta\). Consider the following ODE as in 
\cite{bhatnagar2009convergent}:
\begin{align}\label{eq:ode}
 \dot{\theta} = \widehat{\Gamma}\left(-\frac{1}{2}\nabla\operatorname{MSRPBE}\right)(\theta), \quad 
 \theta(0) \in C
\end{align}
and let \(K\) be the set of all stable equilibria of equation~\eqref{eq:ode}. 
Note that the solution set \(\left\{\theta \middle| \E{\widetilde{d} \phi} = 0
\right\} \subset K\).
The following theorem shows that under the assumption of Lipschitz continuous gradients
and suitable assumptions on the step lengths \(\alpha_k\) and \(\beta_k\) 
and the uncertainty set \(\widehat{U}\),
the updates of equations~\eqref{eq:robust-nonlinear-w-update}
and \eqref{eq:robust-nonlinear-theta-update} converge.
\begin{theorem}[Convergence of \emph{robust-nonlinear-GTD2}]\label{thm:convergence}
 Consider the robust nonlinear updates of equations~\eqref{eq:robust-nonlinear-w-update}
and \eqref{eq:robust-nonlinear-theta-update} with step sizes that satisfy
\(\sum^\infty_{k=0} \alpha_k = \sum^\infty_{k=0} \beta_k = \infty\),
\(\sum^\infty_{k=0} \alpha_k^2, \sum^\infty_{k=0} \beta^2_k < \infty\),
and \(\frac{\alpha_k}{\beta_k} \to 0\) as \(k \to \infty\).
Assume that for every \(\theta\) we have \(\E{\phi_\theta \phi_\theta^\top}\)
is non-singular. Also assume that the matrix \(\Phi_\theta\) of gradients of the value
function defined as \(\Phi_\theta(i) \coloneqq \nabla v_\theta(i)\)
is Lipschitz continuous with constant \(L\), i.e.,
\(\norm{\Phi_\theta - \Phi_{\theta'}}_2 \le L \norm{\theta - \theta'}_2\).
Then with probability \(1\), \(\theta_k \to K\) as \(k \to \infty\).
\end{theorem}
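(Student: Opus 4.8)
The plan is to adapt the two-timescale stochastic approximation argument of \cite{bhatnagar2009convergent} to the robust updates, treating $w_k$ as the \emph{fast} iterate and $\theta_k$ as the \emph{slow} iterate, which is justified because $\alpha_k/\beta_k \to 0$. First I would rewrite both recursions in the canonical form ``iterate $=$ iterate $+$ stepsize $\times$ (mean drift $+$ martingale noise)''. Taking conditional expectations with respect to the history $\mathcal{F}_k$, the drift of the weight update in equation~\eqref{eq:robust-nonlinear-w-update} is $\E{\widetilde{d}\phi} - \E{\phi\phi^\top}w_k$, and the residual
\begin{align*}
 M^{(w)}_{k+1} \coloneqq \left(\widetilde{d}_k - \phi_k^\top w_k\right)\phi_k - \E{\left(\widetilde{d} - \phi^\top w_k\right)\phi \,\middle|\, \mathcal{F}_k}
\end{align*}
is a martingale difference whose conditional second moment is bounded once the iterates are bounded. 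The same decomposition applies to the $\theta$-update, where by Lemma~\ref{lem:robust-non-linear-grad} the drift equals $-\tfrac12\nabla\operatorname{MSRPBE}(\theta_k)$ up to the projection correction induced by $\widehat{\Gamma}$.

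For the fast timescale I would fix $\theta$ and observe that the associated ODE is $\dot{w} = \E{\widetilde{d}\phi} - \E{\phi\phi^\top}w$. Since $\E{\phi\phi^\top}$ is non-singular and positive semidefinite, it is positive definite, so this linear ODE has the unique globally asymptotically stable equilibrium $w(\theta) = \E{\phi\phi^\top}^{-1}\E{\widetilde{d}\phi}$. I then show that $w(\theta)$ is Lipschitz in $\theta$: both $\E{\phi\phi^\top}$ and $\E{\widetilde{d}\phi}$ depend Lipschitz-continuously on $\theta$ because $\Phi_\theta = \nabla v_\theta$ is Lipschitz by hypothesis, and because $\mu_{\widehat{U}}(\theta)$, the gradient of $\sigma_{\widehat{U}}$, is Lipschitz when $\widehat{U}$ is a strictly convex body (as assumed for the ball and ellipsoid cases). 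With $\alpha_k/\beta_k \to 0$, the two-timescale lemma employed in \cite{bhatnagar2009convergent} then yields $\norm{w_k - w(\theta_k)} \to 0$ almost surely.

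On the slow timescale, substituting $w(\theta_k)$ for $w_k$ and invoking Lemma~\ref{lem:robust-non-linear-grad}, the $\theta$-iteration becomes a projected stochastic approximation of the ODE in equation~\eqref{eq:ode}, $\dot{\theta} = \widehat{\Gamma}\bigl(-\tfrac12\nabla\operatorname{MSRPBE}\bigr)(\theta)$. I verify the hypotheses of the projected stochastic approximation theorem as used in \cite{bhatnagar2009convergent}: the projection $\Gamma$ onto the compact set $C$ keeps $\{\theta_k\}$ bounded, so that $\phi_k$, $\widetilde{d}_k$, $\mu_{\widehat{U}}(\theta_k)$ and the Hessian correction $h_k$ are all bounded; hence the martingale noise has bounded conditional variance, and the conditions $\sum\alpha_k=\infty$, $\sum\alpha_k^2<\infty$ suffice. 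The Kushner--Clark argument then gives $\theta_k \to K$ almost surely, where $K$ is the set of stable equilibria of equation~\eqref{eq:ode}.

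The hard part is establishing Lipschitz continuity (and even well-definedness) of the robust drift term $\mu_{\widehat{U}}(\theta)$. Unlike the non-robust case of \cite{bhatnagar2009convergent}, the support function $\sigma_{\widehat{U}}(v_\theta)$ need not be differentiable wherever the maximizer over $\widehat{U}$ fails to be unique; the argument therefore rests on the standing assumption that $\widehat{U}$ is a smooth strictly convex set, which makes $\arg\max_{y\in\widehat{U}} y^\top v_\theta$ single-valued and Lipschitz in $v_\theta$, and hence $\mu_{\widehat{U}}(\theta)$ Lipschitz in $\theta$ after composing with the Lipschitz map $\theta\mapsto\Phi_\theta$. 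Once this regularity is secured, every remaining step is a routine transcription of the two-timescale analysis in \cite{bhatnagar2009convergent}, and the linear-architecture claim for \emph{robust-GTD2} and \emph{robust-TDC} follows immediately as the special case $h_k \equiv 0$.
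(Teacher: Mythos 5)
Your proposal is correct and shares the paper's overall skeleton: both reduce the theorem to Theorem~2 of \cite{bhatnagar2009convergent} via two-timescale stochastic approximation (with \(w_k\) the fast iterate, \(\theta_k\) the slow projected iterate, and the Kushner--Clark argument on the ODE~\eqref{eq:ode}), and both identify the single genuinely new ingredient as Lipschitz continuity of the robust drift term \(\mu_{\widehat{U}}(\theta)\). Where you diverge is in how that Lipschitz property is obtained. The paper sets \(y^* \coloneqq \arg\max_{y\in\widehat{U}} y^\top v_\theta\), \(z^* \coloneqq \arg\max_{z\in\widehat{U}} z^\top v_{\theta'}\) and bounds
\begin{align*}
\norm{\mu_{\widehat{U}}(\theta)-\mu_{\widehat{U}}(\theta')}_2
= \norm{\Phi_\theta^\top y^* - \Phi_{\theta'}^\top z^*}_2
\le \norm{\left(\Phi_\theta-\Phi_{\theta'}\right)^\top y^*}_2
\le L \left(\max_{y\in\widehat{U}}\norm{y}_2\right)\norm{\theta-\theta'}_2,
\end{align*}
so it never has to control \(y^*-z^*\) and needs nothing of \(\widehat{U}\) beyond boundedness; the cost is that the first inequality, which replaces \(z^*\) by \(y^*\), is precisely where the motion of the maximizer is suppressed, and it is stated without justification. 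You instead confront that motion directly: you assume \(\widehat{U}\) is smooth and strictly convex so the argmax map is single-valued and Lipschitz in \(v_\theta\), then compose with the Lipschitz maps \(\theta\mapsto\Phi_\theta\) (by hypothesis) and \(\theta\mapsto v_\theta\) (on the compact set \(C\)). This buys rigor on the point the paper elides --- including the well-definedness of \(\mu_{\widehat{U}}\) where the support function is non-differentiable --- at the price of extra structure on \(\widehat{U}\), which is consistent with the ball/ellipsoid setting the paper adopts in this section; note, though, that for a genuinely Lipschitz (rather than merely continuous) argmax you need something like strong convexity of \(\widehat{U}\) together with \(v_\theta\) bounded away from zero, since even for the unit ball the map \(v\mapsto v/\norm{v}_2\) fails to be Lipschitz near the origin. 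Your closing remark that the linear-architecture corollary follows as the special case \(h_k\equiv 0\) matches the paper.
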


\begin{proof}
The argument is similar to the proof of Theorem~2 in \cite{bhatnagar2009convergent}. 
The only thing we need to verify is the Lipschitz continuity of the robust version 
\(\widetilde{g}(\theta_k, w_k)\) of
the function \(g(\theta_k, w_k)\) of \cite{bhatnagar2009convergent} defined as 
\begin{align}
 \widetilde{g}(\theta_k, w_k) \coloneqq \E{(\phi_k - \vartheta \mu_{\widehat{U}}(\theta)\phi_k^\top w_k - h_k
 \mid \theta_k, w_k},
\end{align}
where \(g(\theta_k, w_k)\) is defined as \(g(\theta_k, w_k) \coloneqq
\E{(\phi_k - \vartheta \phi'_k(\theta)\phi_k^\top w_k - h_k
 \mid \theta_k, w_k}\), where \(\phi'_k\) is the features of the state \(i'\) 
 the simulator transitions to from state \(i\). Thus we only need to verify
 Lipschitz continuity of \(\mu_{\widehat{U}}(\theta)\). 
 Let \(y^* \coloneqq \arg\max_{y \in \widehat{U}} y^\top v_\theta\)
 and let \(z^* \coloneqq \arg\max_{z \in \widehat{U}} z^\top v_\theta'\).
 \begin{align}
  \norm{\mu_{\widehat{U}}(\theta) - \mu_{\widehat{U}}(\theta')}_2
  &=  \norm{\Phi_\theta^\top y^* -
  \Phi_{\theta'}^\top z^*}_2\\
  &\le \norm{\Phi_\theta^\top y^* -
  \Phi_{\theta'}^\top y^*}_2\\
  &\le \norm{\Phi_\theta - \Phi_{\theta'}}_2 \norm{y^*}_2\\
  &\le \norm{\Phi_\theta - \Phi_{\theta'}}_2 \arg\max_{y \in \widehat{U}} \norm{y}_2\\
  &\le \left(L \arg\max_{y \in \widehat{U}} \norm{y}_2 \right)\norm{\theta - \theta'}_2.
 \end{align}
 Therefore the \(\mu_{\widehat{U}}(\theta)\) is Lipschitz continuous with 
 constant \(L \arg\max_{y \in \widehat{U}} \norm{y}_2\).
\end{proof}

\begin{corollary}
 Under the same conditions as in Theorem~\ref{thm:convergence}, the \emph{robust-GTD2},
 \emph{robust-TDC} and \emph{robust-nonlinear-TDC} algorithms satisfy with probability \(1\) 
 that \(\theta_k \to K\) as \(k \to \infty\).
\end{corollary}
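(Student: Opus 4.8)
The plan is to reduce all three algorithms to the single convergence result already established in Theorem~\ref{thm:convergence} for \emph{robust-nonlinear-GTD2}, so that no fresh stochastic-approximation machinery is required.

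First I would observe that the two linear algorithms are exact specializations of their nonlinear counterparts. When $v_\theta = \Phi\theta$ the manifold $\mathcal{M}$ is the linear subspace $S$, the gradient matrix $\Phi_\theta = \Phi$ is constant, and the Hessian $\nabla^2 v_\theta$ vanishes identically; hence the correction term $h_k = \left(\widetilde{d}_k - \phi_k^\top w_k\right)\nabla^2 v_{\theta_k}(i_k)w_k$ is zero, the updates of \eqref{eq:robust-nonlinear-w-update}--\eqref{eq:robust-nonlinear-theta-update} collapse to those of \eqref{eq:robust-w-update}--\eqref{eq:robust-theta-update}, and the projection $\Gamma$ can be taken to be the identity on a sufficiently large $C$. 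The Lipschitz hypothesis on $\Phi_\theta$ holds trivially (with constant $L = 0$), while the robust-specific quantity $\mu_{\widehat{U}}(\theta) = \Phi^\top \arg\max_{y \in \widehat{U}} y^\top \Phi\theta$ remains Lipschitz by the standing assumption that $\widehat{U}$ is smooth, exactly as in the bound at the end of the proof of Theorem~\ref{thm:convergence}. It therefore suffices to establish convergence of the two nonlinear algorithms; \emph{robust-nonlinear-GTD2} is precisely Theorem~\ref{thm:convergence}, and \emph{robust-nonlinear-TDC} is treated next.

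For \emph{robust-nonlinear-TDC} the essential point is that TDC and GTD2 differ only in their noise decomposition, not in their expected update. Writing out the two slow-timescale increments, their difference is the single term $\left(\widetilde{d}_k - \phi_k^\top w_k\right)\phi_k$, which has zero mean at the fast-timescale equilibrium $w_k = w(\theta_k)$ because $\E{\left(\widetilde{d} - \phi^\top w(\theta)\right)\phi} = \E{\widetilde{d}\phi} - \E{\phi\phi^\top}w(\theta) = 0$ by the definition of $w(\theta) = \E{\phi\phi^\top}^{-1}\E{\widetilde{d}\phi}$. Consequently both algorithms share the same mean vector field, namely $-\tfrac{1}{2}\nabla\operatorname{MSRPBE}(\theta)$ by Lemma~\ref{lem:robust-non-linear-grad}, and hence the same associated ODE~\eqref{eq:ode} with the same set $K$ of stable equilibria. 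I would then invoke the two-timescale stochastic-approximation argument of Borkar used in Theorem~2 of \cite{bhatnagar2009convergent}: with $\alpha_k/\beta_k \to 0$ the fast iterate satisfies $w_k \to w(\theta_k)$, and the slow iterate $\theta_k$ asymptotically tracks~\eqref{eq:ode} and converges to $K$ with probability $1$.

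The main obstacle is bookkeeping rather than conceptual: one must verify that the rearranged TDC increment still splits as a mean field plus a martingale difference with zero conditional mean and bounded second moment given the history $\mathcal{F}_k$, and that the extra $\mu_{\widehat{U}}(\theta)$ terms introduced by robustness preserve the Lipschitz continuity of the slow-timescale vector field $\widetilde{g}(\theta, w)$. The first check is immediate once $w_k$ is placed on the fast timescale, and the second reduces exactly to the Lipschitz bound on $\mu_{\widehat{U}}$ already proved in Theorem~\ref{thm:convergence} together with the assumed Lipschitz continuity of $\Phi_\theta$. With these in place every boundedness and continuity hypothesis of the two-timescale theorem is met, and the conclusion $\theta_k \to K$ with probability $1$ follows uniformly for \emph{robust-GTD2}, \emph{robust-TDC}, and \emph{robust-nonlinear-TDC}.
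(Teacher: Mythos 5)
Your proposal is correct and matches the paper's (implicit) argument: the paper states this corollary without a separate proof, relying on the observation preceding Theorem~\ref{thm:convergence} that the linear algorithms are special cases of the nonlinear ones and that the two-timescale argument of Theorem~2 in \cite{bhatnagar2009convergent} covers the TDC variant identically, with the only robust-specific check being the Lipschitz continuity of \(\mu_{\widehat{U}}\) already established there. Your explicit reductions --- vanishing Hessian and constant \(\Phi_\theta\) for the linear case, and the zero-mean difference \(\left(\widetilde{d}_k - \phi_k^\top w_k\right)\phi_k\) at the fast-timescale equilibrium showing GTD2 and TDC share the mean field \(-\tfrac{1}{2}\nabla\operatorname{MSRPBE}(\theta)\) --- are exactly the details the paper leaves to the reader.
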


\section{Experiments}\label{sec:experiments}
We implemented robust versions of \(\Q\)-learning, \(\Sarsa\), and
\(\TD(\lambda)\)-learning as described in Section~\ref{sec:exact} and evaluated
their performance against the nominal algorithms using the OpenAI gym
framework \cite{brockman2016openai}. The environments considered for
the exact dynamic programming algorithms are the text environments of
\textbf{FrozenLake-v0}, \textbf{FrozenLake8x8-v0}, \textbf{Taxi-v2},
\textbf{Roulette-v0}, \textbf{NChain-v0}, as well as the control tasks
of \textbf{CartPole-v0}, \textbf{CartPole-v1},
\textbf{InvertedPendulum-v1}, together with the continuous control
tasks of \textbf{MuJoCo} \cite{todorov2012mujoco}. To test the performance of
the robust algorithms, we perturb the models slightly by choosing with
a small probability \(p\) a random state after every action. The size
of the confidence region \(U^a_i\) for the robust model is chosen by a
\(10\)-fold cross validation using line search. After the \(\Q\)-table
or the value functions are learned for the robust and the nominal
algorithms, we evaluate their performance on the true environment.  To
compare the true algorithms we compare both the \emph{cumulative
  reward} as well as the \emph{tail distribution function} (complementary
cumulative distribution function) as in \cite{tamar2014scaling} which
for every \(a\) plots the probability that the algorithm earned a
reward of at least \(a\).

\begin{figure}
\centering
\begin{minipage}{.5\textwidth}
  \centering
  \includegraphics[scale=0.38]{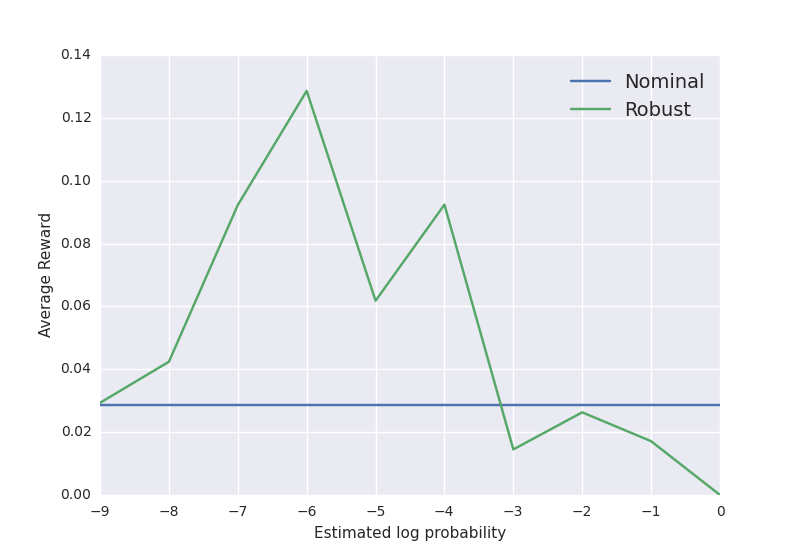}
\end{minipage}%
\begin{minipage}{.5\textwidth}
  \centering
  \includegraphics[scale=0.38]{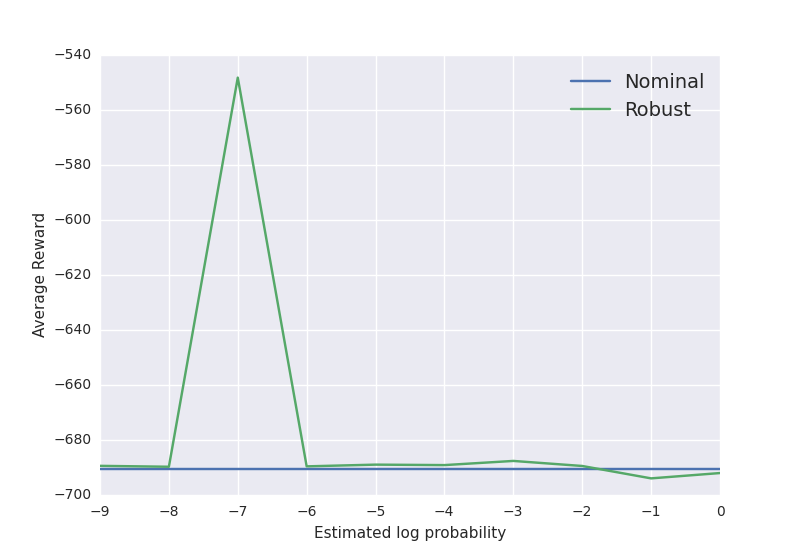}
  \end{minipage}
  \caption{Performance of robust models with different sizes of 
  confidence regions on two environments. Left: \textbf{FrozenLake-v0}
Right: \textbf{Acrobot-v1}}
  \label{fig:cv-frozen-v0}
\end{figure}
Note that there is a tradeoff in the 
performance of the robust algorithms versus the nominal algorithms
in terms of the value \(p\). As the value of \(p\) increases,
we expect the robust algorithm to gain an edge over the 
nominal ones as long as \(\widehat{U}\) is still within the 
simplex \(\Delta_n\). Once we exceed the simplex \(\Delta_n\) however,
the robust algorithms decays in performance. This is due to the presence of the 
\(\beta\) term in the convergence results, which
is defined as 
\begin{align}
 \beta \coloneqq \max_{i \in \mathcal{X}, a \in \mathcal{A}} 
 \max_{y \in \widehat{U^a_i}} \min_{x \in U^a_i} \norm{y - x}_1,
\end{align}
and it grows larger proportional to how much the proxy confidence
region \(\widehat{U}\) is outside \(\Delta_n\).  Note that while
\(\beta\) is \(0\), the robust algorithms converge to the exact
\(\Q\)-factor and value function, while the nominal algorithm does
not.  However, since large values of \(\beta\) also lead to suboptimal
convergence, we also expect poor performance for too large confidence
regions, i.e., large values of \(p\).  Figure~\ref{fig:cv-frozen-v0}
depicts how the size of the confidence region affects the
performance of the robust models; note that the. Note that
the average score appears somewhat erratic as a function 
of the size of the uncertainty set, 
however this is due to our small sample size used in the line search.
 See Figures~\ref{fig:frozen8x8-1e-2}, \ref{fig:frozen8x8-1e-1},
\ref{fig:frozen-1e-2}, \ref{fig:cartpole-v0-1e-3},
\ref{fig:cartpole-v0-1e-2}, \ref{fig:cartpole-v0-3e-1},
\ref{fig:cartpolev1-1e-1}, \ref{fig:cartpole-v1-3e-1},
\ref{fig:taxiv2-1e-1}, and \ref{fig:invertedpendulum1-0.1}
 for a comparison of the best robust model and the nominal model.

\begin{figure}
\centering
\begin{minipage}{.32\textwidth}
  \centering
  \includegraphics[scale=0.23]{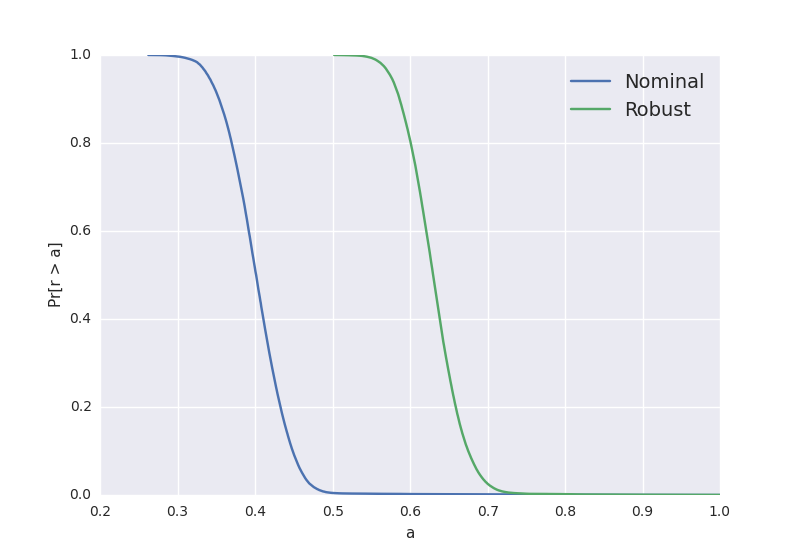}
\end{minipage}%
\begin{minipage}{.32\textwidth}
  \centering
  \includegraphics[scale=0.23]{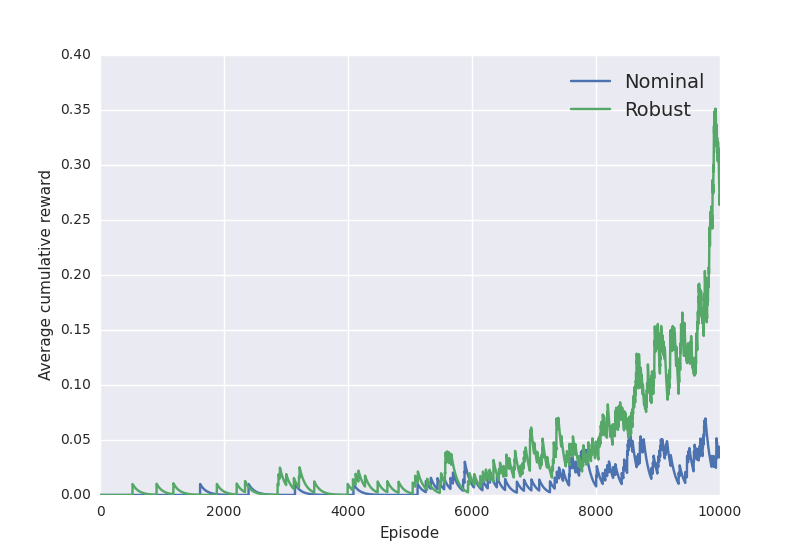}
\end{minipage}%
\begin{minipage}{.32\textwidth}
  \centering
  \includegraphics[scale=0.23]{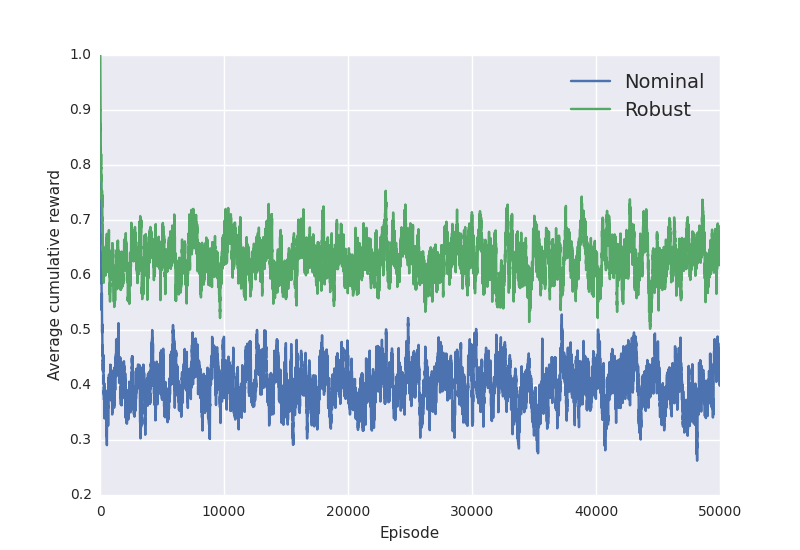}
  \end{minipage}
  \caption{Tail distribution and cumulative rewards during transient and
  stationary phase of robust vs nominal \(\Q\)-learning
  on \textbf{FrozenLake8x8-v0} with \(p = 0.01\).}
  \label{fig:frozen8x8-1e-2}
\end{figure}

\begin{figure}
\centering
\begin{minipage}{.32\textwidth}
  \centering
  \includegraphics[scale=0.23]{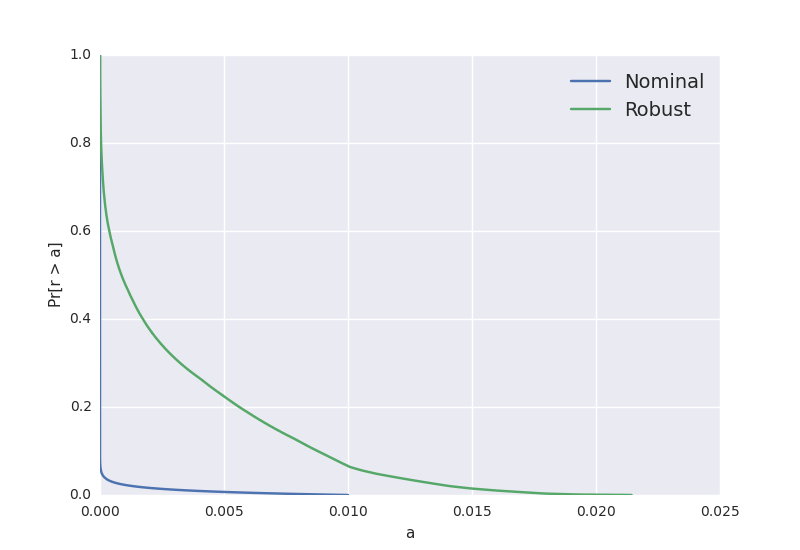}
\end{minipage}%
\begin{minipage}{.32\textwidth}
  \centering
  \includegraphics[scale=0.23]{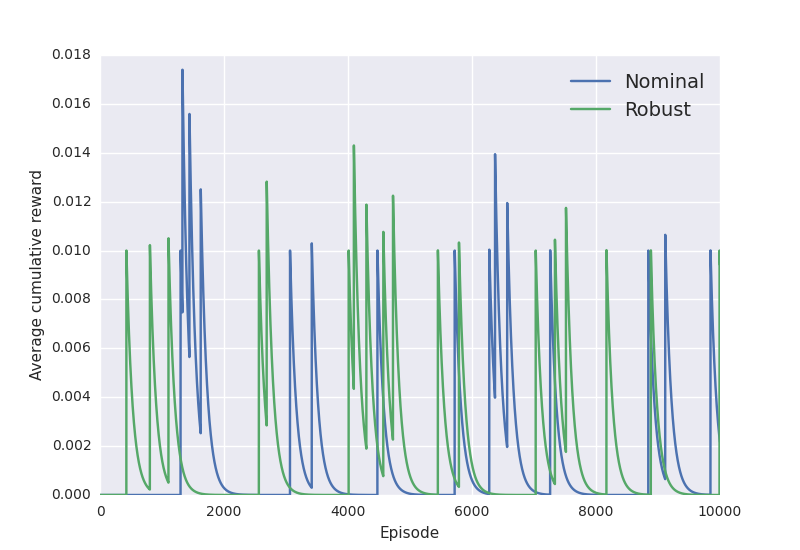}
\end{minipage}%
\begin{minipage}{.32\textwidth}
  \centering
  \includegraphics[scale=0.23]{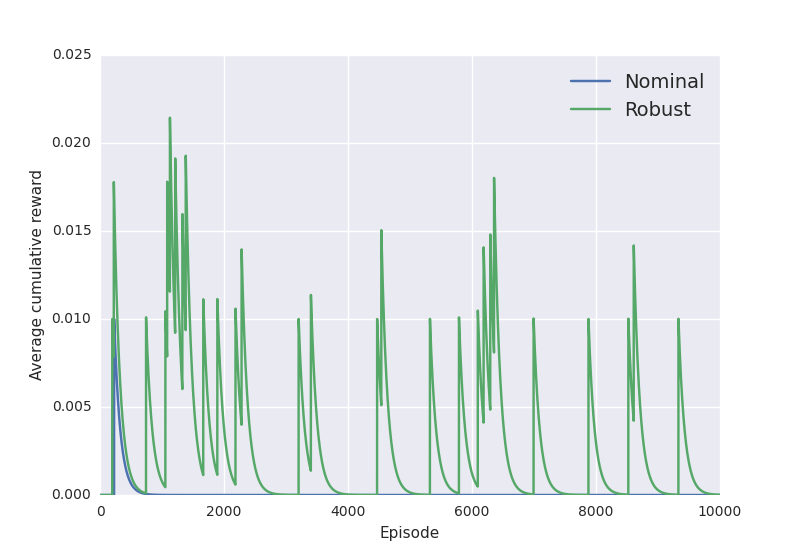}
  \end{minipage}
  \caption{Tail distribution and cumulative rewards during transient and
  stationary phase of robust vs nominal \(\Q\)-learning
  on \textbf{FrozenLake8x8-v0} with \(p = 0.1\).}
  \label{fig:frozen8x8-1e-1}
\end{figure}

\begin{figure}
\centering
\begin{minipage}{.32\textwidth}
  \centering
  \includegraphics[scale=0.23]{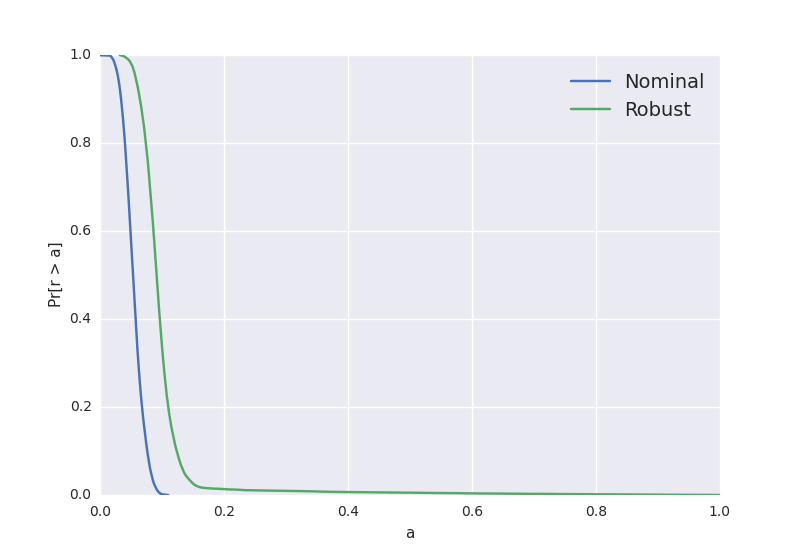}
\end{minipage}%
\begin{minipage}{.32\textwidth}
  \centering
  \includegraphics[scale=0.23]{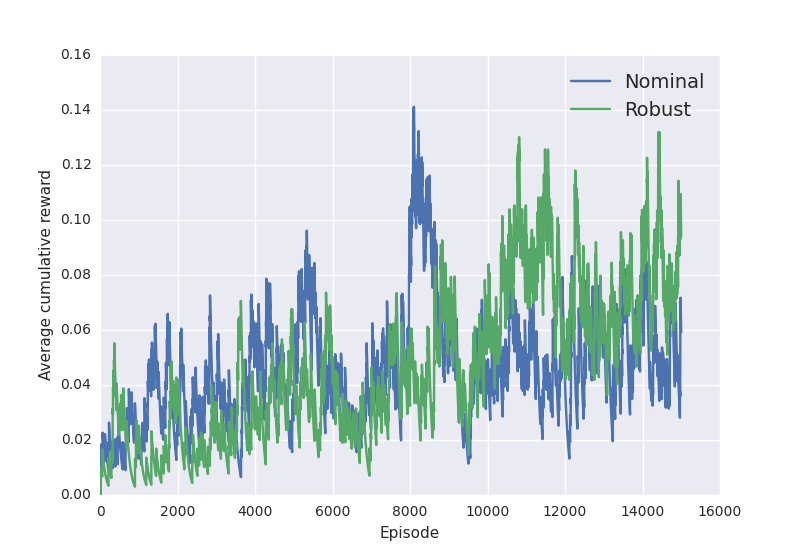}
\end{minipage}%
\begin{minipage}{.32\textwidth}
  \centering
  \includegraphics[scale=0.23]{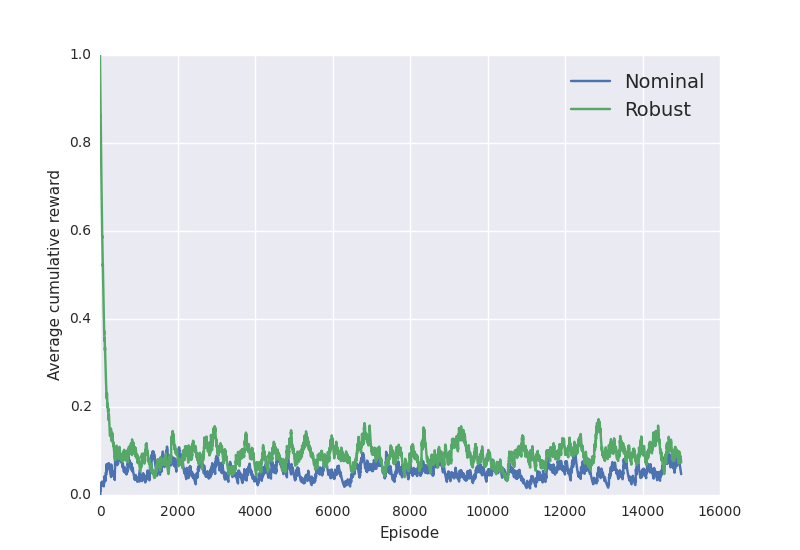}
  \end{minipage}
  \caption{Tail distribution and cumulative rewards during transient and
  stationary phase of robust vs nominal \(\Q\)-learning
  on \textbf{FrozenLake-v0} with \(p = 0.1\).}
  \label{fig:frozen-1e-2}
\end{figure}

\begin{figure}
\centering
\begin{minipage}{.32\textwidth}
  \centering
  \includegraphics[scale=0.23]{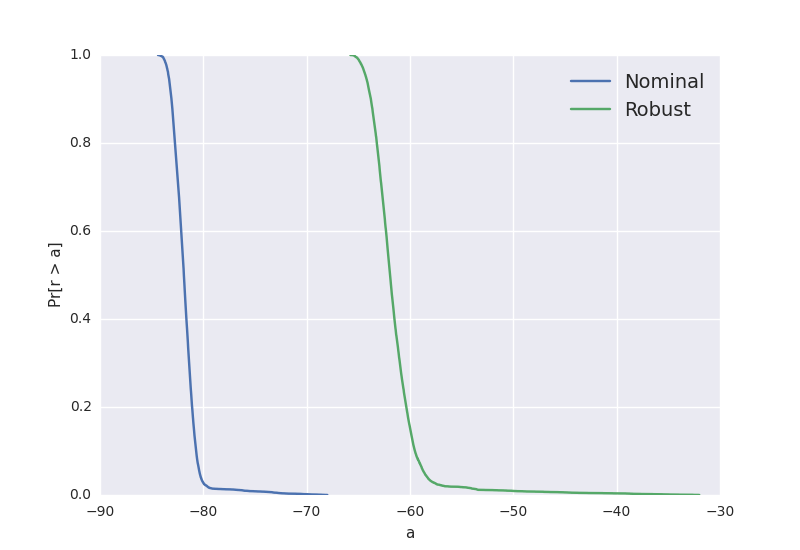}
\end{minipage}%
\begin{minipage}{.32\textwidth}
  \centering
  \includegraphics[scale=0.23]{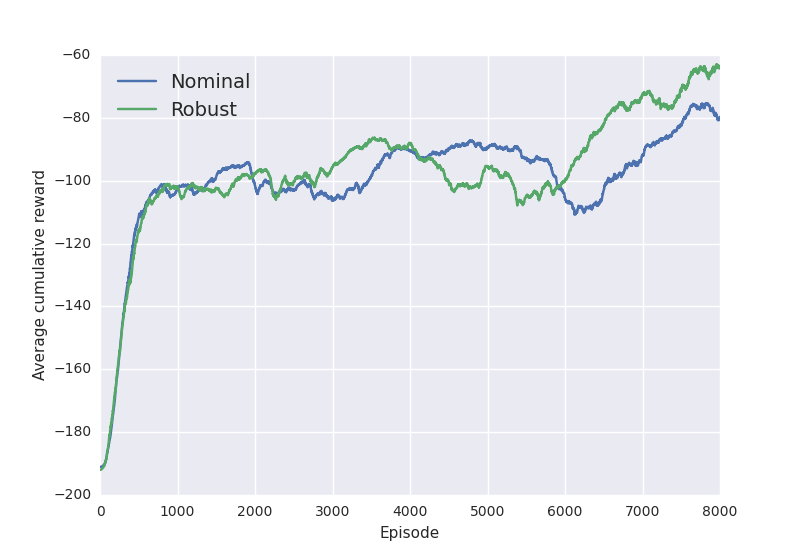}
\end{minipage}%
\begin{minipage}{.32\textwidth}
  \centering
  \includegraphics[scale=0.23]{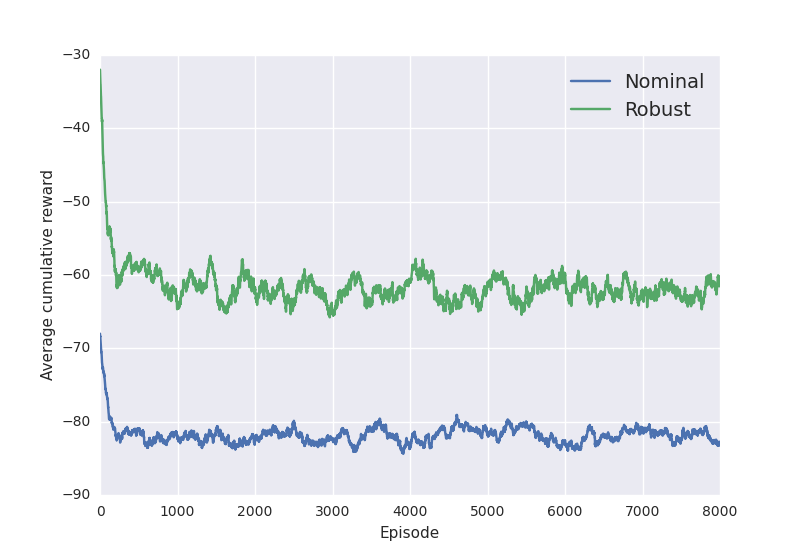}
  \end{minipage}
  \caption{Tail distribution and cumulative rewards during transient and
  stationary phase of robust vs nominal \(\Q\)-learning
  on \textbf{CartPole-v0} with \(p = 0.001\).}
  \label{fig:cartpole-v0-1e-3}
\end{figure}

\begin{figure}
\centering
\begin{minipage}{.32\textwidth}
  \centering
  \includegraphics[scale=0.23]{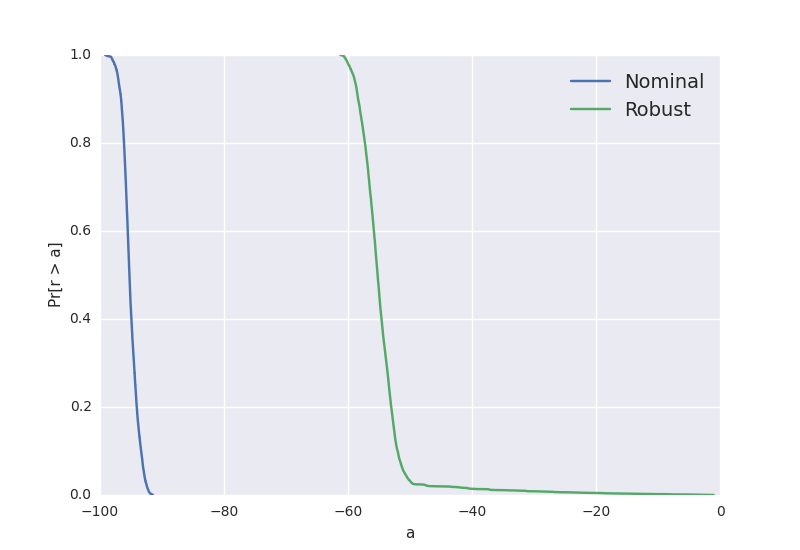}
\end{minipage}%
\begin{minipage}{.32\textwidth}
  \centering
  \includegraphics[scale=0.23]{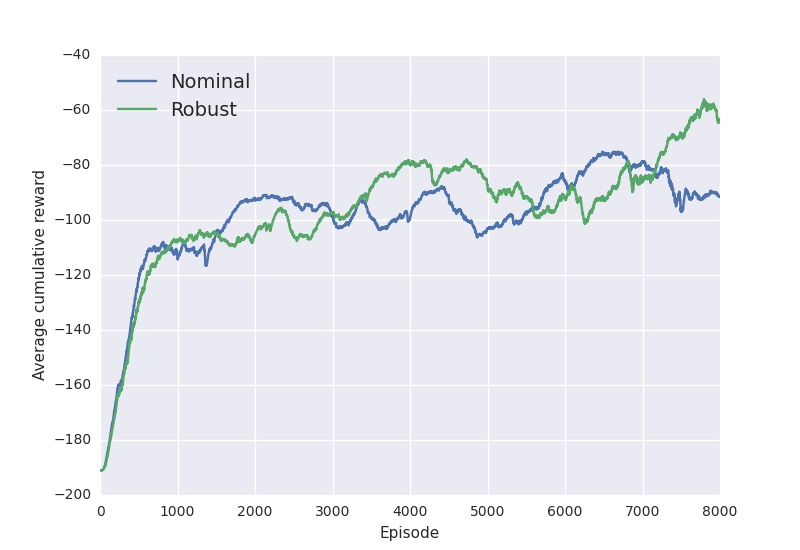}
\end{minipage}%
\begin{minipage}{.32\textwidth}
  \centering
  \includegraphics[scale=0.23]{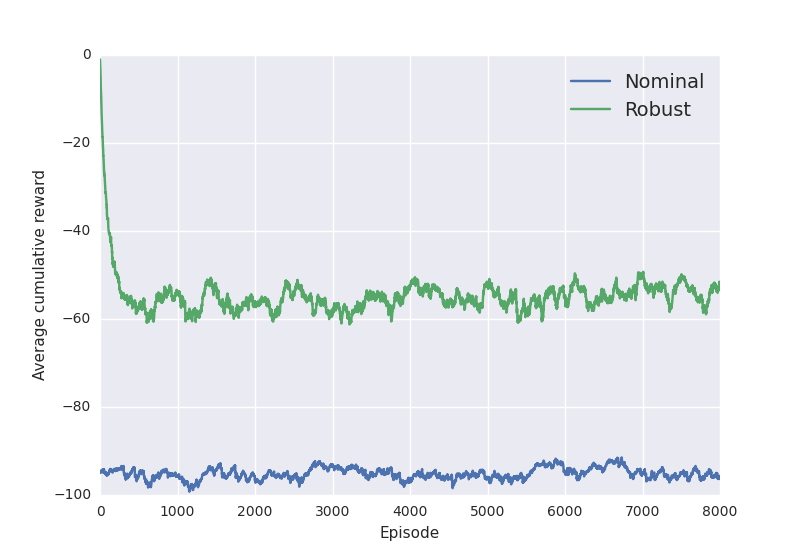}
  \end{minipage}
  \caption{Tail distribution and cumulative rewards during transient and
  stationary phase of robust vs nominal \(\Q\)-learning
  on \textbf{CartPole-v0} with \(p = 0.01\).}
  \label{fig:cartpole-v0-1e-2}
\end{figure}

\begin{figure}
\centering
\begin{minipage}{.32\textwidth}
  \centering
  \includegraphics[scale=0.23]{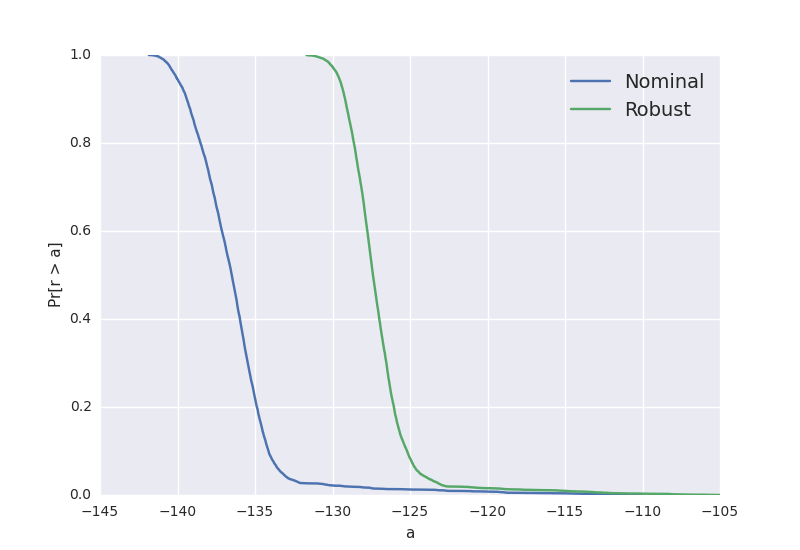}
\end{minipage}%
\begin{minipage}{.32\textwidth}
  \centering
  \includegraphics[scale=0.23]{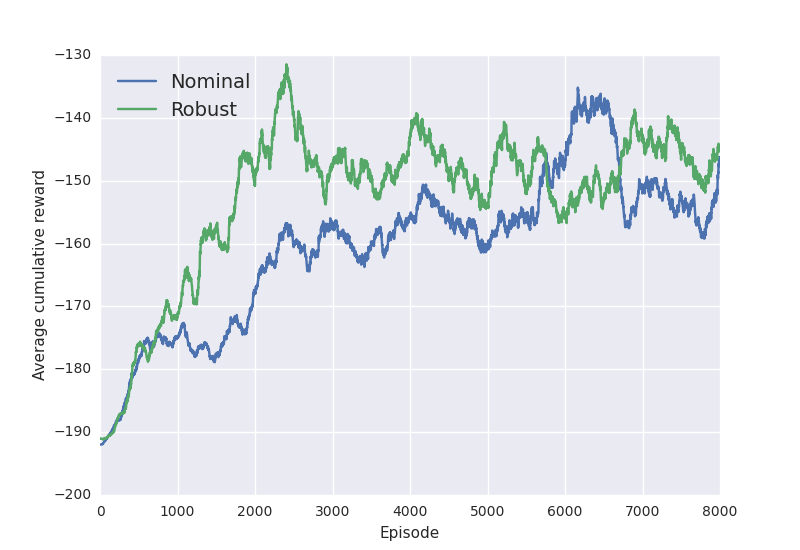}
\end{minipage}%
\begin{minipage}{.32\textwidth}
  \centering
  \includegraphics[scale=0.23]{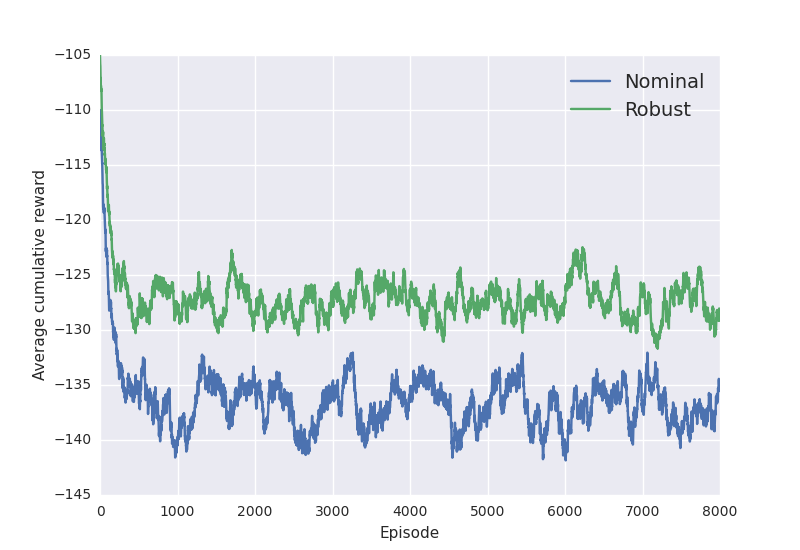}
  \end{minipage}
  \caption{Tail distribution and cumulative rewards during transient and
  stationary phase of robust vs nominal \(\Q\)-learning
  on \textbf{CartPole-v0} with \(p = 0.3\).}
  \label{fig:cartpole-v0-3e-1}
\end{figure}

\begin{figure}
\centering
\begin{minipage}{.32\textwidth}
  \centering
  \includegraphics[scale=0.23]{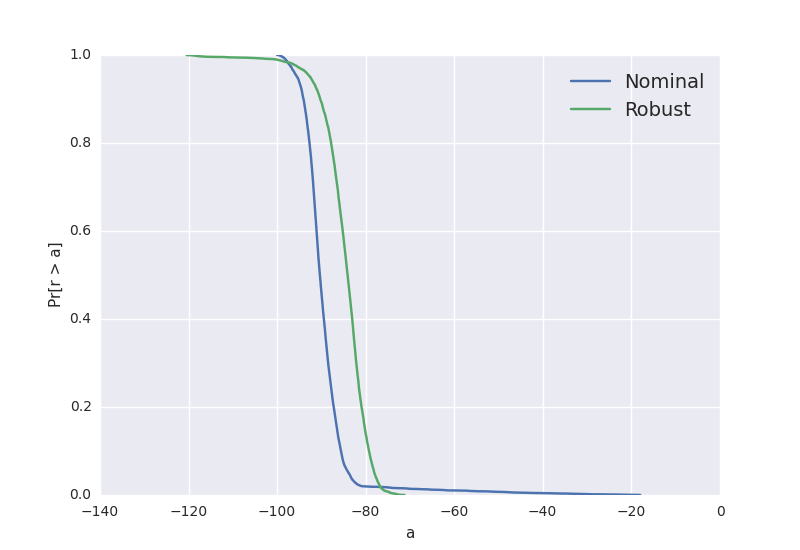}
\end{minipage}%
\begin{minipage}{.32\textwidth}
  \centering
  \includegraphics[scale=0.23]{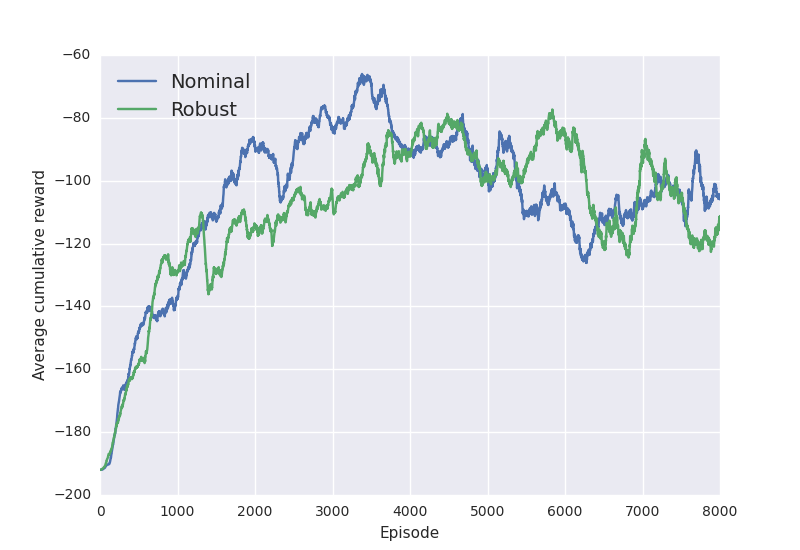}
\end{minipage}%
\begin{minipage}{.32\textwidth}
  \centering
  \includegraphics[scale=0.23]{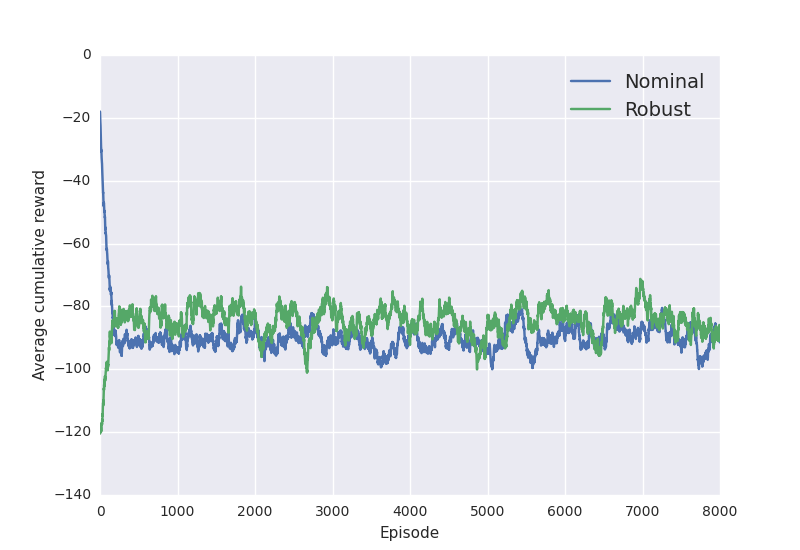}
  \end{minipage}
  \caption{Tail distribution and cumulative rewards during transient and
  stationary phase of robust vs nominal \(\Q\)-learning
  on \textbf{CartPole-v1} with \(p = 0.1\).}
  \label{fig:cartpolev1-1e-1}
\end{figure}

\begin{figure}
\centering
\begin{minipage}{.32\textwidth}
  \centering
  \includegraphics[scale=0.23]{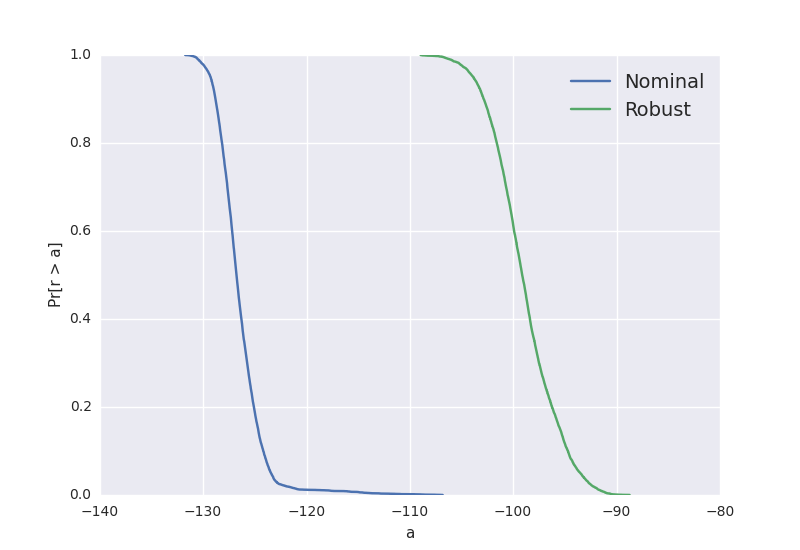}
\end{minipage}%
\begin{minipage}{.32\textwidth}
  \centering
  \includegraphics[scale=0.23]{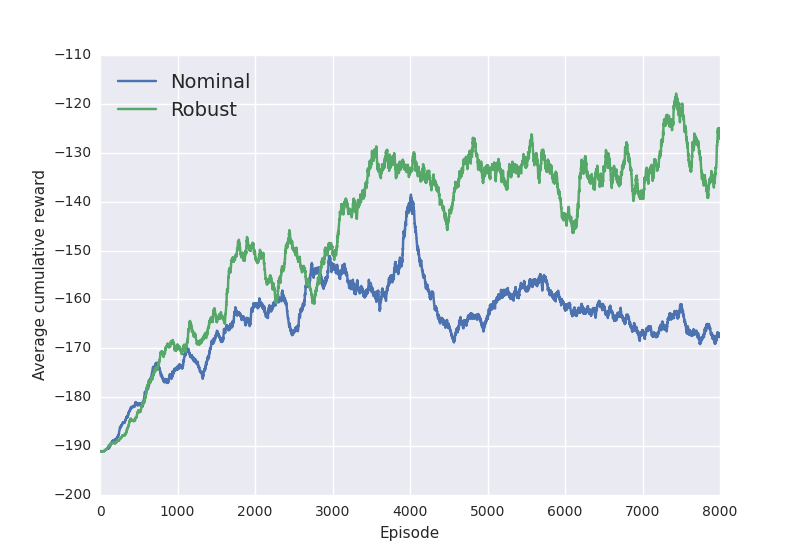}
\end{minipage}%
\begin{minipage}{.32\textwidth}
  \centering
  \includegraphics[scale=0.23]{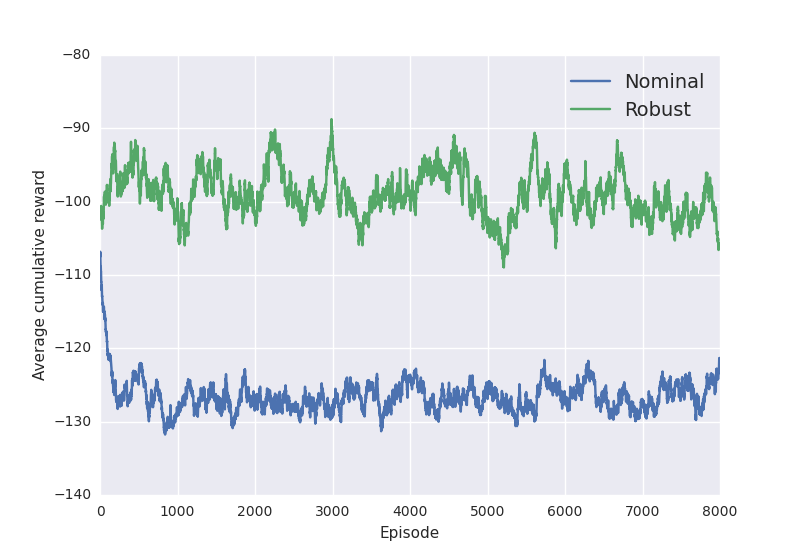}
  \end{minipage}
  \caption{Tail distribution and cumulative rewards during transient and
  stationary phase of robust vs nominal \(\Q\)-learning
  on \textbf{CartPole-v1} with \(p = 0.3\).}
  \label{fig:cartpole-v1-3e-1}
\end{figure}

\begin{figure}
\centering
\begin{minipage}{.32\textwidth}
  \centering
  \includegraphics[scale=0.23]{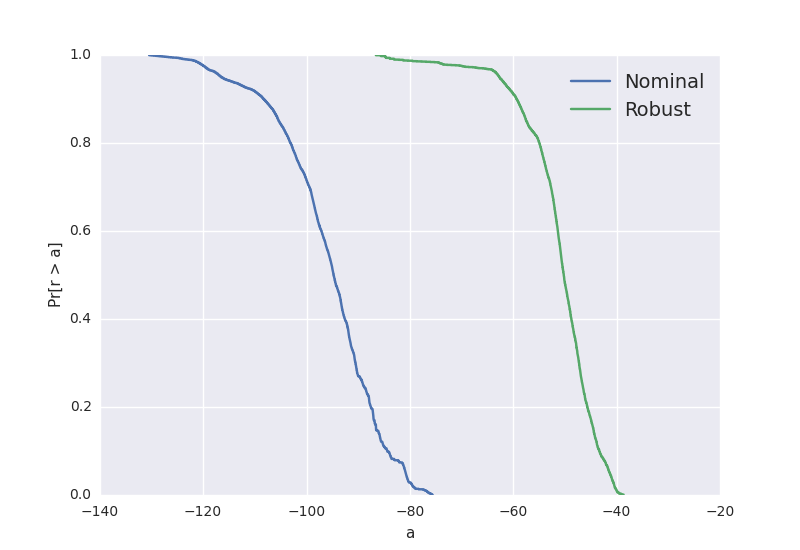}
\end{minipage}%
\begin{minipage}{.32\textwidth}
  \centering
  \includegraphics[scale=0.23]{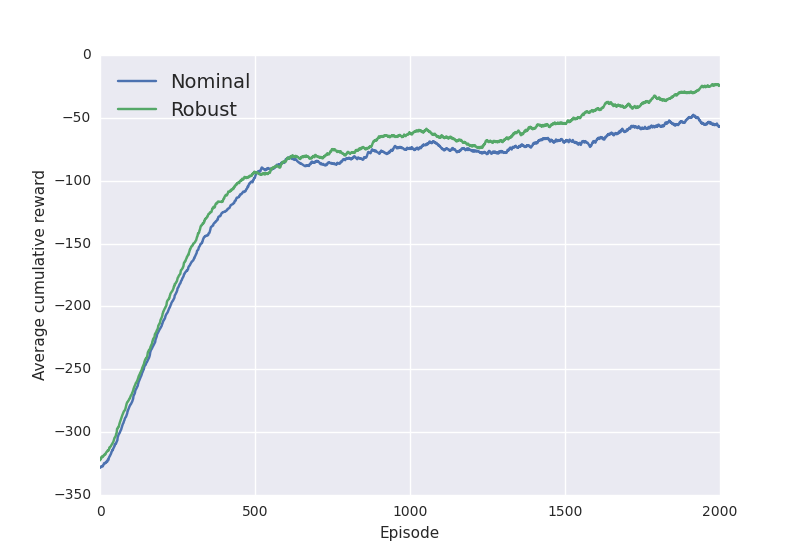}
\end{minipage}%
\begin{minipage}{.32\textwidth}
  \centering
  \includegraphics[scale=0.23]{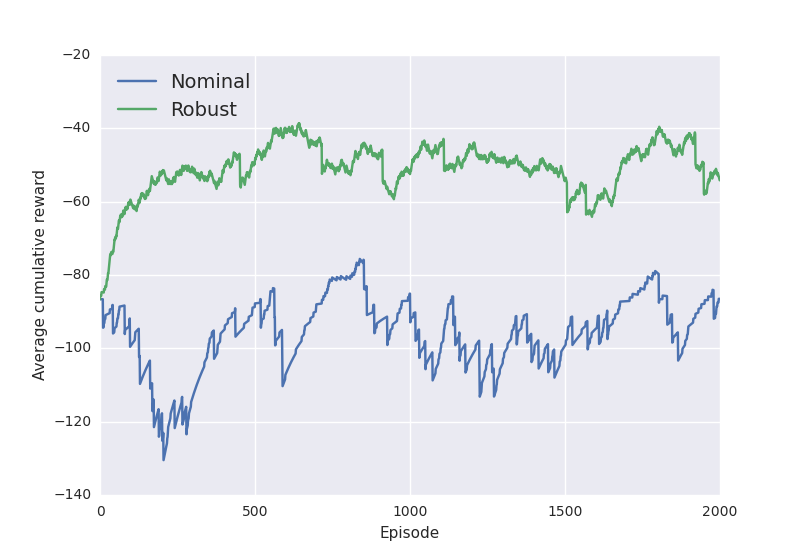}
  \end{minipage}
  \caption{Tail distribution and cumulative rewards during transient and
  stationary phase of robust vs nominal \(\Q\)-learning
  on \textbf{Taxi-v2} with \(p = 0.1\).}
  \label{fig:taxiv2-1e-1}
\end{figure}

\begin{figure}
\centering
\begin{minipage}{.32\textwidth}
  \centering
  \includegraphics[scale=0.23]{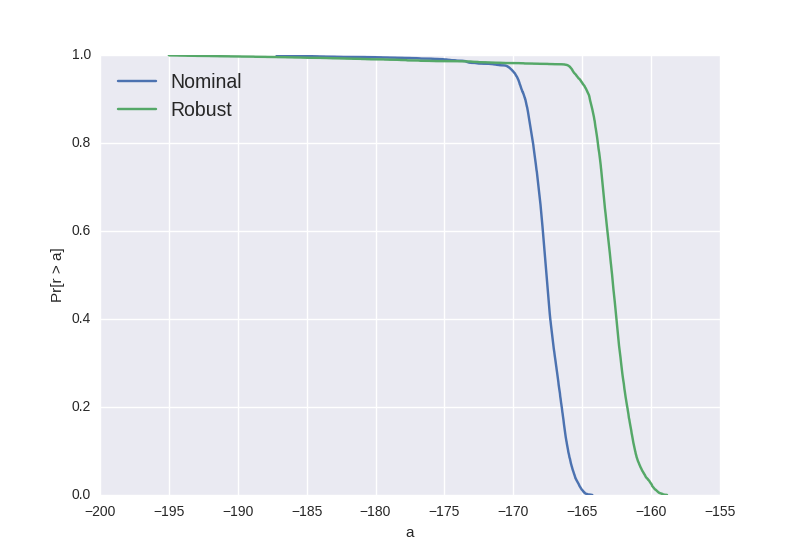}
\end{minipage}%
\begin{minipage}{.32\textwidth}
  \centering
  \includegraphics[scale=0.23]{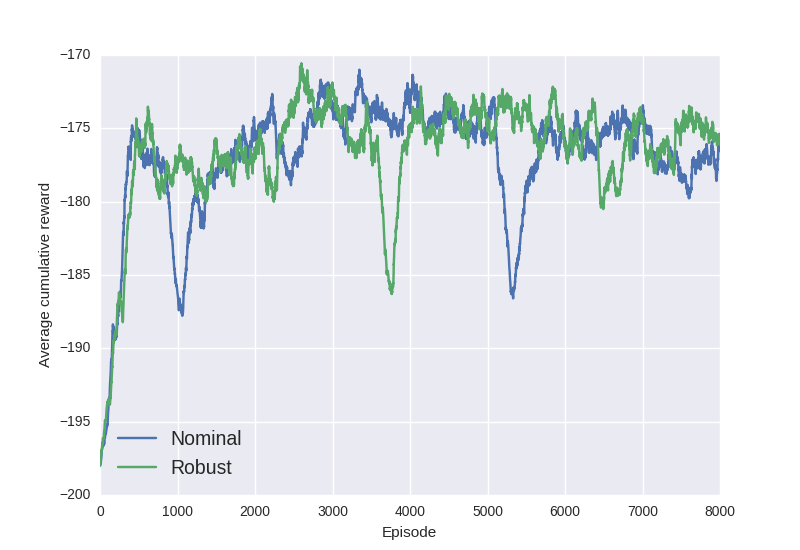}
\end{minipage}%
\begin{minipage}{.32\textwidth}
  \centering
  \includegraphics[scale=0.23]{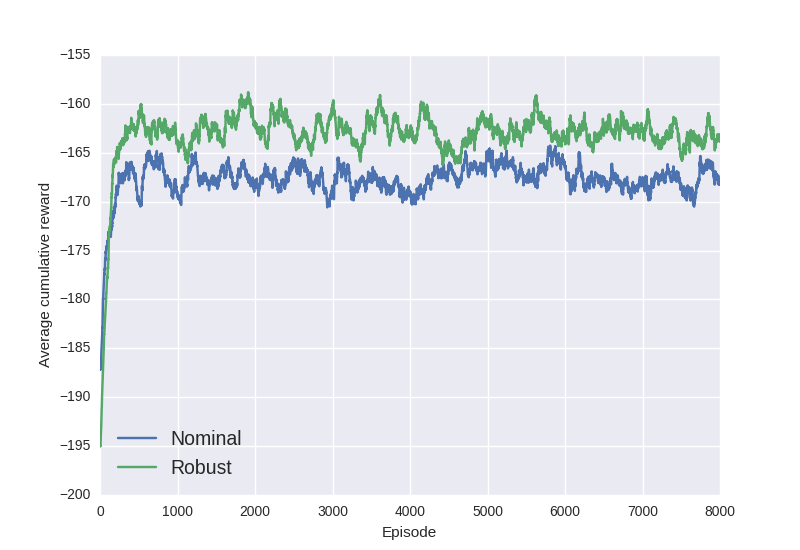}
  \end{minipage}
  \caption{Tail distribution and cumulative rewards during transient and
  stationary phase of robust vs nominal \(\Q\)-learning
  on \textbf{InvertedPendulum-v1} with \(p = 0.1\).}
  \label{fig:invertedpendulum1-0.1}
\end{figure}
\bibliographystyle{plain}

\section{Acknowledgments}
The authors would like to thank Guy Tennenholtz and 
anonymous reviewers for helping improve the 
presentation of the paper.
\bibliography{literature}
\end{document}